
\documentclass{article}
\usepackage{xurl}
\usepackage{microtype}
\usepackage{graphicx}
\usepackage{booktabs} 
\usepackage{hyperref}
\usepackage{url}
\usepackage{wrapfig,lipsum}
\usepackage{pifont}
\usepackage{hyperref}


\usepackage{amsmath,amsfonts,bm}









\def\eqref#1{equation~\ref{#1}}









\def\1{\bm{1}}










\DeclareMathAlphabet{\mathsfit}{\encodingdefault}{\sfdefault}{m}{sl}
\SetMathAlphabet{\mathsfit}{bold}{\encodingdefault}{\sfdefault}{bx}{n}











\newcommand{\E}{\mathbb{E}}



\DeclareMathOperator*{\argmin}{arg\,min}

\usepackage{amsfonts,amsmath,amssymb,amsthm,bm, bbm}
\usepackage{mathrsfs}
\usepackage{subcaption}   
\usepackage{appendix}
\usepackage{authblk}








\def\eqref#1{Equation~(\ref{#1})}









\def\1{\bm{1}}









%
%
\RequirePackage{amsmath}
\RequirePackage{amssymb}
\RequirePackage{amsthm}
\RequirePackage{bm} 
\RequirePackage{url}
\usepackage{multirow}
\usepackage{natbib}
\usepackage{graphicx}
\usepackage{makecell}
\usepackage{booktabs}
\usepackage{array}
\usepackage{url}
\usepackage{dsfont}



\usepackage{enumerate}
\usepackage[OT1]{fontenc}
\usepackage{natbib}
\usepackage{mathrsfs}
\usepackage{hyperref}

\usepackage{amsfonts,amsmath,amssymb,amsthm,bm}


\def\##1\#{\begin{align}#1\end{align}}
\def\$#1\${\begin{align*}#1\end{align*}}

\let\tilde\widetilde



\newcommand{\cD}{\mathcal{D}}

\newcommand{\cN}{\mathcal{N}}


\usepackage{xcolor}

\definecolor{red1}{HTML}{f47983}
\definecolor{blue1}{HTML}{3eede7}
\definecolor{yellow1}{HTML}{f5dd6f}







\newtheorem{condition}{Condition}

\newcommand{\tx}{\tilde{x}}
\newcommand{\tz}{\tilde{z}}
\newcommand{\ty}{\tilde{y}}
\newcommand{\ttheta}{\tilde{\theta}}
\newcommand{\TX}{\tilde{X}}
\newcommand{\TY}{\tilde{Y}}
\newcommand{\tep}{\tilde{\epsilon}}
\newcommand{\tsig}{\tilde{\sigma}}

\newcommand{\tvar}{\tilde{\varSigma}}

\newcommand{\tgam}{\gamma}



\usepackage[accepted]{icml2025}

\usepackage{amsmath}
\usepackage{amssymb}
\usepackage{mathtools}
\usepackage{amsthm}

\usepackage[capitalize,noabbrev]{cleveref}

\theoremstyle{plain}
\newtheorem{theorem}{Theorem}[section]

\newtheorem{lemma}[theorem]{Lemma}

\theoremstyle{definition}

\theoremstyle{remark}

\newcommand{\cmark}{\ding{51}}%
\newcommand{\xmark}{\ding{55}}%

\usepackage[textsize=tiny]{todonotes}

\icmltitlerunning{Understanding Overadaptation in Supervised Fine-Tuning: The Role of Ensemble Methods}

\begin{document}

\twocolumn[
\icmltitle{Understanding Overadaptation in Supervised Fine-Tuning:\\ The Role of Ensemble Methods}



\icmlsetsymbol{equal}{*}

\begin{icmlauthorlist}
\icmlauthor{Yifan Hao}{equal,y}
\icmlauthor{Xingyuan Pan}{equal,y}
\icmlauthor{Hanning Zhang}{equal,y}
\icmlauthor{Chenlu Ye}{y}
\icmlauthor{Rui Pan}{y}
\icmlauthor{Tong Zhang}{y}

\end{icmlauthorlist}

\icmlaffiliation{y}{University of Illinois Urbana-Champaign, Illinois}

\icmlcorrespondingauthor{Tong Zhang}{tongzhang@tongzhang-ml.org}

\icmlkeywords{Machine Learning, ICML}

\vskip 0.3in
]



\printAffiliationsAndNotice{\icmlEqualContribution} 

\begin{abstract}

Supervised fine-tuning (SFT) on domain-specific data is the dominant approach for adapting foundation models to specialized tasks. However, it has been observed that SFT models tend to forget knowledge acquired during pretraining. In vision models, ensembling a pretrained model with its fine-tuned counterpart has been shown to mitigate this issue \citep{wortsman2022robust}. In this work, we demonstrate that the same holds for language models, and, more strikingly, we observe an \emph{overadaptation} phenomenon: the ensemble model not only retains general knowledge from the foundation model but also outperforms the fine-tuned model even on the fine-tuning domain itself.
Despite the empirical success of ensembling, a theoretical understanding of its benefits remains underexplored. We develop a formal theoretical analysis of the overadaptation phenomenon.
 Ensembling mitigates this by balancing two primary sources of error: bias, caused by insufficient fine-tuning, and variance, introduced by overfitting to fine-tuning data. While regularization techniques aim to address this trade-off, we show that ensembling provides a more effective solution. We analyze this phenomenon in over-parameterized linear settings and demonstrate that interpolating between pretrained and fine-tuned weights significantly improves performance. These findings offer theoretical justification for the observed advantages of model ensembling, supported by empirical experiments consistent with our analysis.

\end{abstract}

\section{Introduction}



With the remarkable success of large language models (LLMs) such as GPT-4~\citep{achiam2023gpt}, Gemini~\citep{team2023gemini}, and Claude~\citep{anthropic2023introducing}, the pretrain-finetune paradigm has gained significant attention for its outstanding performance. Supervised fine-tuning (SFT) is a widely adopted approach for adapting foundation models to specific downstream tasks. However, a well-known challenge with SFT models is the tendency to forget information acquired during pre-training \citep{mccloskey1989catastrophic, goodfellow2013empirical}.
Model ensembling, also known as model averaging, has emerged as one of the most effective strategies to address this issue. Its benefits have been empirically demonstrated in vision models~\citep{wortsman2022robust} and in reinforcement learning from human feedback (RLHF)~\citep{lin2023speciality}. By simply interpolating the weights of pre-trained and fine-tuned models, ensembling has shown competitive performance in mitigating forgetting compared to other approaches.
In this work, we observe that the same advantage extends to supervised fine-tuning of LLMs. Moreover, beyond its effectiveness in mitigating forgetting on upstream tasks, we also observe a surprising phenomenon of \emph{overadaptation}, which reveals that model ensembling can outperform the fine-tuned model even on downstream tasks, where the fine-tuned model is expected to excel, which also aligns with previous results in \citet{wortsman2022robust, lin2023speciality}.

However, despite the impressive empirical effectiveness of ensemble methods, the corresponding theoretical insights are still limited, especially in the context of modern over-parameterized model. Most theoretical studies on ensembling have focused on traditional under-parameterized settings~\citep{brown2005managing, lin2023spurious, lin2023speciality}, which do not align with the current use of over-parameterized large neural networks. While some works~\citep{allen2020towards, hao2024benefits} have demonstrated the benefits of ensembling independently trained models, these only address improvements in out-of-distribution (OOD) robustness. To the best of our knowledge, no existing work has addressed the central question:

\emph{Why does ensembling achieve such remarkable efficiency, enhancing both generalization on downstream tasks and mitigating forgetting on upstream tasks?}

In this work, we address this question by investigating the role of ensembling in addressing the \emph{overadaptation} phenomenon. During supervised fine-tuning, specialized fine-tuned models overly focus on downstream tasks, leading to a loss of valuable information retained in the pre-trained model. The effectiveness of ensembling can be attributed to its ability to mitigate overadaptation. By simply averaging the weights of the pre-trained and fine-tuned models, the ensemble recovers the information lost during fine-tuning while preserving the knowledge gained from the fine-tuning process.

Specifically, we start with presenting empirical evidence in Section~\ref{seec:empirical} that highlights the harmful effects of overadaptation and demonstrates the efficiency benefits of ensembling in both improving fine-tuning performance and mitigating forgetting. Building on these observations, we develop a formal theoretical framework in Section~\ref{sec:note} and Section~\ref{sec:re} to analyze the effectiveness of model ensembling, focusing on its impact on fine-tuning tasks and pre-training task retention.
We attribute the improved performance of ensembling to its ability to better balance the ``bias-variance'' trade-off in test error. To simplify our explanation, we model the pre-training and fine-tuning processes using an over-parameterized linear setup. Within the context of canonical linear regression, we represent the pre-trained model as the ``ridgeless'' estimator on Task 1. For fine-tuning on Task 2, total overfitting (overadaptation) is characterized by ``ridgeless'' regression, while non-overfitting approaches, such as early stopping, are captured through ridge regression \citep{lin2017optimal, lu2022sobolev}. Our main theoretical findings can be summarized as follows.

\subsection{High-level theoretical insights}
 On the theoretical side, based on an over-parameterized regression setup, after pre-training on Task 1, we fine-tune the model on a specific Task 2. The results are stated on two aspects as follows.

Focusing on the performance on Task 2, we prove that
\begin{enumerate}
\item (Poor Performance of pre-trained Model): The pre-trained model exhibits a high ``bias'' term in test error due to its inability to capture task-specific information for Task 2; 
\item (Limitations of fine-tuning without regularizer): Fine-tuning without any regularization, i.e., using a ``ridgeless'' estimator, results in a high ``variance'' term in test error due to overfitting (overadaptation) on noisy data, leading to poor performance; 
\item (Impact of regularization in fine-tuning): Applying ridge regression during fine-tuning helps mitigate overfitting by achieving a better balance in the ``bias-variance'' trade-off, thereby reducing test error; 
\item (Ensemble for improved trade-off management): Combining the pre-trained model with either the ridge or ``ridgeless'' fine-tuned model enables a more effective balance of the ``bias-variance'' trade-off in test error, leading to improved performance on Task 2.
\end{enumerate}
And for the forgetting phenomenon, we consider the performances on both Task 1 and Task 2, establishing that
\begin{enumerate}
\item (Impact of regularization in forgetting): Without hurting the performance on Task 2, ridge regression mitigates forgetting on Task 1 by balancing the ``bias-variance'' trade-off, effectively managing both pre-trained and fine-tuned errors simultaneously.
\item (Ensemble for enhanced forgetting mitigation): Leveraging the pre-trained and fine-tuned models, ensembling further improves such ``bias-variance'' balance, providing an additional reduction in forgetting.
\end{enumerate}

\subsection{Empirical validation overview}
Our theoretical analysis is mainly inspired by the ``magical'' empirical results, which are deferred to Section~\ref{seec:empirical}. Here we highlight the main results, to show the consistency between our theoretical results and the empirical phenomenon. Specifically, we give empirical evidences showing that:
\begin{enumerate}
\item (Harmful overadaptation in fine-tuning): When the training process extends into the ``overfitting'' regime (without applying early stopping), fine-tuning performance deteriorates;
\item (Enhanced performance through ensemble): Model ensemble consistently improves model performance across various experiment settings, both achieving better performance on fine-tuning tasks and mitigating forgetting on pre-training tasks efficiently.
\end{enumerate}

\section{Related Works}

There exists a substantial body of work on model ensemble, pre-training and fine-tuning. In this section, we review the most relevant works to ours.

\paragraph{Model ensemble.}  Model ensemble has been a popular technique to enhances generalization performance, as documented in the existing literature ~\citep{hansen1990neural, krogh1994neural, perrone1995networks, opitz1999popular, dietterich2000ensemble, zhou2002ensembling, polikar2006ensemble, rokach2010ensemble, rame2022diverse, arpit2022ensemble, kumar2022calibrated, wortsman2022model, lin2023spurious}. Recently, ensemble the pre-trained and fine-tuned models has been verified to benefit the out-of-distribution robustness \citep{wortsman2022robust}, as well as decreasing forgetting in reinforcement learning from human feedback \citep{lin2023speciality}.

\paragraph{Understanding model ensemble.} On the theoretical side, works explaining the good performance of  ensemble are limited, especially in the context of overparameterized models. In traditional underparameterized settings, \citet{brown2005managing} decomposes the prediction error of ensemble models into bias, variance and a covariance term between individual models, proposing algorithms to encourage the diversity of individual models to reduce the covariance term; similarly, \citet{lin2023spurious} showed that ensembling two independently trained models increases feature diversity, thereby improving out-of-distribution (OOD) robustness, while \citet{lin2023speciality} extended this framework to demonstrate that feature diversity also mitigates forgetting on upstream tasks. In the case of over-parameterized models, recent works such as \citet{allen2020towards} and \citet{hao2024benefits} have shown that ensembling independently trained models improves OOD robustness. A comprehensive explanation for the improved model efficiency remains lacking.


\paragraph{Pre-training and fine-tuning.}
The pre-training and fine-tuning paradigm have become a cornerstone in developing high-performance models across various domains, particularly in large language models (LLMs). Recent advancements have focused on enhancing this framework through various techniques.
The LP-FT technique, introduced by \citet{kumar2022fine}, involves initializing the pre-trained feature extractor with a reasonably good classifier; \citet{huang2021continual} proposed low-rank adaptation (LoRA) to reduce the number of trainable parameters during fine-tuning, which benefits parameter-efficient training; \citet{tian2023trainable} presented a trainable projected gradient method aimed at enhancing out-of-distribution (OOD) generalization; model ensemble has also demonstrated effectiveness in improving performance, as evidenced by many studies \citep{cha2021swad, chu2022dna, wortsman2022model, lin2023speciality}.


\section{Motivated by the Intriguing Phenomenon}\label{seec:empirical}

We start our observation on instruction-following fine-tuning tasks, highlighting the empirical findings from three key aspects: (i) the impact of using a regularizer versus overfitting (overadaptation), the performance of ensembling pre-trained and fine-tuned models on (ii) fine-tuning tasks and (iii) pre-training tasks.

\begin{figure}[ht]
\vskip 0.1in
        \centering
        \includegraphics[width=0.48\textwidth]{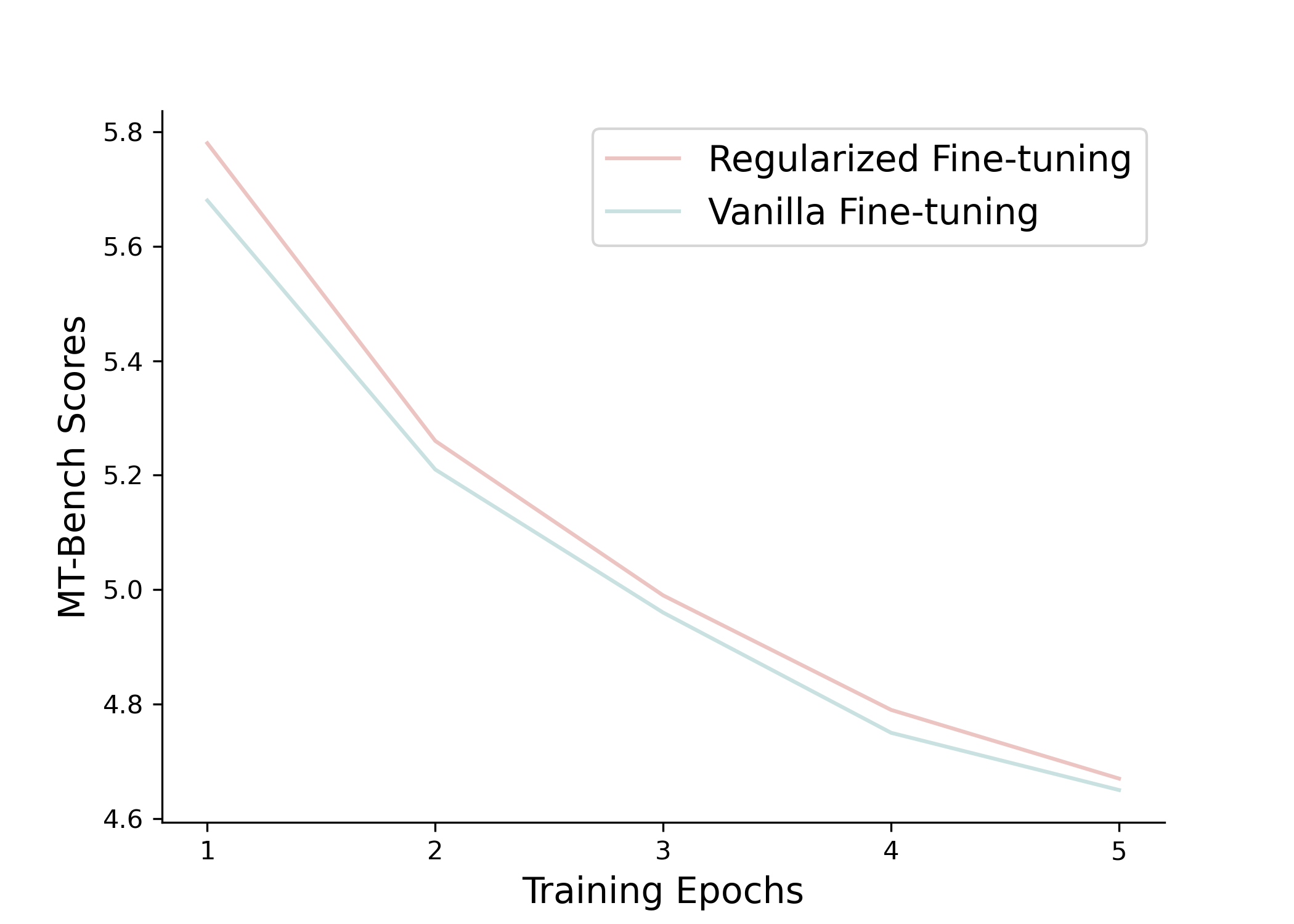}
        \caption{\textbf{Early Stop Experiments:} The performance on MT-Bench when the training epoch increases.
        We conduct vanilla fine-tuning and DiffNorm-Penalty fine-tuning with \textbf{Llama-3-8B} on Dolly dataset.
        }
        \label{fig:epoch}
\vskip 0.1in
\end{figure}

\subsection{Datasets and Benchmarks}

Our experiments utilize the Dolly dataset \citep{DatabricksBlog2023DollyV2}, a popular instruction-following dataset that covers a wide range of tasks, including Creative Writing, Closed QA, Open QA, Summarization, Information Extraction, Classification and Brainstorming, ensuring its high-diversity.


The LLMs’ instruction-following ability is evaluated on MT-Bench \citep{zheng2023judgingllmasajudgemtbenchchatbot} with single-answer grading. This benchmark prompts conversational assistants with challenging multi-turn open-ended questions and utilizes ``LLM-as-a-judge'' for evaluation, which comprises 80 questions, evenly distributed across 8 categories: Writing, Roleplay, Extraction, Reasoning, Math, Coding, Knowledge I (STEM), and Knowledge II (humanities/social science). For each response, the LLM judge of GPT-4 will provide a score on a scale of 1 to 10, indicating the overall instruction-following ability of the evaluated conversational assistant.


We also assess LLMs' general ability on MMLU \citep{hendrycks2021measuringmassivemultitasklanguage} and Commonsense-QA \citep{talmor2019commonsenseqaquestionansweringchallenge}. 
MMLU is a dataset containing 57 tasks including mathematics, chemistry, computer science, law, and more, which measures multi-task ability and requires extensive world knowledge and problem-solving ability.
Commonsense-QA contains more than 10K real-world common sense questions. It requires LLMs to identify related real-world knowledge and distinguish the distracted answers.


The goal of the experiments is to highlight the strengths of regularization and ensembling in improving the tradeoff between instruction following and general abilities, revealing that these common generalization techniques not only enhance downstream task performance, but also alleviate forgetting issues in LLMs.



\begin{figure}[ht]
\vskip 0.1in
        \centering
        \includegraphics[width=0.45\textwidth]{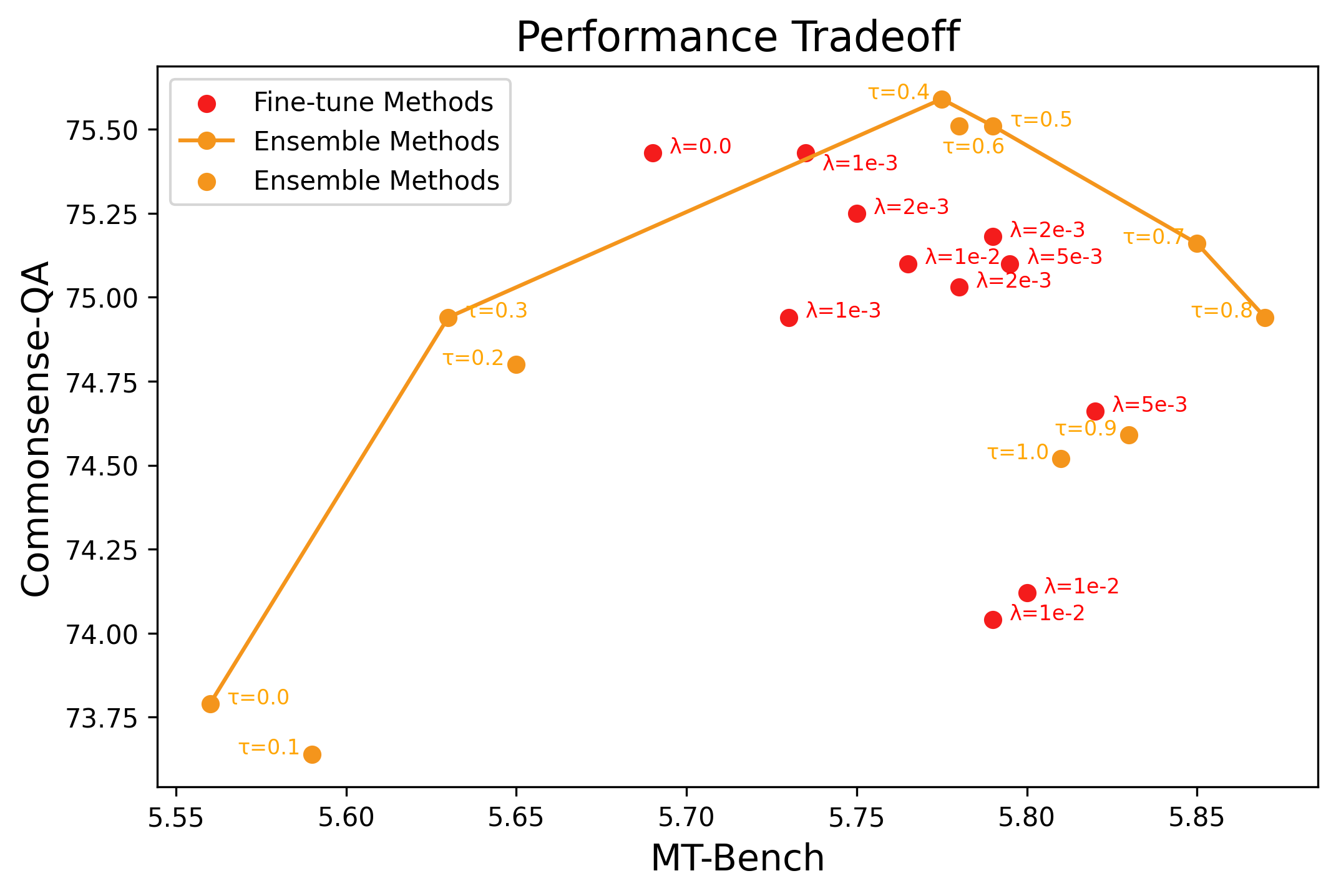}
        \caption{\textbf{Experiments on Commonsense-QA and MT-bench:} The performance tradeoff on MT-Bench and Commonsense-QA 
        for ensemble methods and fine-tune methods. The ensemble methods are based on the pre-trained model and the DiffNorm-Penalty model, where the results of different $\tau$ values and seeds are presented here. We use \textbf{Llama-3-8B} in our experiments.}
        \label{fig:commonsense}
\vskip 0.1in
\end{figure}

\subsection{Experiment Settings}

Our experiments are based on three well-known open-source base models: Llama-3-8B \citep{dubey2024llama3herdmodels}, Qwen2-7B \citep{yang2024qwen2technicalreport}, and Gemma-2-9B \citep{gemmateam2024gemma2improvingopen}, where for each model, we compare the vanilla fine-tuning approach to variants with generalization techniques applied. Depending on whether or not ensembling is used, and which types of regularization are adopted, totaling $2\times 2 = 4$ variants are proposed to be compared with the vanilla fine-tuning baseline.

For regularization, an additional penalty term of $\|\hat{\theta}\|_2^2$ or $\|\hat{\theta} - 
\hat{\theta}_1\|_2^2$ is added to the fine-tuning loss, where $\hat{\theta}$ means the fine-tuned model parameters and $\hat{\theta}_1$ represents the pre-trained weights. We denote those variants as Normal-Penalty and DiffNorm-Penalty separately. For ensembling, the parameters of fine-tuned models are weighted-averaged with the pre-trained model. Those variants are denoted as Avg-Norm-Penalty and Avg-DiffNorm-Penalty, respectively. We make our implementation publicly available \footnote{https://github.com/xypan0/LLMForgetting}. For more experimental details, please refer to Appendix~\ref{appendix:exp_details}.

\subsection{Results}

Our empirical results uncover three key issues in the standard fine-tuning process, as shown in Figure~\ref{fig:epoch}, Table~\ref{MT-Bench}, Figure~\ref{fig:commonsense} and Figure~\ref{fig:mmlu}:

\paragraph{Overfitting (overadaptation) is harmful during the fine-tuning process.} In Figure~\ref{fig:epoch}, we show the training process on the Dolly dataset and performance on MT-bench without applying early-stopping. It becomes evident that performance deteriorates quickly with additional epochs, indicating harmful overfitting. Even with early stopping, as seen in Table~\ref{MT-Bench}, the use of regularizers, such as Norm-Penalty and DiffNorm-Penalty, improves generalization performance compared to non-regularized fine-tuning (Vanilla-FT).

\paragraph{Ensemble enhances generalization performance.} As shown in Table~\ref{MT-Bench}, the ensemble of pre-trained and fine-tuned models consistently outperforms the individually fine-tuned models on fine-tuning tasks. This improved performance highlights the effectiveness of model ensembling, aligning with the empirical findings in \citet{lin2023speciality}. Although both methods, Norm-Penalty and DiffNorm-Penalty, use early-stopping to prevent overfitting, they apply different penalties additionally in the training process. Interestingly, in our experiments, Norm-Penalty consistently outperforms DiffNorm-Penalty in almost all settings, suggesting that the choice of regularizer plays a crucial role and warrants further exploration.

\begin{figure}[ht]
\vskip 0.1in
        \centering
        \includegraphics[width=0.45\textwidth]{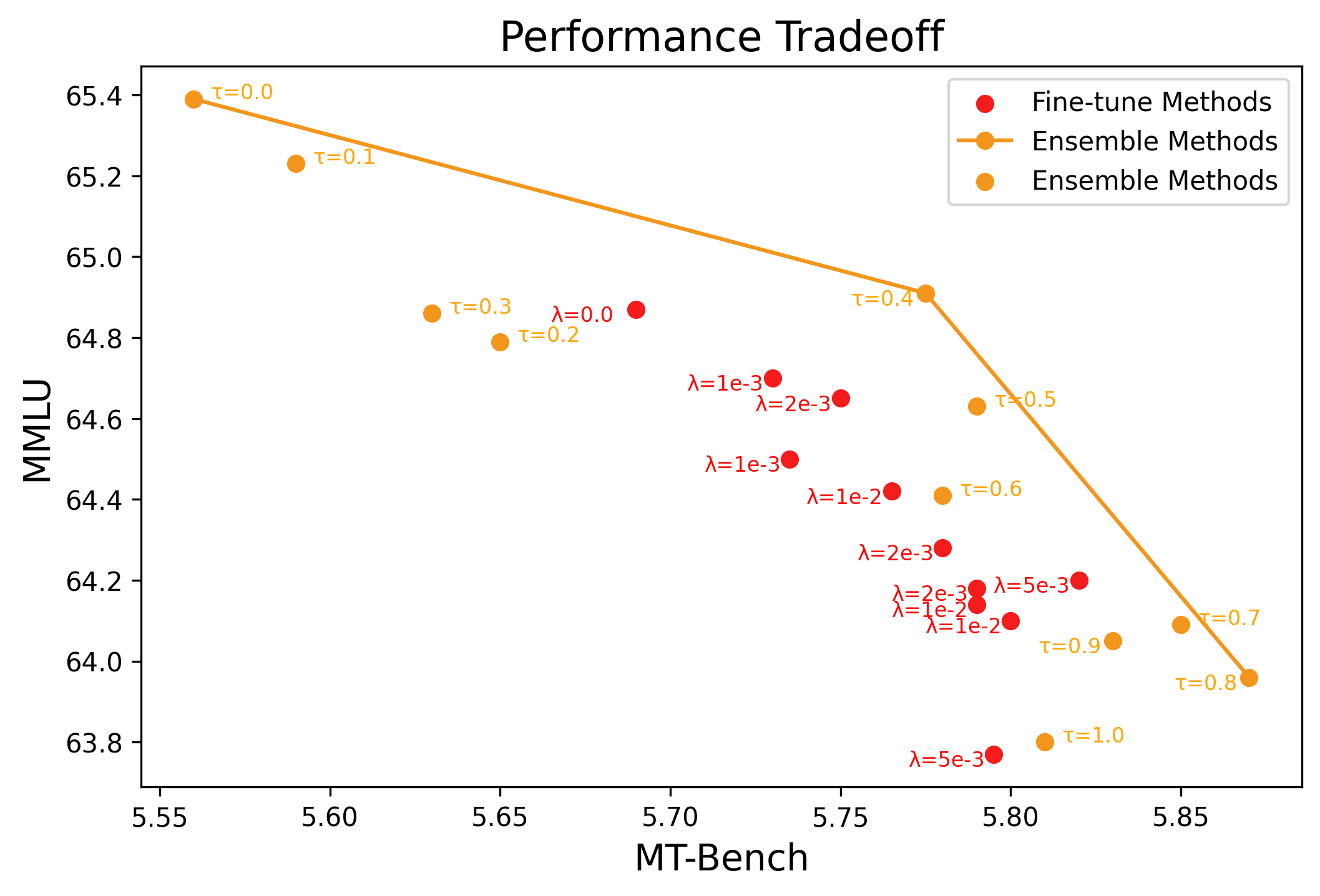}
        \caption{\textbf{Experiments on MMLU and MT-bench:} The performance tradeoff on MT-Bench and MMLU 
        for ensemble methods and fine-tune methods. The ensemble methods are based on the pre-trained model and the DiffNorm-Penalty model, where results of different $\tau$ values and seeds are presented here. We use \textbf{Llama-3-8B} in our experiments.
        }
        \label{fig:mmlu}
\vskip 0.1in
\end{figure}

\paragraph{Ensemble improves trade-off between pre-training and fine-tuning tasks.} We also evaluate the trade-off between pre-training tasks (Commonsense-QA and MMLU) and the downstream task (MT-bench). The results are shown in Figure~\ref{fig:commonsense} and Figure~\ref{fig:mmlu}. Compared to individually fine-tuned models, ensemble models achieve better trade-offs on these tasks especially when $\tau$ is larger than 0.5. The ensemble models generally have a high MT-bench score while maintaining good performance on Commonsense-QA and MMLU. The fine-tuned models would suffer from more forgetting if they achieve high MT-bench scores. The results suggest that ensemble methods could achieve a good balance of instruction-following ability and generalization ability.

\begin{table*}[t]
\caption{MT-Bench scores of models fine-tuned on Dolly~\citep{DatabricksBlog2023DollyV2}.}
\label{MT-Bench}
\vskip 0.15in
\begin{center}
\begin{small}
\begin{sc}
\begin{tabular}{ccc|ccc}
        \toprule
         Methods & Regularizer & Ensembling & Llama-3-8B & Qwen2-7B & Gemma-2-9B \\ \midrule
         Vanilla-FT & - & \xmark             & 5.68 & 6.57 &  6.52 \\ \midrule
         Norm-Penalty & $\|\hat{\theta}\|_2^2$ & \xmark         & 5.84 & 6.81 &  6.59 \\ \midrule
         DiffNorm-Penalty & $\|\hat{\theta} - \hat{\theta}_1\|_2^2$ & \xmark       & 5.78 & 6.68 & 6.65 \\ \midrule
         Avg-Norm-Penalty  & $\|\hat{\theta}\|_2^2$ & \cmark      & \textbf{5.96} & \textbf{7.10} & 6.83\\ \midrule
         Avg-DiffNorm-Penalty & $\|\hat{\theta} - \hat{\theta}_1\|_2^2$ & \cmark  & 5.85 & 6.84 & \textbf{6.89} \\
        \bottomrule
\end{tabular}
\end{sc}
\end{small}
\end{center}
\vskip -0.1in
\end{table*}

\section{Problem Setup and Notations}\label{sec:note}

\textbf{Notation.} For any matrix $A$, we use $\| A \|_2$ to denote its $\ell_2$ operator norm and use $\mathrm{tr}\{ A \}$ to denote its trace. The $i$-th largest eigenvalue of $A$ is denoted as $\mu_i(A)$. The transposed matrix of $A$ is denoted as $A^T$. And the inverse matrix of $A$ is denoted as $A^{-1}$. The notation $a = o(b)$ and $a \ll b$ mean that $a/b \to 0$, $a = \omega(b)$ means that $a/b \to \infty$, $a = O(b)$ means that $a/b$ is bounded, and $a \asymp b$ means $a = O(b)$ as well as $b = O(a)$. 

Given the significant performance improvements achieved through ensembling, we seek to understand its benefits in this section. To this end, we analyze its effects within the framework of over-parameterized linear regression.

To be specific, the pre-training process is taken on Task 1, where $n$ i.i.d. training examples $(x_1, y_1), \dots, (x_n, y_n)$ from distribution $\cD$ take values in $\mathbb{R}^p \times \mathbb{R}$ and obey the following linear model with parameter $\theta \in \mathbb{R}^p$:
\begin{equation}
  \E[y_i \mid x_i] = x_i^T \theta.
  \label{eq:pre_model}
\end{equation}
To capture the strong performance of the pre-trained model, we consider the min-norm (``ridgeless'') estimator on Task 1:
\begin{equation}\label{eq:est_1}
    \hat{\theta}_1 := X^T (XX^T)^{-1} Y,
\end{equation}
where $X = [x_1, \dots, x_n]^T \in \mathbb{R}^{n \times p}$ and $Y = [y_1, \dots y_n]^T \in \mathbb{R}^n$. Starting with the pre-trained estimator $\hat{\theta}_1$, fine-tuning process is taken on Task 2, where $n$ i.i.d. training examples $(\tx_1, \ty_1), \dots, (\tx_n, \ty_n)$ are sampled from another distribution $\tilde{\cD}$ \footnote{For simplicity, we assume that the sample sizes for pre-training and fine-tuning are the same. However, our results remain valid even when using a larger dataset for pre-training compared to fine-tuning.}, as well as following another linear model:
\begin{equation}
  \E[\ty_i \mid \tx_i] = \tx_i^T \ttheta.
  \label{eq:fine_model}
\end{equation}
During the fine-tuning process, we consider two scenarios. The first is ``ridgeless” regression, defined by the following objective function:
\begin{equation*}
    \argmin_{\theta} \| \theta - \hat{\theta}_1 \|_2^2, \quad \text{s.t.} \quad \TX \theta = \TY,
\end{equation*}
where $\TX = [\tx_1, \dots, \tx_n]^T \in \mathbb{R}^{n \times p}$ and $\TY = [\ty_1, \dots, \ty_n]^T \in \mathbb{R}^n$. Accordingly, the estimator is as
\begin{equation}\label{eq:ets_2}
     \hat{\theta}_2 := \hat{\theta}_1 + \TX^T(\TX \TX^T)^{-1} (\TY - \TX \hat{\theta}_1).
\end{equation}
The second objective function incorporates a regularizer with parameter $\lambda$ to avoid overfitting, as in ridge regression:
\begin{equation*}
  \arg \min_{\theta} \frac{1}{n} \left\{ \| \TX \theta - \TY \|_2^2 + \lambda \| \theta - \hat{\theta}_1 \|_2^2 \right\}
\end{equation*}
which implies a solution as
\begin{equation}\label{eq:est_lam}
    \hat{\theta}_{\lambda} = \TX^T (\TX \TX^T + n \lambda I)^{-1} (\TY - \TX \hat{\theta}_1).
\end{equation}
With the pre-trained estimator and fine-tuning estimator, we also consider the ensemble ( weighted averaging) estimator:
\begin{equation}\label{eq:est_avg}
    \hat{\theta}_{\lambda}^\tau = (1 - \tau) \hat{\theta}_1 + \tau \hat{\theta}, \quad \text{where} \quad \hat{\theta} = \hat{\theta}_2,  \hat{\theta}_\lambda,
\end{equation}
with an averaging coefficient $0 \le \tau \le 1$. In our settings, the performance measures for an estimator $\hat{\theta}$ are the excess mean squared errors on Task 1:
\begin{align*}
    \mathcal{L}_{\mathrm{pre}} (\hat{\theta}) &:= \E_{(x_\star,y_\star,\theta)}\bigl[(x_\star^T \hat{\theta} - y_\star)^2\bigr] - \E_{(x_\star,y_\star)}\bigl[(x_\star^T \theta - y_\star)^2\bigr]\\
  &=
  \E_{x_\star}\bigl[(x_\star^T \hat{\theta} - x_\star^T \theta)^2\bigr]
  , 
\end{align*}
and Task 2:
\begin{align*}
\mathcal{L}_{\mathrm{ft}} (\hat{\theta}) &:= \E_{(\tx_\star,\ty_\star, \ttheta)}\bigl[(\tx_\star^T \hat{\theta} - \ty_\star)^2\bigr] - \E_{(\tx_\star,\ty_\star)}\bigl[(\tx_\star^T \tilde{\theta} - \ty_\star)^2\bigr]\\
  &=
  \E_{\tx_\star}\bigl[(\tx_\star^T \hat{\theta} - \tx_\star^T \ttheta)^2\bigr]
  ,
\end{align*}
where the variables $(x_\star, y_\star)$ and $(\tx_\star,\ty_\star)$ are independent copies of $(x_1, y_1)$ and $(\tx_1,\ty_1)$ respectively.

\paragraph{Motivation for the objective function.} The two objective functions used in the fine-tuning process serve to characterize the presence or absence of early stopping. Specifically, if early stopping is not employed, the overadaptation on downstream tasks can be described as ``ridgeless” regression. Conversely, if early stopping is applied, it can be viewed as utilizing a ridge regularizer \citep{lin2017optimal, lu2022sobolev}.

In further analysis, we adopt the following assumptions on settings above:
\begin{enumerate}
\item On Task 1, $x_i = \varSigma^{1/2} \eta_i$, where $\varSigma := \E[x_ix_i^T] = \text{diag}[\lambda_1, \dots, \lambda_p]$, and the components of $\eta_i$ are independent $\sigma_x$-subgaussian random variables with mean zero and unit variance;
\item On Task 2, $\tx_i = \tvar^{1/2} \tilde{\eta}_i$, where $\tvar := \E[\tx_i \tx_i^T] = \text{diag}[\tilde{\lambda}_1, \dots, \tilde{\lambda}_p]$, and the components of $\tilde{\eta}_i$ are also independent $\sigma_x$-subgaussian random variables with mean zero and unit variance;
\item $\mathbb{E}[y_i - x_i^T \theta | x_i, \theta]^2 = \mathbb{E} [\epsilon_i]^2 = \sigma^2 > 0$, $\mathbb{E}[\ty_i - \tx_i^T \ttheta | \tx_i, \ttheta]^2 = \mathbb{E} [\tep_i]^2 = \tsig^2 > 0$;
\item The true model parameters $\theta, \ttheta$ are independent of samples, and could be decomposed as 
\begin{equation*}
    \theta = \theta_c + \alpha_1, \quad \ttheta = \theta_c + \alpha_2,
\end{equation*}
where $\theta_c, \alpha_1, \alpha_2$ are independent with each other, as well as holding $\| \theta_c \|_2^2 < \infty$, $\mathbb{E} [\alpha_1 \alpha_1^T] = \zeta_1 I_p$ and $\mathbb{E} [\alpha_2 \alpha_2^T] = \zeta_2 I_p$, where $\zeta_1, \zeta_2 > 0$. 
\end{enumerate}

\paragraph{Discussion on the setting.} Pre-trained models often achieve reasonably good generalization across various tasks, and fine-tuning serves to enhance their performance on some specific tasks. Thus, it is reasonable to assume that, despite the differences between Task 1 and Task 2, they share many similarities, which makes the ``benign overfitting'' estimator $\hat{\theta}_1$ a good ``initial point'' in fine-tuning process. This leads us to posit that the true model parameters $\theta$ and $\ttheta$ share significant information, represented by $\theta_c$, while also exhibiting some differences characterized by $\alpha_1$ and $\alpha_2$. Additionally, we assume that $\varSigma$ and $\tvar$ share a large amount of same eigenvectors (see Condition~\ref{cond:eigen}), further reflecting the similarity between Task 1 and Task 2.

\paragraph{The connection between theoretical and empirical results.}
The theory aims to provide explanations for the benefits of ensemble, adopting a linear setting for intuitive insights. Such simplifications have been widely used in prior works \citep{mallinar2022benign, kumar2022fine}.
The connection between theoretical and empirical results can be established by adopting an NTK explanation, as fine-tuning results in parameters close to pretraining points. Specifically, for a nonlinear neural network $f(x, \vartheta)$ in the NTK regime, we approximate it using a first-order Taylor expansion $f(x, \vartheta) \approx f(x, \vartheta_0) + \nabla_\vartheta f(x, \vartheta_0)^T (\vartheta - \vartheta_0)$. Comparing this with the linear setting $y = x^T \theta_*$, we can interpret the ``features'' in neural networks, i.e., $\nabla_\vartheta f(x, \vartheta_0)$, as the input x in the linear model, and the trainable parameter $\vartheta - \vartheta_0$ as the parameter $\theta_*$ in $y = x^T \theta_*$. Since $f(x, \vartheta_0)$ remains unchanged during training, its effect can be disregarded in this simplification. And in our linear setup, the high-dimensional assumption on $x$ (see Condition~\ref{cond:order}) can characterize the ``features'' $\nabla_\vartheta f(x, \vartheta_0)$ in overparameterized neural networks.

\section{Main Theorems and Interpretations}\label{sec:re}
In this section, we present the test performance of various estimators and provide explanations for the improvement in both generalization on fine-tuning tasks and forgetting mitigation on pre-training tasks achieved through model ensemble, highlighting the ``bias-variance” trade-off phenomenon.
To simplify our explanations, we consider the covariance matrices for $x, \tx$ as follows, 
\begin{condition}\label{cond:eigen}
Denoting $\varSigma := \mathrm{diag} \{ \lambda_1, \dots, \lambda_p \}$ and $\tvar := \mathrm{diag} \{ \tilde{\lambda}_1, \dots, \tilde{\lambda}_p \}$, we have
\begin{equation*}
\lambda_i = \left\{ 
\begin{aligned}
& 1, \quad i = 1, \dots, k^*,\\
& \gamma, \quad i > k^*,
\end{aligned}
\right. \quad \tilde{\lambda}_i = \left\{ 
\begin{aligned}
& 1, \quad i = 1, \dots, k^*,\\
& \tgam, \quad k^* < i \le \tilde{p},\\
& 0, \quad i > \tilde{p}.
\end{aligned}
\right.
\end{equation*}    
\end{condition}
And our main theorem is also based on the following condition:
\begin{condition}\label{cond:order}
We consider the following three items:
 \begin{enumerate}
\item (good performance of pre-trained model) For Task 1, there exists a constant $0 < \xi < 1$, such that
\begin{equation*}
  k^* = O(1), \quad  p = \omega(n), \quad p = o(n^{1+ \xi}),
\end{equation*}
\item (sparsity of Task 2) For Task 2, the following conditions hold
\begin{equation*}
    \tilde{p} > n, \quad \tilde{p} \asymp n, \quad \tilde{p} \tgam \asymp 1, \quad \tilde{p} \tgam > 2 c_1 \tsig^2 / \zeta_2,
\end{equation*}
where $c_1 > 0$ is a constant only depending on $ \sigma_x$.
\item (non-negligible data noise and task difference) For the noise level and the ``bias'' parameter $\alpha_1, \alpha_2$, we have
\begin{align*}
  & \zeta_1 = O(n^{-\xi}), \quad, \zeta_2 \asymp \sigma^2 \asymp \tilde{\sigma}^2, \quad \zeta_2 = o(1),\\
  & \zeta_2 = \omega(\max\{n^{- (1 - \xi)/2}, n^{- \xi} \}). 
\end{align*}

 \end{enumerate}
\end{condition}

\paragraph{Discussion on the conditions.} 
Condition~\ref{cond:eigen} describes a high-dimensional eigenvalue structure characterized by several ``large" eigenvalues alongside many ``small" ones. This structure aligns with the eigenvalue decay observed in general kernel matrices within deep neural networks \citep{fan2020spectra, li2024eigenvalue}. We can further validate this assumption by analyzing the eigenvalue distribution of the Hessian matrix in practical models using PyHessian \citep{yao2020pyhessian} and variants of Lanczos algorithms\citep{zhang2024transformers}, which confirms the presence of several ``large'' eigenvalues and many ’’small'' eigenvalues. 
In Condition~\ref{cond:order}, the first item reflects the ``benign" overfitting scenario discussed in \citet{bartlett2020benign}, which helps to characterize the good performance of the pre-trained model. The second item delineates the ``sparse" structure of Task 2, i.e, there are many zero eigenvalues in $\tvar$. 
Such sparse structure observed in fine-tuning tasks reflects the nature of knowledge specialization across different inputs. While pretraining involves diverse inputs encompassing broad knowledge, fine-tuning is performed on specific tasks with a narrower scope, leading to a ``sparse'' structure in our theoretical formulation.
The third item outlines certain conditions that address the significance of data noise and the differences between the two tasks. Since the specific order relationships are technical assumptions intended to clarify our theoretical results more clearly, we believe that relaxing them will not compromise our theoretical intuition. We consider this a possible direction for further exploration.

Before delving into our main results in Theorem~\ref{thm}, we introduce two notations:
\begin{small}
  \begin{align*}
    \lambda' &:= \frac{\tsig^2}{n \zeta_2},\\
    \tau'(\lambda) &:= \zeta_2 \mathrm{tr}\{ (\TX \TX^T + n \lambda I)^{-1} \TX \tvar \TX^T\} \\
   & \left(\tsig^2 \mathrm{tr}\{ (\TX \TX^T + n \lambda I)^{-2} \TX \tvar \TX^T\}    \right. \\
   & \left. +\zeta_2 \mathrm{tr} \{ (\TX \TX^T + n \lambda I)^{-1} \TX \TX^T (\TX \TX^T + n \lambda I)^{-1} \TX \tvar \TX^T \}   \right)^{-1},
\end{align*}     
\end{small}
The main theorem is then stated as follows:
\begin{theorem}\label{thm}
For any $\sigma_x, \xi$ defined above, as Condition~\ref{cond:eigen}  and Condition~\ref{cond:order} are satisfied, there exists a constant $c>1$ such that for $\delta \in (0,1)$ and $\ln(1 / \delta) < n^{\xi} / c$,
with probability at least $1 - \delta$ over $X, \TX$, 
\begin{enumerate}
\item (the effectiveness of regularization.) for any $0 < \lambda \le 2 \lambda'$, we have
\begin{equation*}
   \mathcal{L}_{\mathrm{ft}}(\hat{\theta}_{\lambda}) < \mathcal{L}_{\mathrm{ft}}(\hat{\theta}_2) < \mathcal{L}_{\mathrm{ft}}(\hat{\theta}_1).
\end{equation*}
\item (forgetting mitigation with model ensemble.) for any $0 < \lambda \le 2 \lambda'$ and $\tau'(\lambda)/2 \le \tau < 1$, we have
\begin{footnotesize}
 \begin{equation*}
    \mathcal{L}_{\mathrm{pre}}(\hat{\theta}^{\tau}_\lambda) + \mathcal{L}_{\mathrm{ft}}(\hat{\theta}^{\tau}_\lambda) <  \mathcal{L}_{\mathrm{pre}}(\hat{\theta}_{\lambda}) + \mathcal{L}_{\mathrm{ft}}(\hat{\theta}_{\lambda}) < \mathcal{L}_{\mathrm{pre}}(\hat{\theta}_2) + \mathcal{L}_{\mathrm{ft}}(\hat{\theta}_2).
\end{equation*}      
\end{footnotesize}
\item (improving performance on ensemble.) for any $0 \le \lambda < \lambda'$ and $\tau'(\lambda) \le \tau < 1$, we have
\begin{equation*}
     \mathcal{L}_{\mathrm{ft}}(\hat{\theta}^{\tau}_\lambda) < \mathcal{L}_{\mathrm{ft}}(\hat{\theta}_{\lambda}) .
\end{equation*}
\end{enumerate}
\end{theorem}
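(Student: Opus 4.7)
The plan is to bound each loss through a bias-variance decomposition of the error vector $\hat\theta - \ttheta$ in the $\tvar$-norm (and $\hat\theta - \theta$ in the $\varSigma$-norm for $\mathcal{L}_{\mathrm{pre}}$), conditioning on the data matrices $X,\TX$ and exploiting the mutual independence of the noises $\epsilon,\tep$, the task-common parameter $\theta_c$, and the task-specific deviations $\alpha_1,\alpha_2$. Writing $\tilde P_\lambda := \TX^T(\TX\TX^T + n\lambda I)^{-1}\TX$ and $\tilde P := \tilde P_0$, a short calculation yields
\[
\hat\theta_2 - \ttheta = (I - \tilde P)(\hat\theta_1 - \ttheta) + \TX^T(\TX\TX^T)^{-1}\tep,
\]
with an analogous identity for $\hat\theta_\lambda - \ttheta$ involving $(I - \tilde P_\lambda)$ together with a noise term $\TX^T(\TX\TX^T + n\lambda I)^{-1}\tep$. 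Substituting $\hat\theta_1 - \ttheta = (\hat\theta_1 - \theta) + \alpha_1 - \alpha_2$, each excess loss splits into a pre-training contribution, a task-difference bias proportional to $(\zeta_1 + \zeta_2)$ times traces of $\tvar$ composed with $(I - \tilde P_\lambda)$-type projectors, and a fine-tuning variance proportional to $\tsig^2$ times traces of resolvents.

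For Part~1, I would compare these three contributions under Conditions~\ref{cond:eigen}--\ref{cond:order}. The benign-overfitting regime (Condition~\ref{cond:order}(1)) controls $\E[(\hat\theta_1 - \theta)(\hat\theta_1 - \theta)^T]$, while matrix concentration for sub-Gaussian columns replaces random traces involving $\TX$ by their deterministic equivalents. The quantity $\lambda' = \tsig^2/(n\zeta_2)$ is precisely the regularization level that balances the task-difference bias against the fine-tuning variance, so monotonicity of the ridge bias-variance trade-off on $[0,2\lambda']$ yields $\mathcal{L}_{\mathrm{ft}}(\hat\theta_\lambda) < \mathcal{L}_{\mathrm{ft}}(\hat\theta_2)$; the bound $\mathcal{L}_{\mathrm{ft}}(\hat\theta_2) < \mathcal{L}_{\mathrm{ft}}(\hat\theta_1)$ follows because fine-tuning removes the $\Theta(\zeta_2)$ task-difference bias in the $\TX$-row-space directions and pays only a strictly smaller variance under Condition~\ref{cond:order}(2).

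Parts~2 and 3 reduce to an explicit quadratic optimization in $\tau$. With $u = \hat\theta_1 - \ttheta$, $v = \hat\theta_\lambda - \ttheta$ and $M = \tvar$ for Part~3 (or the corresponding composite operator after handling $\mathcal{L}_{\mathrm{pre}}$ analogously for Part~2),
\[
L(\tau) := \mathcal{L}_{\mathrm{ft}}(\hat\theta_\lambda^\tau) = (1-\tau)^2 q_{11} + 2\tau(1-\tau) q_{12} + \tau^2 q_{22},
\]
with $q_{11} = \E[u^T M u]$, $q_{12} = \E[u^T M v]$ and $q_{22} = \E[v^T M v]$. Its minimizer $\tau^* = (q_{11} - q_{12})/(q_{11} - 2 q_{12} + q_{22})$, after substituting the above decompositions and using $\E[\alpha_2\alpha_2^T] = \zeta_2 I_p$, coincides with $\tau'(\lambda)$ up to lower-order terms. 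Convexity of $L$ then gives $L(\tau) < L(1)$ on the interval $\tau \in (2\tau^* - 1,\,1)$, which contains $[\tau'(\lambda),1)$ for Part~3; for Part~2 the optimal $\tau$ shifts further toward $\hat\theta_1$ (adding $\mathcal{L}_{\mathrm{pre}}$ strictly rewards larger $1 - \tau$), which is why the stated threshold drops to $\tau'(\lambda)/2$.

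The main obstacle will be obtaining sharp enough concentration for the random traces $\mathrm{tr}\{\tvar(\TX\TX^T + n\lambda I)^{-1}\TX\tvar\TX^T\}$ and their relatives so that the three stated inequalities hold as strict orderings rather than with constant-factor slack. Under Condition~\ref{cond:eigen} the spectrum of $\TX\TX^T$ splits into a $k^*$-dimensional ``spiked'' block of order $n$ and a bulk of order $1$; the resolvent must be analyzed at scale $n\lambda \asymp \tsig^2/\zeta_2$, which falls precisely between the two. I would handle the spiked block via the benign-overfitting machinery of Bartlett et al.\ and the bulk via standard sub-Gaussian covariance concentration, producing deterministic equivalents tight enough to drive every comparison in the theorem.
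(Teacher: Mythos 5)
Your proposal follows essentially the same route as the paper's proof: the same bias--variance decomposition of $\hat\theta-\ttheta$ through the projectors $\TX^T(\TX\TX^T+n\lambda I)^{-1}\TX$, the same reduction to the dominant $\zeta_2$- and $\tsig^2$-traces via the Bartlett-et-al.\ benign-overfitting concentration lemmas, the same identification of $\lambda'=\tsig^2/(n\zeta_2)$ from the sign of the $\lambda$-derivative of the bias--variance sum, and the same quadratic-in-$\tau$ minimization producing $\tau'(\lambda)$ for $\mathcal{L}_{\mathrm{ft}}$ and $\tau'(\lambda)/2$ for $\mathcal{L}_{\mathrm{pre}}+\mathcal{L}_{\mathrm{ft}}$ (your convexity argument on $(2\tau^*-1,1)$ versus the paper's monotonicity on $[\tau^*,1)$ is an immaterial difference). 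The concentration concern you flag at the end is precisely the work the paper does in its term-by-term bound estimation, so the plan is sound.
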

The detailed proof is in Appendix~\ref{pf:thm}. The results highlight three key insights: (i) selecting an appropriate regularizer during fine-tuning helps reduce overadaptation on noisy samples, leading to improved generalization on fine-tuning task; ensembling the pre-trained and fine-tuned models can decrease overadaptation further, then (ii) enhances performance on fine-tuning task, as well as (iii) mitigating forgetting phenomenon on pre-training task. These benefits can be understood through a \emph{ ``bias-variance'' trade-off phenomenon}:
\begin{enumerate}
\item Both $\mathcal{L}_{\mathrm{pre}}$ and $\mathcal{L}_{\mathrm{ft}}$ contain ``bias'' term and ``variance'' term. 
\item The pre-trained estimator $\hat{\theta}_1$ is mainly dominated by ``bias'' terms, as it is induced from a sufficiently high-dimensional distribution (see Condition~\ref{cond:order}). 
It performs poorly on Task 2 and achieves good performance on Task 1, because it only contains the information in pre-training process, and lacks information specific to Task 2, resulting in a small ``bias'' term in $\mathcal{L}_{\mathrm{pre}}$, as well as a large ``bias" term in $\mathcal{L}_{\mathrm{ft}}$. 
\item On the other hand, the ``ridgeless" estimator $\hat{\theta}_2$, though it minimizes ``bias" error, overfits the noisy training data during fine-tuning, causing a significant ``variance" term in $\mathcal{L}_{\mathrm{ft}}$ and both large ``bias'' term and large ``variance'' term in $\mathcal{L}_{\mathrm{pre}}$.
\item Introducing a proper regularizer has the ability to achieve better performance on $\mathcal{L}_{\mathrm{ft}}$ and $\mathcal{L}_{\mathrm{pre}} + \mathcal{L}_{\mathrm{ft}}$, by balancing the ``bias" and ``variance" errors more effectively. 
\item The improved generalization on both $\mathcal{L}_{\mathrm{ft}}$ and $\mathcal{L}_{\mathrm{pre}} + \mathcal{L}_{\mathrm{ft}}$ from model ensemble results from further balancing these error terms with a properly chosen weight $\tau$, which applies to both the ``ridgeless" estimator $\hat{\theta}_2$ and the ridge-regularized estimator $\hat{\theta}_\lambda$.
\end{enumerate}

\paragraph{Comparing with previous viewpoints.} From a traditional statistical perspective, which mainly focuses on limited model complexity, increasing the model complexity typically results in a higher ``variance'' error and a lower ``bias'' error \citep{zhang_2023_ltbook}. This trade-off suggests that overfitting noisy training data leads to poor generalization and high test error. However, recent advancements have introduced the concept of ``benign overfitting'' \citep{bartlett2020benign}, which suggests that sufficiently large models can achieve superior performance despite overfitting. In our analysis, the pre-training process mainly operates on a high-dimensional distribution $\mathcal{D}$, facilitating strong performance on Task 1 and aligning with the principles of ``benign overfitting''. Conversely, the fine-tuning phase focuses on a ``sparse'' structure, i.e, $\tilde{\mathcal{D}}$, associated with limited model complexity. This limited complexity explains the observed harmful overfitting during fine-tuning.

\paragraph{Empirical validation.} We also conduct simulations across diverse settings to validate Theorem~\ref{thm}. The details and results, summarized in Appendix~\ref{simu}, demonstrate the strong performance of the ensemble model on both pre-training and fine-tuning tasks, aligning well with our theoretical findings.

\section{Proof Sketches of Theorem~\ref{thm}}
In this section, we summarize the proof sketch of Theorem~\ref{thm}, which mainly contains two steps, and the detailed proof is in Appendix~\ref{pf:thm}. For simplification, we take the following two notations in further analysis:
\begin{equation*}
  P_{\TX, \lambda} := \TX^T (\TX \TX^T + n \lambda I)^{-1} \TX, \quad P_{\TX} := \TX^T (\TX \TX^T)^{-1} \TX.
\end{equation*}

\subsection{Excess Risks Approximation}
First, we show that with a high probability, the excess risks corresponding to estimators in \eqref{eq:est_1}, \eqref{eq:ets_2}, \eqref{eq:est_lam} and \eqref{eq:est_avg} could be expressed as:
\begin{lemma}\label{lem:approx}
As Condition~\ref{cond:eigen} and Condition~\ref{cond:order} are satisfied, there exist a constant $c > 1$ such that for $\delta \in (0, 1)$ and $\ln (1 / \delta) < n^\xi / c$, with probability at least $1 - \delta$ over $X, \tilde{X}$, if $0 \le \lambda \le \tsig^2 / (n \zeta_2)$, we have
\begin{footnotesize}
  \begin{align*}
\mathcal{L}_{\mathrm{ft}}(\hat{\theta}_1) &\approx \zeta_2 \mathrm{tr}\{ \tvar \},\\
\mathcal{L}_{\mathrm{ft}}(\hat{\theta}_2) & \approx \zeta_2 \mathrm{tr}\{ (I -P_{\TX})^2 \tvar \} + \tsig^2 \mathrm{tr}\{ (\TX \TX^T)^{-2} \TX \tvar \TX^T\}, \\
\mathcal{L}_{\mathrm{ft}}(\hat{\theta}_\lambda) & \approx \zeta_2 \mathrm{tr}\{ (I - P_{\TX, \lambda})^2 \tvar \} \\
& \quad+ \tsig^2 \mathrm{tr}\{ (\TX \TX^T + n \lambda I)^{-2} \TX \tvar \TX^T\}, \\
\mathcal{L}_{\mathrm{ft}}(\hat{\theta}^{\tau}_{\lambda}) & \approx  \zeta_2 \mathrm{tr}\{ (I - \tau P_{\TX, \lambda})^2 \tvar \}\\
& \quad+ \tau^2 \tsig^2 \mathrm{tr}\{ (\TX \TX^T + n \lambda I)^{-2} \TX \tvar \TX^T\},
\end{align*}      
\end{footnotesize}
and
\begin{footnotesize}
  \begin{align*}
\mathcal{L}_{\mathrm{pre}}(\hat{\theta}_1) & \ll \mathcal{L}_{\mathrm{pre}}(\hat{\theta}_2), \quad \mathcal{L}_{\mathrm{pre}}(\hat{\theta}_1) \ll \mathcal{L}_{\mathrm{pre}}(\hat{\theta}_\lambda),  \\
 \mathcal{L}_{\mathrm{pre}}(\hat{\theta}_1)  &\ll \mathcal{L}_{\mathrm{pre}}(\hat{\theta}_\lambda^\tau),\\
 \mathcal{L}_{\mathrm{pre}}(\hat{\theta}_2) &\approx \zeta_2 \mathrm{tr}\{ (\TX \TX^T)^{-1} \TX \varSigma \TX^T \} \\
 & \quad + \tsig^2 \mathrm{tr}\{ (\TX \TX^T)^{-2} \TX \varSigma \TX^T\},\\
 \mathcal{L}_{\mathrm{pre}}(\hat{\theta}_\lambda) & \approx \zeta_2 \mathrm{tr}\{ P_{\TX, \lambda}^2 \varSigma \} + \tsig^2 \mathrm{tr}\{ (\TX \TX^T + n \lambda I)^{-2} \TX \varSigma \TX^T\},\\
 \mathcal{L}_{\mathrm{pre}}(\hat{\theta}_\lambda^\tau) & \approx \zeta_2 \tau^2 \mathrm{tr}\{ P_{\TX, \lambda}^2 \varSigma \} \\
 & \quad + \tau^2 \tsig^2 \mathrm{tr}\{ (\TX \TX^T + n \lambda I)^{-2} \TX \varSigma \TX^T\}.
\end{align*}         
\end{footnotesize}
\end{lemma}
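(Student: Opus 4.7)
The plan is to start from each estimator's closed form and write its residual as a linear combination of three mutually independent randomness sources: the pre-training data $(X,\epsilon,\alpha_1)$ inherited through $\hat{\theta}_1$, the fine-tuning label noise $\tep$, and the isotropic parameter perturbation $\alpha_2$ defining $\ttheta = \theta_c + \alpha_2$. Direct substitution yields
\begin{align*}
\hat{\theta}_2 - \ttheta &= (I-P_{\TX})(\hat{\theta}_1 - \ttheta) + \TX^T(\TX\TX^T)^{-1}\tep,\\
\hat{\theta}_\lambda - \ttheta &= (I - P_{\TX,\lambda})(\hat{\theta}_1 - \ttheta) + \TX^T(\TX\TX^T + n\lambda I)^{-1}\tep,\\
\hat{\theta}^\tau_\lambda - \ttheta &= (I - \tau P_{\TX,\lambda})(\hat{\theta}_1 - \ttheta) + \tau\TX^T(\TX\TX^T + n\lambda I)^{-1}\tep.
\end{align*}
Using $\mathcal{L}_{\mathrm{ft}}(\hat{\theta}) = \E[(\hat{\theta}-\ttheta)^T \tvar (\hat{\theta}-\ttheta)]$, integrating out $\tep$ (covariance $\tsig^2 I_n$) produces the $\tau^2\tsig^2\,\mathrm{tr}\{(\TX\TX^T + n\lambda I)^{-2}\TX\tvar\TX^T\}$ variance terms, and then writing $\hat{\theta}_1 - \ttheta = (\hat{\theta}_1 - \theta_c) - \alpha_2$ and integrating out $\alpha_2$ (covariance $\zeta_2 I_p$) produces the $\zeta_2\,\mathrm{tr}\{(I-\tau P_{\TX,\lambda})^2 \tvar\}$ bias terms exactly as claimed, plus a residual $(\hat{\theta}_1 - \theta_c)^T (I-\tau P_{\TX,\lambda})\tvar(I-\tau P_{\TX,\lambda})(\hat{\theta}_1 - \theta_c)$ that must be shown to be of strictly lower order. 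The analogous decomposition $\hat{\theta} - \theta = (I-\tau P_{\TX,\lambda})(\hat{\theta}_1 - \theta) + \tau P_{\TX,\lambda}(\alpha_2 - \alpha_1) + (\text{noise})$ yields the $\mathcal{L}_{\mathrm{pre}}$ expressions, with $\mathrm{tr}\{P_{\TX,\lambda}\varSigma P_{\TX,\lambda}\} = \mathrm{tr}\{P_{\TX,\lambda}^2 \varSigma\}$ collapsing to $\mathrm{tr}\{(\TX\TX^T)^{-1}\TX\varSigma\TX^T\}$ at $\lambda = 0$ by cyclic trace, and the $(\zeta_1 + \zeta_2) \approx \zeta_2$ coefficient absorbed because $\zeta_1 \ll \zeta_2$.

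The second step is to control the residual bias. Expanding $\hat{\theta}_1 - \theta_c = (P_X - I)\theta_c + (P_X - I)\alpha_1 + X^T(XX^T)^{-1}\epsilon$ gives three conditionally independent pieces whose covariances in $X$ are $\theta_c\theta_c^T$, $\zeta_1(I - P_X)$, and $\sigma^2 X^T(XX^T)^{-2}X$. The benign-overfitting sub-Gaussian concentration of $XX^T$ developed in \citet{bartlett2020benign}, combined with the spike-plus-bulk structure of Condition~\ref{cond:eigen} ($k^* = O(1)$ unit eigenvalues and a bulk of $p - k^*$ eigenvalues of size $\gamma$), controls $\|I - P_X\|_2$, $\|X^T(XX^T)^{-2}X\|_2$, and the associated traces with probability at least $1 - \delta$ uniformly for $\ln(1/\delta) \leq n^\xi/c$. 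Condition~\ref{cond:order} then forces each piece to be small: $\|\theta_c\|_2 = O(1)$ kills the deterministic piece; $\zeta_1 = O(n^{-\xi})$ makes the $\alpha_1$-driven term $o(\zeta_2)$; and $p = \omega(n)$, $p = o(n^{1+\xi})$, $\tilde p \asymp n$, $\tilde p \tgam \asymp 1$ make the $\sigma^2$-driven term negligible compared to both principal terms uniformly over $0 \leq \lambda \leq \tsig^2/(n\zeta_2)$ and $\tau \in [0,1]$. A parallel computation gives $\mathcal{L}_{\mathrm{pre}}(\hat{\theta}_1) = O\bigl(\sigma^2 k^*/n + \sigma^2\,\mathrm{tr}\{(XX^T)^{-2}X\varSigma X^T\} + \zeta_1\,\mathrm{tr}\{\varSigma\}\bigr)$, which is $o(1)$ under Condition~\ref{cond:order}(i)(iii), while the other $\mathcal{L}_{\mathrm{pre}}$ risks inherit the full $\tsig^2$ noise term, establishing the $\ll$-claims.

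The main obstacle lies in jointly controlling the residual bias across the two independent random matrices $X$ and $\TX$: $\hat{\theta}_1$ is measurable with respect to $(X,\epsilon,\alpha_1)$, while $P_{\TX,\lambda}$ and $\tvar$ live on the $\TX$-side, but Condition~\ref{cond:eigen} couples $\varSigma$ and $\tvar$ through their shared eigenframe. I plan to handle this by splitting the $p$-dimensional ambient space into the $k^*$-dimensional spike subspace (where $\lambda_i = \tilde\lambda_i = 1$) and its orthogonal complement. On the spike part, $k^* = O(1)$ makes elementary operator-norm bounds sufficient. On the tail, the independence of $X$ and $\TX$ together with sub-Gaussian concentration of $\TX\TX^T$ around $n\tvar$ (restricted to the tail) delivers an operator-norm estimate of the form $(I - \tau P_{\TX,\lambda})\tvar(I - \tau P_{\TX,\lambda}) \preceq O(\tsig^2/(n\zeta_2))\tvar + O(\lambda) I$ with high probability. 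Plugging this into the residual bias and combining with the $\hat{\theta}_1$ covariance bounds from step two delivers the required $o$-order gap, completing the lemma.
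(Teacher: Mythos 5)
Your residual decomposition
\begin{align*}
\hat{\theta}^\tau_\lambda - \ttheta = (I - \tau P_{\TX,\lambda})(\hat{\theta}_1 - \ttheta) + \tau\TX^T(\TX\TX^T + n\lambda I)^{-1}\tep
\end{align*}
is correct and is a genuinely cleaner way to organize what the paper does by brute-force expansion in Appendix~\ref{pf:decom}: integrating out $\tep$ and $\alpha_2$ in two stages produces exactly the two leading $\tsig^2$- and $\zeta_2$-terms of Lemma~\ref{lem:approx}, and the leftover $(\hat{\theta}_1 - \theta_c)^T(I-\tau P_{\TX,\lambda})\tvar(I-\tau P_{\TX,\lambda})(\hat{\theta}_1-\theta_c)$ is precisely the collection of $\theta_c$-, $\zeta_1$-, and $\sigma^2$-terms that the paper bounds term by term in Appendix~\ref{pf:bound}. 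The underlying machinery (Bartlett-style eigenvalue concentration, spike-plus-bulk structure of Condition~\ref{cond:eigen}, trace estimates via Woodbury) is the same in both; you have essentially re-derived the paper's decomposition by a slicker route.

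However, there is a genuine gap in your control of the $\theta_c$-piece of the residual. You write that ``$\|\theta_c\|_2 = O(1)$ kills the deterministic piece,'' but that argument fails: the piece is $\theta_c^T(I-P_X)(I-\tau P_{\TX,\lambda})\tvar(I-\tau P_{\TX,\lambda})(I-P_X)\theta_c$, and the operator norm of the sandwiched matrix is $O(1)$ (the spike eigenvalues of $\tvar$ equal $1$), so a naive bound gives $O(\|\theta_c\|_2^2) = O(1)$, which is \emph{larger} than the target $\mathcal{L}_{\mathrm{ft}} \asymp \zeta_2 = o(1)$. What actually makes this piece small is the algebraic cancellation $(I-P_X)X^T = 0$, which lets you replace $\varSigma$ by $\varSigma - \frac{1}{n}X^TX$ inside the quadratic form, after which Lemma~\ref{lem_eigenx} (Koltchinskii-type concentration) gives $\|\varSigma - \frac1n X^TX\|_2 \lesssim n^{-(1-\xi)/2} \ll \zeta_2$, and $\|\tvar - \varSigma\|_2 \le \max\{\gamma,\tgam\} \ll \zeta_2$ handles the residual mismatch. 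Without invoking this cancellation the term does not become negligible, so the argument as stated does not close. Two smaller issues: in your expansion of $\hat{\theta}_1 - \theta_c$ the coefficient on $\alpha_1$ is $P_X$, not $(P_X-I)$, so the corresponding conditional covariance is $\zeta_1 P_X$ rather than $\zeta_1(I-P_X)$ (this does not change the conclusion since either is $\preceq \zeta_1 I$, but it is a slip); and the proposed operator-norm estimate $(I-\tau P_{\TX,\lambda})\tvar(I-\tau P_{\TX,\lambda}) \preceq O(\tsig^2/(n\zeta_2))\tvar + O(\lambda)I$ in the third paragraph is not what the paper uses, and it is unclear how you would prove it or why it would suffice --- the paper instead bounds the individual traces $\zeta_1\mathrm{tr}\{\cdot\}$ and $\sigma^2\mathrm{tr}\{\cdot\}$ directly via Lemmas~\ref{lem_eigen}, \ref{lem_subspacenorm}, \ref{lem_stnorm} and then compares orders under Condition~\ref{cond:order}.
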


 \begin{proof}
 See details in Appendix~\ref{pf:decom} and \ref{pf:bound}.    
 \end{proof}

Using Lemma~\ref{lem:approx}, the terms $\mathcal{L}_{\mathrm{ft}}$ and $\mathcal{L}_{\mathrm{pre}} + \mathcal{L}_{\mathrm{ft}}$ in Theorem~\ref{thm} can be primarily determined by two key factors: the terms related to $\zeta_2$ (the difference between the pre-training and fine-tuning tasks), which can be denoted as the ``bias'' terms, and the terms related to $\tsig$ (the variance of data noise in the fine-tuning task), which can be denoted as the ``variance'' term.
Insufficient fine-tuning leads to a large ``bias'' and small ``variance'', while overadaptation in fine-tuning results in large ``variance'' and small ``bias''. By effectively balancing this trade-off, model ensembling can achieve superior performance.

\subsection{Estimator Performances Comparison}
\paragraph{The effectiveness of regular.} After obtaining the results in Lemma~\ref{lem:approx}, we could compare the excess risk on different estimators. The results
\begin{equation}\label{eqf:com1}
    \mathcal{L}_{\mathrm{ft}}(\hat{\theta}_2) < \mathcal{L}_{\mathrm{ft}}(\hat{\theta}_1)
\end{equation}
could be induced directly from Condition~\ref{cond:order}. For the excess risk of ridge estimator, taking derivative with respect to $\lambda$ on its approximation
\begin{equation*}
    \zeta_2 \mathrm{tr}\{ (I - P_{\TX, \lambda})^2 \tvar \} + \tsig^2 \mathrm{tr}\{ (\TX \TX^T + n \lambda I)^{-2} \TX \tvar \TX^T\} ,
\end{equation*}
we find that such excess risk will decrease with the increase of $\lambda$ within the range $0 \le \lambda \le \tsig^2 / (n \zeta_2)$,  which implies 
\begin{equation}\label{eqf:com2}
    \mathcal{L}_{\mathrm{ft}}(\hat{\theta}_\lambda) < \mathcal{L}_{\mathrm{ft}}(\hat{\theta}_2), \quad \forall 0 < \lambda \le 2 \tsig^2 / (n \zeta_2).
\end{equation}

\paragraph{Forgetting mitigation with model ensemble.} The analysis is similar. With the results in Lemma~\ref{lem:approx}, $\mathcal{L}_{\mathrm{pre}}(\hat{\theta}_\lambda) + \mathcal{L}_{\mathrm{ft}}(\hat{\theta}_\lambda)$ is mainly dominated by
\begin{align*}
 &\quad \zeta_2 \mathrm{tr}\{ (I - P_{\TX, \lambda})^2 \tvar \} + \zeta_2 \mathrm{tr} \{ P_{\TX, \lambda}^2 \tvar \}\\
 & + 2 \tsig^2 \mathrm{tr}\{ (\TX \TX^T + n \lambda I)^{-2} \TX \tvar \TX^T\} .
\end{align*}
Taking derivative with respect to $\lambda$, we find that such term will decrease while $0 \le \lambda \le 2 \tsig^2 / (n \zeta_2)$, which implies that
 \begin{equation}\label{eq:comp1}
 \begin{aligned}
     \mathcal{L}_{\mathrm{pre}}(\hat{\theta}_\lambda) + \mathcal{L}_{\mathrm{ft}}(\hat{\theta}_\lambda)  < \mathcal{L}_{\mathrm{pre}}(\hat{\theta}_2) &+ \mathcal{L}_{\mathrm{ft}}(\hat{\theta}_2),\\
     & \forall 0 < \lambda \le 2 \tsig^2 / (n \zeta_2).
 \end{aligned}
\end{equation}     
And considering the benefits of model ensemble, for any fixed $\lambda \in (0, 2 \tsig^2 / (n \zeta_2) ]$, $\mathcal{L}_{\mathrm{pre}}(\hat{\theta}_\lambda^\tau) + \mathcal{L}_{\mathrm{ft}}(\hat{\theta}_\lambda^\tau)$ is mainly dominated by 
  \begin{align*}
  & \quad  \zeta_2 \mathrm{tr}\{ (I - \tau P_{\TX, \lambda})^2 \tvar \}\\
  &+ \tau^2 \zeta_2 \mathrm{tr} \{ (\TX \TX^T + n \lambda I)^{-2} \TX \TX^T \TX \tvar \TX^T \}  \\
  &+ 2 \tau^2 \tsig^2 \mathrm{tr}\{ (\TX \TX^T + n \lambda I)^{-2} \TX \tvar \TX^T\}  . 
\end{align*}  
While we take derivative with respect to $\tau$, such term will increase with the increasing of $\tau$ as $\tau'(\lambda) / 2 \le \tau \le 1$ . So we have
 \begin{equation}\label{eq:comp2}
 \begin{aligned}
     \mathcal{L}_{\mathrm{pre}}(\hat{\theta}^{\tau}_\lambda) + \mathcal{L}_{\mathrm{ft}}(\hat{\theta}^{\tau}_\lambda) <  \mathcal{L}_{\mathrm{pre}}(\hat{\theta}_{\lambda}) &+ \mathcal{L}_{\mathrm{ft}}(\hat{\theta}_{\lambda}), \\
     & \forall \tau'(\lambda) / 2 \le \tau < 1.
 \end{aligned} 
\end{equation}

\paragraph{Improved fine-tuning performance on ensemble.}
Finally, for any fixed $\lambda$ with range $[0, \tsig^2 / (n \zeta_2))$, we could take derivative with respect to $\tau$ on the approximated excess risk of ensemble estimator:
\begin{equation*}
     \zeta_2 \mathrm{tr}\{ (I - \tau P_{\TX, \lambda})^2 \tvar \}  + \tau^2 \tsig^2 \mathrm{tr}\{ (\TX \TX^T + n \lambda I)^{-2} \TX \tvar \TX^T\},
\end{equation*}
we found such excess risk will increase with the increasing of $\tau$ while $\tau'(\lambda) \le \tau \le 1$, so we have
\begin{equation}\label{eqf:com3}
    \mathcal{L}_{\mathrm{ft}}(\hat{\theta}_\lambda^\tau) < \mathcal{L}_{\mathrm{ft}}(\hat{\theta}_\lambda), 
\end{equation}
while $0 \le \lambda < \tsig^2 / (n \zeta_2)$ and $\tau'(\lambda) \le \tau \le 1$.

Combing all of the results in \eqref{eqf:com1}, \eqref{eqf:com2}, \eqref{eq:comp1}, \eqref{eq:comp2} and \eqref{eqf:com3}, we could finish the proof of Theorem~\ref{thm}.

\section{Conclusion and Discussion}

In this work, we bridge the gap in understanding how ensembling pre-trained and fine-tuned models controls overadaptation, as well as enhancing both downstream performance and mitigating forgetting on upstream tasks. Motivated by surprising empirical findings showing that ensembling not only improves fine-tuning outcomes but also preserves pre-trained knowledge, we provide a formal theoretical analysis within an over-parameterized linear setting.
Our results reveal that ensembling mitigates overadaptation by effectively balancing the trade-off between ``bias'' and ``variance'' errors in excess risk—an issue that regularization alone may not fully resolve. This theoretical insight is further supported by experiments and simulations, which closely align with our predictions.

Our results not only offer a deeper theoretical understanding of ensembling in the context of pre-trained models but also provide practical guidance for enhancing the performance of fine-tuning strategies. This work lays a foundation for future research into refining ensembling methods and exploring their application to broader machine learning tasks.

\section*{Acknowledgements}

This work was partially supported by NSF grant No. 2416897 and ONR grant No. N000142512318.

\section*{Impact Statement}

This paper contributes foundational research in the areas of model ensemble within the machine learning community. Our primary goal is to advance the theoretical understanding for the efficient performance of ensemble methods. Given the scope of this research, we do not anticipate immediate ethical concerns or direct societal consequences.
Therefore, we believe there are no specific ethical considerations or immediate societal impacts to be emphasized in the context of this work.

\nocite{langley00}

\bibliography{myrefs}
\bibliographystyle{icml2025}

\newpage
\appendix
\onecolumn

\section{Experimental Details}~\label{appendix:exp_details}

\subsection{Hyperparameter Searching}

We have conducted hyper-parameter searching on the fine-tuning process and the ensemble process. We fine-tune the models with a global batch size of 16, and an epoch of 1 using Adam optimizer on 8 GPUs. 
To select a suitable learning rate and penalty, we search the learning rate on $\{5 \times 10^{-6},2 \times 10^{-6},10^{-6}\}$, and penalty coefficient $\lambda$ on $\{10^{-2},5 \times 10^{-3},2 \times 10^{-3},10^{-3}\}$.
We also search the ensemble weight $\tau$ uniformly on $\{0.1,0.2,0.3,0.4,0.5,0.6,0.7,0.8,0.9\}$.

To preliminarily validate the performance and choose the hyper-parameter, we have a carefully curated instruction-following dataset. The validation dataset consists of multi-turn conversations between the user and the assistant, covering writing, reasoning, coding, math, STEM, and humanities topics. We prompt GPT4 using the prompt \textit{``Help me generate 3 sets of 2-turn instructions to evaluate the \{category\} ability of LLMs. The instructions for the second turn need to be highly relevant to the first turn. The following is an example.\textbackslash n\textbackslash n\textbackslash n EXAMPLE:\{example\}\textbackslash n TURN1:\{turn1\}\textbackslash n TURN2:\{turn2\}\textbackslash n''}, where \textit{\{category\}} corresponds to one of the 8 categories in MT-Bench and \textit{\{example\}} is one example from MT-Bench. In this way, we obtain a validation dataset that is highly similar to MT-Bench. Specifically, our validation dataset contains 600 samples, evenly distributed across the 8 categories in MT-Bench. We then represent the performance using the loss calculated on the validation dataset. 

\subsection{Implementation} We implemented our fine-tuning code based on Huggingface Transformers\footnote{https://github.com/huggingface/transformers} and Accelerate\footnote{https://github.com/huggingface/accelerate} libraries, where Fully Sharded Data Parallel~\cite{zhao2023pytorch} is utilized for model parallel training and acceleration. Our training and evaluation are conducted on 8 NVIDIA H100 GPUs.

\subsection{Extension to LoRA}

\begin{table*}[h!]
\centering
\begin{tabular}{lccc}
\toprule
\textbf{Methods} & \textbf{MT-Bench} & \textbf{Commonsense-QA} & \textbf{MMLU} \\
\midrule
DiffNorm-Penalty + Ensemble & 5.85 & \textbf{74.49} & 63.97 \\
LoRA                        & 5.83 & 73.71          & 65.29 \\
LoRA + Ensemble             & \textbf{5.99} & 73.87          & \textbf{65.31} \\
\bottomrule
\end{tabular}
\caption{Performance comparison of different LoRA-based methods.}
\label{table:lora}
\end{table*}

We also conduct experiments with LoRA \citep{hu2021loralowrankadaptationlarge}. Specifcally, we set $r=32$, $\alpha=32$, $dropout=0.01$, and target modules to \textit{q-projection, v-projection}. The results are shown in Table~\ref{table:lora}. We first observe that LoRA can mitigate overadaptation as well but tends to forget more in certain benchmarks, such as Commonsense-QA in comparison with DiffNorm-Penalty. On top of that, it is observed that further ensembling with the pre-trained model yields additional performance improvement in all benchmarks. Such results highlight the benefits of ensemble methods.

\subsection{Variance of MT-Bench}

We also examine the variance of the MT-Bench by fine-tuning the model 5 trials with different seeds, under Norm-Penalty and ensembling with $\tau=0.8$. The results are shown in Table~\ref{table:variance}. Overall, we observe a standard variance of 0.06, which is sufficiently small in comparison to the score gaps in Table~\ref{MT-Bench}. 

\begin{table*}[h!]
\centering
\begin{tabular}{cccccc|c}
\toprule
\textbf{Trial 1} & \textbf{Trial 2} & \textbf{Trial 3} & \textbf{Trial 4} & \textbf{Trial 5} & & \textbf{Standard Deviation} \\
\midrule
5.84 & 5.91 & 5.95 & 5.96 & 6.01 & & \textbf{0.06} \\
\bottomrule
\end{tabular}
\caption{Variance estimation of ensembled Norm-Penalty with $\tau = 0.8$ on LLaMA-3-8b.}
\label{table:variance}
\end{table*}

\section{Empirical Validation for Theorem~\ref{thm}}\label{simu}

To validate Theorem~\ref{thm}, we first utilize artificial datasets, where we construct pre-trained and fine-tuned datasets based on 4 diverse groups of parameters, respectively. Specifically, consider 4 cases for different eigen-value parameter $\gamma$ and the size of pre-trained set $n$: (a) $\gamma=n^{-1.0},~n=40$; (b) $\gamma=n^{-1.5},~n=40$; (c) $\gamma=n^{-1.0},~n=40$; (d) $\gamma=n^{-1.5},~n=60$. For each case, we set data dimension as $p=10000$, the size of test data as $1000$. We generate a pre-train dataset and a fine-tune dateset from two normal distributions $\cN(0,\Sigma_1)$ and $\cN(0,\Sigma_2)$, respectively, where $\Sigma_1$ has eigenvalues $\lambda_1=1,\lambda_2=\ldots=\lambda_p=n^{-1.5}$, and $\Sigma_2$ has eigenvalues $\lambda_1=1,\lambda_2=\ldots=\lambda_n=n^{-1},\lambda_{n+1}=\ldots=\lambda_p=0$. The ground-truth parameter for the pre-train and fine-tune $\theta_c+\alpha_1$ and $\theta_c+\alpha_2$, where $\|\theta_c\|_2=1$, $\alpha_1\sim\cN(0,0.01^2I_p)$, and $\alpha_2\sim\cN(0,0.1^2I_p)$. The variance of data noise is $0.1^2$. After obtaining the pre-trained estimator $\hat{\theta}_1$, we fine-tune the estimator on the other dataset to compute the ``min-norm" estimator $\hat{\theta}_2$ according to \eqref{eq:ets_2}, and the estimator with regularizer $\hat{\theta}_\lambda$. We tune the hyper-parameter $\lambda$ within a range and choose the $\lambda$ that achieves the best excess risk, which is $\lambda=0.0001$. 
Finally, we calculate a group of ensemble estimators $\hat{\theta}_{\lambda}^{\tau}$ with $\tau$ ranging from $0$ to $1$, and plot the curve of the error on the pretrain task versus the error in the fine-tuning task for the group of $\hat{\theta}_{\lambda}^{\tau}$ in four cases in comparison with the fine-tuned estimator with difference $\lambda$, the ``min-norm" estimator and the $\hat{\theta}_\lambda$ in Figure \ref{fig:toy}. According to Figure \ref{fig:toy}, the performance curve for the ensemble estimator $\hat{\theta}_{\lambda}^{\tau}$ achieves better trade-off on the two tasks compared to fine-tuning estimators, which aligns with Theorem \ref{thm}.

Additionally, to validate the performance only on the fine-tuned task, we also consider the four settings mentioned above. To simulate the realistic situation that it is difficult to find the best $\lambda$, and we can only tune it into a small range, we take $\lambda=1e-7$. Finally, we calculate a group of ensemble estimators $\hat{\theta}_{\lambda}^{\tau}$ with $\tau$ ranging from $0$ to $1$, and plot the curve of excess risk for the group of $\hat{\theta}_{\lambda}^{\tau}$ in four cases in comparison with the pre-trained estimator, the ``min-norm" estimator and the $\hat{\theta}_\lambda$ in Figure \ref{fig:toy2}. The figure implies that if we tune the ensemble parameter $\tau$ to the optimal, the ensemble estimator $\hat{\theta}_{\lambda}^{\tau}$ performs best, and then the performance decreases in the order of the estimator with ridge regularization $\hat{\theta}_\lambda$, the ``min-norm" estimator $\hat{\theta}_2$ and the pre-trained estimator $\hat{\theta}_1$, which aligns with Theorem \ref{thm}.

\begin{figure}[h]  
    \centering
    \begin{subfigure}{0.4\textwidth}  
        \centering
        \includegraphics[width=\linewidth]{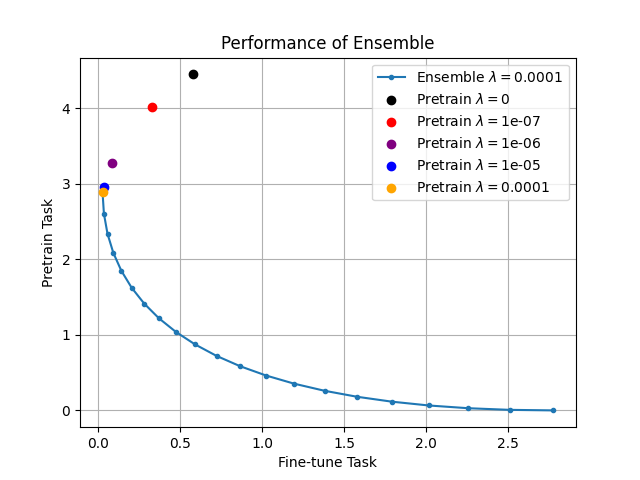}  
        \caption{$\gamma=0.025,n=40$}
        \label{fig:subfig1}
    \end{subfigure}
    \begin{subfigure}{0.4\textwidth}
        \centering
        \includegraphics[width=\linewidth]{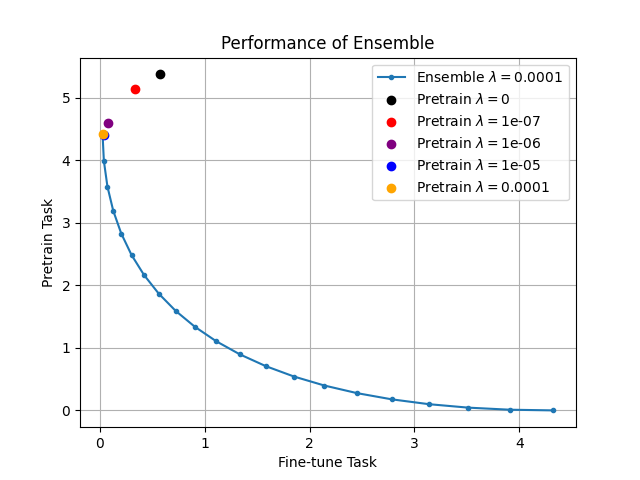}
        \caption{$\gamma=0.004,n=40$}
        \label{fig:subfig2}
    \end{subfigure}
    
    \begin{subfigure}{0.4\textwidth}
        \centering
        \includegraphics[width=\linewidth]{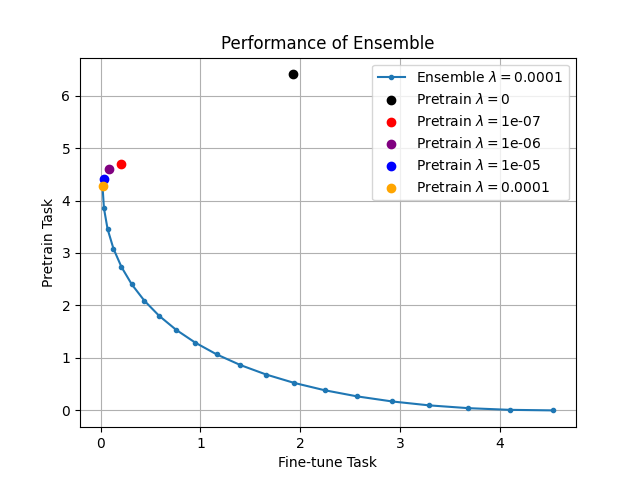}
        \caption{$\gamma=0.017,n=60$}
        \label{fig:subfig3}
    \end{subfigure}
    \begin{subfigure}{0.4\textwidth}
        \centering
        \includegraphics[width=\linewidth]{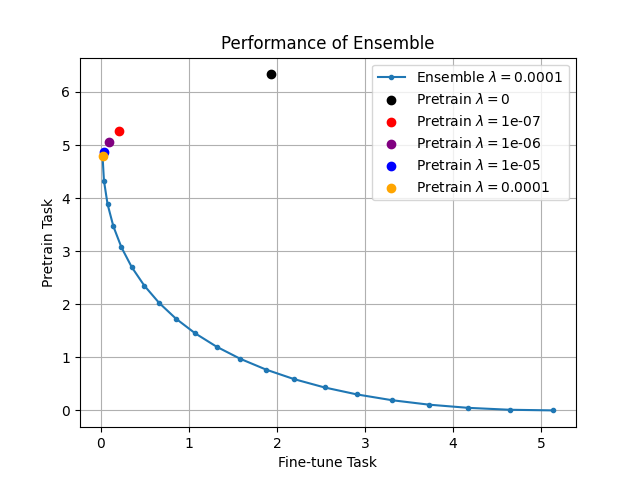}
        \caption{$\gamma=0.0022,n=60$}
        \label{fig:subfig4}
    \end{subfigure}

    \caption{Performance of Ensemble with dimension $p=10^{4}$}
    \label{fig:toy}
\end{figure}

\begin{figure}[h]  
    \centering
    \begin{subfigure}{0.4\textwidth}  
        \centering
        \includegraphics[width=\linewidth]{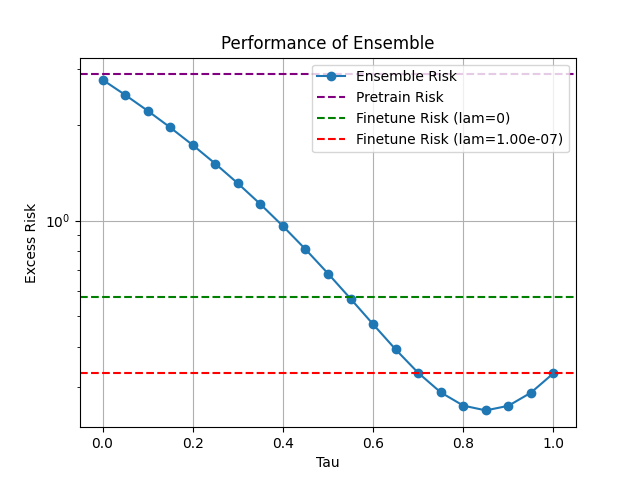}  
        \caption{$\gamma=0.025,n=40$}
        \label{fig:subfig21}
    \end{subfigure}
    \begin{subfigure}{0.4\textwidth}
        \centering
        \includegraphics[width=\linewidth]{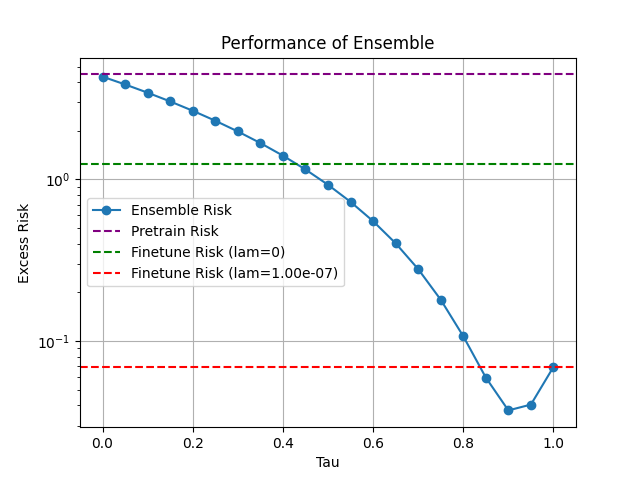}
        \caption{$\gamma=0.004,n=40$}
        \label{fig:subfig22}
    \end{subfigure}
    
    \begin{subfigure}{0.4\textwidth}
        \centering
        \includegraphics[width=\linewidth]{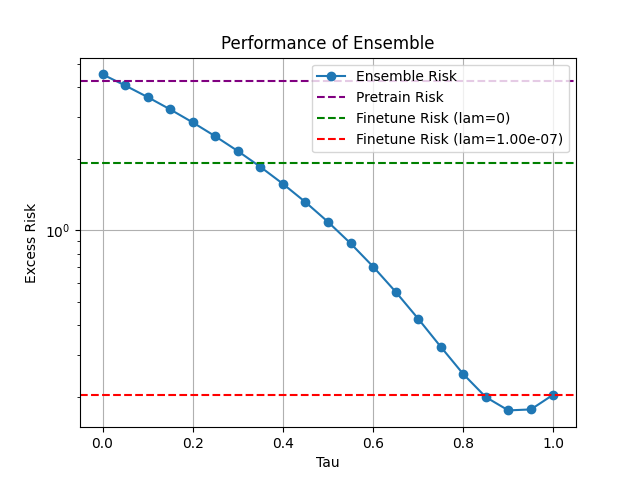}
        \caption{$\gamma=0.017,n=60$}
        \label{fig:subfig23}
    \end{subfigure}
    \begin{subfigure}{0.4\textwidth}
        \centering
        \includegraphics[width=\linewidth]{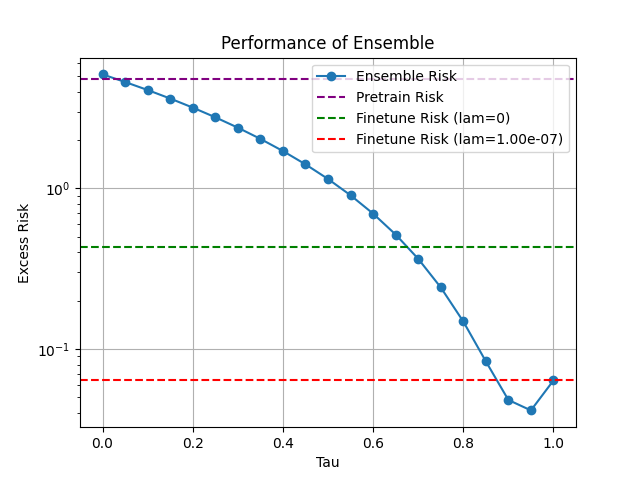}
        \caption{$\gamma=0.0022,n=60$}
        \label{fig:subfig24}
    \end{subfigure}

    \caption{Performance of Ensemble with dimension $p=10^{4}$}
    \label{fig:toy2}
\end{figure}

\clearpage
\newpage

\section{Proofs}\label{pf:thm}

\subsection{Notation and Constant List}
Before the main proof process, we denote several corresponding constants and notations in Table \ref{tab:const}:
\begin{table}[h]
    \centering
    \footnotesize
    \begin{tabular}{c|c}
    \toprule
         Symbol & Value / Expression\\
         \midrule
          $c'$ & $\max\{ 2, (1 + 16 \ln 3 \cdot \sigma_x^2 \cdot 54e) 32 \ln 3 \cdot \sigma_x^2 \cdot 54e\}$ \\
          \midrule
          $c$ & $> 256 \cdot (162e)^4 \sigma_x^4$ \\
          \midrule
          $c_1$ & $\max\{ 2 c' , (1 / c' - c')^{-1} \}$  \\  
         \midrule
         $c_2$ & $8(162e)^2 \sigma_x^2$ \\
         \midrule
         $c_3$ & $2$\\
         \midrule
         $\lambda_k$ & the $k$-th eigenvalue of matrix $\varSigma$, i.e, $\mu_k(\varSigma)$ \\
         \midrule
         $\tilde{\lambda}_k$ & the $k$-th eigenvalue of matrix $\tvar$, i.e, $\mu_k(\tvar)$ \\
         \midrule
         $r_k$ & $\frac{\sum_{j > k} \lambda_j}{\lambda_{k+1}}$\\
         \bottomrule
    \end{tabular}
    \caption{Constant and Notation List}
    \label{tab:const}
\end{table}

The definition of $r_k$ is the same as the definition in \citet{bartlett2020benign}. In \citet{bartlett2020benign}, the critical index $s^*(b)$ for a given $b>0$ is defined as
\begin{equation}\label{eq:defk}
  s^*(b) := \inf \{ k \geq 0 : r_k \geq bn \} .
\end{equation}
In our data settings, without lose of generality, we choose $b = 1$, and obtain the critical index $s^*(1) = k^*$ in $\varSigma$ as well as $\tvar$.

\subsection{Excess Risk Decomposition}\label{pf:decom}
The detailed analysis is start with a composition for excess risks. First, the estimators mentioned in main text could be expressed as:
\begin{align*}
 \hat{\theta}_1 &= X^T(XX^T)^{-1} (X \theta + \epsilon) = X^T (XX^T)^{-1} (X \theta_c + X \alpha_1 + \epsilon),\\
\hat{\theta}_2 &= \hat{\theta}_1 + \TX^T(\TX \TX^T)^{-1}(\TX \ttheta - \TX \hat{\theta}_1 + \tep)\\
&= [I - \TX^T (\TX \TX^T)^{-1} \TX] X^T (XX^T)^{-1} (X\theta_c + X \alpha_1 + \epsilon) + \TX^T (\TX \TX^T)^{-1} \TX (\theta_c + \alpha_2)  + \TX (\TX \TX^T)^{-1} \tep  \\
&= [X^T (XX^T)^{-1} X + \TX^T (\TX \TX^T)^{-1} \TX  - \TX^T (\TX \TX^T)^{-1} \TX X^T (XX^T)^{-1} X ] \theta_c \\
& \quad + [I - \TX^T (\TX \TX^T)^{-1} \TX] X^T (XX^T)^{-1} X \alpha_1\\
& \quad + \TX^T(\TX \TX^T)^{-1} \TX \alpha_2  + \TX (\TX \TX^T)^{-1} \tep + [I - \TX^T (\TX \TX^T)^{-1} \TX] X^T (XX^T)^{-1} \epsilon,\\
\hat{\theta}_\lambda &= \hat{\theta}_1 + \TX^T(\TX \TX^T + n \lambda I)^{-1} (\TX \tilde{\theta} - \TX \hat{\theta}_1 + \tep)\\
&= [I - \TX^T (\TX \TX^T + n \lambda I)^{-1}\TX] X^T (XX^T)^{-1} (X \theta_c + X \alpha_1 + \epsilon) \\
& \quad + \TX^T(\TX \TX^T + n \lambda I)^{-1} \TX (\theta_c + \alpha_2) + \TX^T (\TX\TX^T + n \lambda I)^{-1} \tep\\
&= [X^T(XX^T)^{-1} X + \TX^T(\TX \TX^T + n \lambda I)^{-1} \TX - \TX^T(\TX \TX^T + n \lambda I)^{-1} \TX X^T (XX^T)^{-1}X] \theta_c\\
& \quad  + [I - \TX^T(\TX \TX^T + n \lambda I)^{-1} \TX]X^T(XX^T)^{-1} X \alpha_1  + \TX^T(\TX \TX^T + n \lambda I)^{-1} \TX \alpha_2 \\
& \quad + [I - \TX^T(\TX \TX^T + n \lambda I)^{-1}\TX ] X^T (XX^T)^{-1} \epsilon + \TX^T (\TX \TX^T + n \lambda I)^{-1} \tep,\\
\hat{\theta}_{\lambda}^\tau &= (1 - \tau) \hat{\theta}_1 + \tau \hat{\theta}_\lambda \\
&= [X^T(XX^T)^{-1}X + \tau \TX^T(\TX \TX^T + n \lambda I)^{-1} \TX - \tau \TX^T(\TX \TX^T + n \lambda I)^{-1} \TX X^T(XX^T)^{-1}X] \theta_c \\
& \quad + [I - \tau \TX^T(\TX \TX^T + n \lambda I)^{-1}\TX] X^T (XX^T)^{-1} X \alpha_1 + \tau \TX^T(\TX\TX^T + n \lambda I)^{-1} \TX \alpha_2\\
& \quad + [I - \tau \TX^T(\TX \TX^T + n \lambda I)^{-1}\TX] X^T (XX^T)^{-1} \epsilon + \tau \TX^T (\TX\TX^T + n \lambda I)^{-1} \tep.
\end{align*}
Then focusing on the excess risks on Task 1 and Task 2, i.e, 
\begin{equation*}
 \mathcal{L}_{\mathrm{pre}}(\hat{\theta}) := \mathbb{E}_{x, \epsilon, \tep, \alpha_1, \alpha_2} (\hat{\theta} - \theta)^T \varSigma (\hat{\theta} - \theta), \quad
    \mathcal{L}_{\mathrm{ft}}(\hat{\theta}) := \mathbb{E}_{x, \epsilon, \tep, \alpha_1, \alpha_2} (\hat{\theta} - \ttheta)^T \tvar (\hat{\theta} - \ttheta),
\end{equation*}
we have the following results:
\begin{align*}
\mathcal{L}_{\mathrm{pre}}(\hat{\theta}_1) &= \theta_c^T [I - X^T(XX^T)^{-1}X] \varSigma [I - X^T(XX^T)^{-1} X] \theta_c + \zeta_1 \mathrm{tr}\{ [ I - X^T (XX^T)^{-1} X] \varSigma \} \\
& \quad + \sigma^2 \mathrm{tr}\{ (XX^T)^{-2} X \varSigma X^T\}\\
\mathcal{L}_{\mathrm{pre}}(\hat{\theta}_2\} &= \theta_c^T [I - X^T (XX^T)^{-1}X][I - \TX^T (\TX \TX^T)^{-1} \TX] \varSigma [I - \TX^T(\TX\TX^T)^{-1} \TX] [I - X^T(XX^T)^{-1} X] \theta_c \\
& \quad + \zeta_1 \mathrm{tr} \{ [I - X^T (XX^T)^{-1} X + X^T(XX^T)^{-1}X \TX^T(\TX \TX^T)^{-1}\TX ] \varSigma  \\
& \quad \quad \quad \quad \quad \quad [I - X^T (XX^T)^{-1} X +\TX^T(\TX \TX^T)^{-1}\TX X^T(XX^T)^{-1}X  ] \} \\
& \quad + \zeta_2 \mathrm{tr}\{ (\TX \TX^T)^{-1} \TX \varSigma \TX^T \} \\
& \quad + \sigma^2 \mathrm{tr}\{ (XX^T)^{-2}X [I - \TX^T(\TX\TX^T)^{-1}\TX] \varSigma [I - \TX^T(\TX\TX^T)^{-1}\TX] X^T\} + \tsig^2 \mathrm{tr}\{ (\TX \TX^T)^{-2} \TX \varSigma \TX^T\} \\
\mathcal{L}_{\mathrm{pre}}(\hat{\theta}_\lambda) &= \theta_c^T [I - X^T (XX^T)^{-1}X][I - \TX^T (\TX \TX^T + n \lambda I)^{-1} \TX] \varSigma [I - \TX^T(\TX\TX^T + n \lambda I)^{-1} \TX] [I - X^T(XX^T)^{-1} X] \theta_c \\
& \quad + \zeta_1 \mathrm{tr} \{ [I - X^T (XX^T)^{-1} X + X^T(XX^T)^{-1}X \TX^T(\TX \TX^T + n \lambda I)^{-1}\TX ] \varSigma \} \\
& \quad \quad \quad \quad \quad \quad [I - X^T (XX^T)^{-1} X +\TX^T(\TX \TX^T + n  \lambda I)^{-1}\TX X^T(XX^T)^{-1}X  ] \} \\
&  \quad + \zeta_2 \mathrm{tr}\{ [\TX^T (\TX \TX^T + n \lambda I)^{-1} \TX]^2 \varSigma \} \\
& \quad + \sigma^2 \mathrm{tr}\{ (XX^T)^{-2}X [I - \TX^T(\TX\TX^T + n \lambda I)^{-1}\TX] \varSigma [I - \TX^T(\TX\TX^T + n \lambda I)^{-1}\TX] X^T\} \\
& \quad + \tsig^2 \mathrm{tr}\{ (\TX \TX^T + n \lambda I)^{-2} \TX \varSigma \TX^T\} \\
\mathcal{L}_{\mathrm{pre}}(\hat{\theta}^{\tau}_{\lambda}) &= \theta_c^T [I - X^T (XX^T)^{-1}X][I - \tau \TX^T (\TX \TX^T + n \lambda I)^{-1} \TX] \varSigma [I - \tau \TX^T(\TX\TX^T + n \lambda I)^{-1} \TX] [I - X^T(XX^T)^{-1} X] \theta_c \\
& \quad + \zeta_1 \mathrm{tr} \{ [I - X^T (XX^T)^{-1} X + \tau X^T(XX^T)^{-1}X \TX^T(\TX \TX^T + n \lambda I)^{-1}\TX ] \varSigma \} \\
& \quad \quad \quad \quad \quad \quad [I - X^T (XX^T)^{-1} X + \tau \TX^T(\TX \TX^T + n  \lambda I)^{-1}\TX X^T(XX^T)^{-1}X  ] \} \\
&  \quad + \zeta_2 \tau^2 \mathrm{tr}\{ (\TX^T(\TX \TX^T + n \lambda I)^{-1} \TX)^2 \varSigma \} \\
& \quad + \sigma^2 \mathrm{tr}\{ (XX^T)^{-2}X [I - \tau  \TX^T(\TX\TX^T + n \lambda I)^{-1}\TX] \varSigma [I - \tau  \TX^T(\TX\TX^T + n \lambda I)^{-1}\TX] X^T\} \\
& \quad + \tau^2 \tsig^2 \mathrm{tr}\{ (\TX \TX^T + n \lambda I)^{-2} \TX \varSigma \TX^T\},
\end{align*}
and
\begin{align*}
\mathcal{L}_{\mathrm{ft}}(\hat{\theta}_1) &= \theta_c^T [I - X^T(XX^T)^{-1}X] \tvar [I - X^T(XX^T)^{-1} X] \theta_c + \zeta_1 \mathrm{tr}\{ (XX^T)^{-1} X \tvar X^T\} + \zeta_2 \mathrm{tr}\{ \tvar \}\\
& \quad + \sigma^2 \mathrm{tr}\{ (XX^T)^{-2} X \tvar X^T\}\\
\mathcal{L}_{\mathrm{ft}}(\hat{\theta}_2\} &= \theta_c^T [I - X^T (XX^T)^{-1}X][I - \TX^T (\TX \TX^T)^{-1} \TX] \tvar [I - \TX^T(\TX\TX^T)^{-1} \TX] [I - X^T(XX^T)^{-1} X] \theta_c \\
& \quad + \zeta_1 \mathrm{tr}\{ (XX^T)^{-1}X [I - \TX^T(\TX\TX^T)^{-1}\TX] \tvar [I - \TX^T(\TX\TX^T)^{-1}\TX] X^T\} + \zeta_2 \mathrm{tr}\{ (I - \TX^T(\TX \TX^T)^{-1} \TX)^2 \tvar \} \\
& \quad + \sigma^2 \mathrm{tr}\{ (XX^T)^{-2}X [I - \TX^T(\TX\TX^T)^{-1}\TX] \tvar [I - \TX^T(\TX\TX^T)^{-1}\TX] X^T\} + \tsig^2 \mathrm{tr}\{ (\TX \TX^T)^{-2} \TX \tvar \TX^T\} \\
\mathcal{L}_{\mathrm{ft}}(\hat{\theta}_\lambda) &= \theta_c^T [I - X^T (XX^T)^{-1}X][I - \TX^T (\TX \TX^T + n \lambda I)^{-1} \TX] \tvar [I - \TX^T(\TX\TX^T + n \lambda I)^{-1} \TX] [I - X^T(XX^T)^{-1} X] \theta_c \\
& \quad + \zeta_1 \mathrm{tr}\{ (XX^T)^{-1}X [I - \TX^T(\TX\TX^T + n \lambda I)^{-1}\TX] \tvar [I - \TX^T(\TX\TX^T + n \lambda I)^{-1}\TX] X^T\}\\
&  \quad + \zeta_2 \mathrm{tr}\{ (I - \TX^T(\TX \TX^T + n \lambda I)^{-1} \TX)^2 \tvar \} \\
& \quad + \sigma^2 \mathrm{tr}\{ (XX^T)^{-2}X [I - \TX^T(\TX\TX^T + n \lambda I)^{-1}\TX] \tvar [I - \TX^T(\TX\TX^T + n \lambda I)^{-1}\TX] X^T\} \\
& \quad + \tsig^2 \mathrm{tr}\{ (\TX \TX^T + n \lambda I)^{-2} \TX \tvar \TX^T\} \\
\mathcal{L}_{\mathrm{ft}}(\hat{\theta}^{\tau}_{\lambda}) &= \theta_c^T [I - X^T (XX^T)^{-1}X][I - \tau \TX^T (\TX \TX^T + n \lambda I)^{-1} \TX] \tvar [I - \tau \TX^T(\TX\TX^T + n \lambda I)^{-1} \TX] [I - X^T(XX^T)^{-1} X] \theta_c \\
& \quad + \zeta_1 \mathrm{tr}\{ (XX^T)^{-1}X [I - \tau \TX^T(\TX\TX^T + n \lambda I)^{-1}\TX] \tvar [I - \tau \TX^T(\TX\TX^T + n \lambda I)^{-1}\TX] X^T\}\\
&  \quad + \zeta_2 \mathrm{tr}\{ (I - \tau \TX^T(\TX \TX^T + n \lambda I)^{-1} \TX)^2 \tvar \} \\
& \quad + \sigma^2 \mathrm{tr}\{ (XX^T)^{-2}X [I - \tau  \TX^T(\TX\TX^T + n \lambda I)^{-1}\TX] \tvar [I - \tau  \TX^T(\TX\TX^T + n \lambda I)^{-1}\TX] X^T\} \\
& \quad + \tau^2 \tsig^2 \mathrm{tr}\{ (\TX \TX^T + n \lambda I)^{-2} \TX \tvar \TX^T\}.
\end{align*}

\subsection{Term Bounds Estimation}\label{pf:bound}
After obtaining the expressions about different terms within the excess risk, we could estimate the related upper and lower bounds now.

\subsubsection{Terms corresponding to $\theta_c$}
All of the excess risks about $\hat{\theta}_1, \hat{\theta}_2, \hat{\theta}_\lambda$ and $\hat{\theta}^{\tau}_{\lambda}$ contain a term related to $\theta_c$. Here we could obtain their upper bounds for Task 1 as
\begin{align*}
& \quad \theta_c^T [I - X^T(XX^T)^{-1}X] \varSigma [I - X^T(XX^T)^{-1} X] \theta_c \\
&=  \theta_c^T [I - X^T(XX^T)^{-1}X] \left(\varSigma - \frac{1}{n} X^TX \right) [I - X^T(XX^T)^{-1} X] \theta_c ] \le \| \theta_c \|_2^2 \| \varSigma - \frac{1}{n} X^T X \|_2 \\
& \quad \theta_c^T [I - X^T (XX^T)^{-1}X][I - \TX^T (\TX \TX^T)^{-1} \TX] \varSigma [I - \TX^T(\TX\TX^T)^{-1} \TX] [I - X^T(XX^T)^{-1} X] \theta_c\\
& \le \theta_c^T [I - X^T(XX^T)^{-1}X] \varSigma [I - X^T(XX^T)^{-1} X] \theta_c \le \| \theta_c \|_2^2 \| \varSigma - \frac{1}{n} X^T X \|_2\\
& \quad \theta_c^T [I - X^T (XX^T)^{-1}X][I - \TX^T (\TX \TX^T + n \lambda I)^{-1} \TX] \varSigma [I - \TX^T(\TX\TX^T + n \lambda I)^{-1} \TX] [I - X^T(XX^T)^{-1} X] \theta_c\\
& \le \theta_c^T [I - X^T(XX^T)^{-1}X] \varSigma [I - X^T(XX^T)^{-1} X] \theta_c \le \| \theta_c \|_2^2 \| \varSigma - \frac{1}{n} X^T X \|_2\\
& \quad \theta_c^T [I - X^T (XX^T)^{-1}X][I - \tau \TX^T (\TX \TX^T + n \lambda I)^{-1} \TX] \varSigma [I - \tau \TX^T(\TX\TX^T + n \lambda I)^{-1} \TX] [I - X^T(XX^T)^{-1} X] \theta_c\\
& \le \theta_c^T [I - X^T(XX^T)^{-1}X] \varSigma [I - X^T(XX^T)^{-1} X] \theta_c \le \| \theta_c \|_2^2 \| \varSigma - \frac{1}{n} X^T X \|_2,
\end{align*}
and for Task 2 as:
\begin{align*}
& \quad \theta_c^T [I - X^T(XX^T)^{-1}X] \tvar [I - X^T(XX^T)^{-1} X] \theta_c \\
&=  \theta_c^T [I - X^T(XX^T)^{-1}X] \left( \tvar - \frac{1}{n} X^TX \right) [I - X^T(XX^T)^{-1} X] \theta_c ] \le \| \theta_c \|_2^2 \| \tvar - \frac{1}{n} X^T X \|_2 \\
& \quad \theta_c^T [I - X^T (XX^T)^{-1}X][I - \TX^T (\TX \TX^T)^{-1} \TX] \tvar [I - \TX^T(\TX\TX^T)^{-1} \TX] [I - X^T(XX^T)^{-1} X] \theta_c\\
& \le \theta_c^T [I - X^T(XX^T)^{-1}X] \tvar [I - X^T(XX^T)^{-1} X] \theta_c \le \| \theta_c \|_2^2 \| \tvar - \frac{1}{n} X^T X \|_2\\
& \quad \theta_c^T [I - X^T (XX^T)^{-1}X][I - \TX^T (\TX \TX^T + n \lambda I)^{-1} \TX] \tvar [I - \TX^T(\TX\TX^T + n \lambda I)^{-1} \TX] [I - X^T(XX^T)^{-1} X] \theta_c\\
& \le \theta_c^T [I - X^T(XX^T)^{-1}X] \tvar [I - X^T(XX^T)^{-1} X] \theta_c \le \| \theta_c \|_2^2 \| \tvar - \frac{1}{n} X^T X \|_2\\
& \quad \theta_c^T [I - X^T (XX^T)^{-1}X][I - \tau \TX^T (\TX \TX^T + n \lambda I)^{-1} \TX] \tvar [I - \tau \TX^T(\TX\TX^T + n \lambda I)^{-1} \TX] [I - X^T(XX^T)^{-1} X] \theta_c\\
& \le \theta_c^T [I - X^T(XX^T)^{-1}X] \tvar [I - X^T(XX^T)^{-1} X] \theta_c \le \| \theta_c \|_2^2 \| \tvar - \frac{1}{n} X^T X \|_2,
\end{align*}
which implies that we just need to upper bound the following two terms
\begin{equation*}
 \| \varSigma - \frac{1}{n} X^T X \|_2, \quad \| \tvar - \frac{1}{n} X^TX \|_2 \le \| \tvar - \varSigma \|_2 + \| \varSigma - \frac{1}{n} X^T X \|_2.
\end{equation*}
From Condition~\ref{cond:eigen} and \ref{cond:order}, we have
\begin{equation*}
    \| \tvar - \varSigma \|_2 \le \max\{ \gamma, \tgam \},
\end{equation*}
and induced by Lemma~\ref{lem_eigenx}, with probability at least $1 - e^{- n^\xi}$, we have
\begin{equation}\label{eqs:theta_c}
    \| \varSigma - \frac{1}{n} X^T X \|_2 \le n^{- \frac{1 - \xi}{2}}.
\end{equation}
Combining the two results above, we can also obtain
\begin{equation}\label{eq:theta_c}
    \| \tvar - \frac{1}{n} X^T X \|_2 \le \max\{ \gamma, \tgam \} + n^{- \frac{1 - \xi}{2}}.
\end{equation}

\subsubsection{Terms corresponding to $\zeta_1$}
For these terms corresponding to $\zeta_1$ in $\mathcal{L}_{\mathrm{pre}}(\hat{\theta}_2), \mathcal{L}_{\mathrm{pre}}(\hat{\theta}_\lambda)$ and $\mathcal{L}_{\mathrm{pre}}(\hat{\theta}^{\tau}_{\lambda})$, we could approximate their upper bounds as:
\begin{align*}
& \quad \zeta_1 \mathrm{tr} \{ [I - X^T (XX^T)^{-1} X + X^T(XX^T)^{-1}X \TX^T(\TX \TX^T)^{-1}\TX ] \varSigma  \\
& \quad \quad \quad \quad \quad \quad [I - X^T (XX^T)^{-1} X +\TX^T(\TX \TX^T)^{-1}\TX X^T(XX^T)^{-1}X  ] \} \le \zeta_1 \mathrm{tr} \{ \varSigma \},\\
& \quad  \zeta_1 \mathrm{tr} \{ [I - X^T (XX^T)^{-1} X + X^T(XX^T)^{-1}X \TX^T(\TX \TX^T + n \lambda I)^{-1}\TX ] \varSigma \} \\
& \quad \quad \quad \quad \quad \quad [I - X^T (XX^T)^{-1} X +\TX^T(\TX \TX^T + n  \lambda I)^{-1}\TX X^T(XX^T)^{-1}X  ] \} \le \zeta_1 \mathrm{tr} \{ \varSigma \},\\
& \quad  \zeta_1 \mathrm{tr} \{ [I - X^T (XX^T)^{-1} X + \tau X^T(XX^T)^{-1}X \TX^T(\TX \TX^T + n \lambda I)^{-1}\TX ] \varSigma \} \\
& \quad \quad \quad \quad \quad \quad [I - X^T (XX^T)^{-1} X + \tau \TX^T(\TX \TX^T + n  \lambda I)^{-1}\TX X^T(XX^T)^{-1}X  ] \} \le \zeta_1 \mathrm{tr} \{ \varSigma \}.
\end{align*}
Similarly, for the terms in
$\mathcal{L}_{\mathrm{ft}}(\hat{\theta}_2), \mathcal{L}_{\mathrm{ft}}(\hat{\theta}_\lambda)$ and $\mathcal{L}_{\mathrm{ft}}(\hat{\theta}^{\tau}_{\lambda})$, we can obtain their upper bounds as:
\begin{align*}
& \quad \zeta_1 \mathrm{tr}\{ (XX^T)^{-1}X [I - \TX^T(\TX\TX^T)^{-1}\TX] \tvar [I - \TX^T(\TX\TX^T)^{-1}\TX] X^T\}\\
& \le \zeta_1 \mathrm{tr}\{ (XX^T)^{-1} X \tvar X^T\},\\
& \quad \zeta_1 \mathrm{tr}\{ (XX^T)^{-1}X [I - \TX^T(\TX\TX^T + n \lambda I)^{-1}\TX] \tvar [I - \TX^T(\TX\TX^T + n \lambda I)^{-1}\TX] X^T\} \\
& \le \zeta_1 \mathrm{tr}\{ (XX^T)^{-1} X \tvar X^T\},\\
& \quad \zeta_1 \mathrm{tr}\{ (XX^T)^{-1}X [I - \tau \TX^T(\TX\TX^T + n \lambda I)^{-1}\TX] \tvar [I -  \tau \TX^T(\TX\TX^T + n \lambda I)^{-1}\TX] X^T\} \\
& \le \zeta_1 \mathrm{tr}\{ (XX^T)^{-1} X \tvar X^T\},
\end{align*}
which implies that for estimating the upper bounds of these terms, we just need to upper bound the following two terms:
\begin{equation*}
 \zeta_1 \mathrm{tr}\{ \varSigma \}, \quad  \zeta_1 \mathrm{tr}\{ (XX^T)^{-1} X \tvar X^T\}.
\end{equation*}
The first term could be expressed as
\begin{equation}\label{eqs:zeta_1}
    \zeta_1 \mathrm{tr}\{ \varSigma \} = \zeta_1 \left( k^* + p \gamma \right),
\end{equation}
and we just need to approximate the second term. 
First, recalling the decomposition $\varSigma = \sum_i \lambda_i e_i e_i^T$, we have
\begin{equation}\label{eq:note1}
    XX^T = \sum_i \lambda_i z_i z_i^T, \quad X \varSigma X^T = \sum_i \lambda_i^2 z_i z_i^T,
\end{equation}
in which
\begin{equation}
  z_i := \frac1{\sqrt{\lambda_i}} X e_i
  \label{eq:z_i}
\end{equation}
are independent $\sigma_x$-subgaussian random vectors in $\mathrm{R}^n$ with mean $0$ and covariance $I$. Then we will take the following notations in further analysis: 
\begin{equation}\label{eq:note2}
    A = XX^T, \quad A_k = \sum_{i > k} \lambda_i z_i z_i^T, \quad A_{-k} = \sum_{i \ne k} \lambda_i z_i z_i^T.
\end{equation}
Using Woodbury identity, we have
\begin{equation}\label{eq:decom1}
\begin{aligned}
  \zeta_1 \mathrm{tr} \{ (XX^T)^{-1} X\tvar X^T \} 
 &= \zeta_1 \sum_{i} \tilde{\lambda}_i \lambda_i z_i^T (XX^T)^{-1} z_i \\
 &= \zeta_1 \left( \sum_{i=1}^{k^*} \frac{\tilde{\lambda}_i \lambda_i z_i^T A_{-i}^{-1} z_i}{1 + \lambda_i z_i^T A_{-i}^{-1} z_i} + \sum_{i > k^*} \tilde{\lambda}_i \lambda_i z_i^T (XX^T)^{-1} z_i \right).   
\end{aligned}
\end{equation}
For any $i = 1, \dots, k^*$, we have
\begin{equation}\label{eq:norm_up_lo}
  z_i^T A_{-i}^{-1} z_i \le \frac{\| z_i \|_2^2}{\mu_n(A_{-i})} , \quad z_i^T A_{-i}^{-1} z_i \ge (\Pi_{\mathscr{L}_i}z_i)^T A_{-i}^{-1} (\Pi_{\mathscr{L}_i} z_i) \ge \frac{\parallel \Pi_{\mathscr{L}_i} z_i \parallel_2^2}{\mu_{k^{*} + 1} (A_{-i})},  
\end{equation}
where $\mathscr{L}_i$ is denoted as the subspace in $\mathbb{R}^n$, related to the $n - k^{*}$ eigenvalues of $A_{-i}$.
Considering Lemma~\ref{lem_eigen} and Lemma~\ref{lem_subspacenorm}, with probability at least $1 - 5 e^{- n / c}$, we have
\begin{equation*}
     \frac{1}{c_1} \lambda_{k^*+1} r_{k^*} \le \mu_{n}(A_{-i}) \le \mu_{k^*+1}(A_{-i}) \le c_1 \lambda_{k^*+1} r_{k^*}, \quad \| z_i \|_2^2 \le c_2 n, \quad \| \Pi_{\mathscr{L}_i}z_i \|_2^2 \ge n / c_3,
\end{equation*}
where $c_1, c_2, c_3$ are constants only depending on $\sigma_x$. The results above imply that
\begin{equation*}
    z_i^T A_{-i}^{-1} z_i \le \frac{c_1 c_2 n}{\lambda_{k^*+1} r_{k^*} }, \quad z_i^T A_{-i}^{-1} z_i \ge \frac{n}{c_1 c_3 \lambda_{k^*+1} r_{k^*} },
\end{equation*}
so with probability at least $1 - 5 e^{- n /c}$, we have
\begin{equation}\label{eq:dcom11}
 \sum_{i=1}^{k^*} \frac{\tilde{\lambda}_i \lambda_i z_i^T A_{-1}^{-1} z_i}{1 + \lambda_i z_i^T A_{-1}^{-1} z_i} \le \sum_{i=1}^{k^*} \tilde{\lambda}_i \frac{c_1 c_2 n \lambda_i / (\lambda_{k^*+1} r_{k^*})}{1 + \lambda_i c_1 c_2 n / (\lambda_{k^*+1} r_{k^*})} \le \sum_{i=1}^{k^*} \tilde{\lambda}_i = k^*,   
\end{equation}
where the last equality is from Condition~\ref{cond:eigen}. For the remaining part, considering Lemma~\ref{lem_eigen}, with probability at least $1 - 2 e^{- n / c}$, we have
\begin{equation*}
    \sum_{i > k^*} \tilde{\lambda}_i \lambda_i z_i^T (XX^T)^{-1} z_i \le \frac{\sum_{i > k^*} \tilde{\lambda}_i \lambda_i \| z_i \|_2^2}{\mu_n(XX^T)} \le c_1^2 \frac{\sum_{i > k^*} \tilde{\lambda}_i \lambda_i \| z_i \|_2^2}{\lambda_{k^*+1} r_{k^*}} \le c_1^2 \frac{\sum_{i > k^*} \tilde{\lambda}_i \lambda_i \| z_i \|_2^2}{\lambda_{k^*+1} r_{k^*}},
\end{equation*}
 and further considering Lemma~\ref{lem_stnorm}, with probability at least $1 - 2 e^{- n / c}$, we have
\begin{align*}
    \sum_{i > k^*} \tilde{\lambda}_i \lambda_i \| z_i \|_2^2 & \le n \sum_{i > k^*} \tilde{\lambda}_i \lambda_i + 2 \sigma_x \max \left\{ \frac{n \tilde{\lambda}_{k^*+1} \lambda_{k^*+1}}{c}, \sqrt{n \sum_{i > k^*} \tilde{\lambda}_i^2 \lambda_i^2 / c} \right\}\\
    &= n \tilde{p} \tgam \gamma + 2 \sigma_x \max \left\{ \frac{n \tgam \gamma}{c}, \tgam \gamma \sqrt{\frac{n \tilde{p}}{c}} \right\} \le 2 n \tilde{p} \tgam \gamma,
\end{align*}
 where the second equality is from Condition~\ref{cond:eigen} and the last inequality is from Condition~\ref{cond:order}.
This result implies that with probability at least $ 1- 4 e^{- n / c}$, we have
\begin{equation}\label{eq:dcom12}
    \sum_{i > k^*} \tilde{\lambda}_i \lambda_i z_i^T (XX^T)^{-1} z_i \le \frac{c_1^2 2 n \tilde{p} \tgam \gamma}{\lambda_{k^*+1} r_{k^*}} = \frac{c_1^2 2 n \tilde{p} \tgam \gamma}{p \gamma} = \frac{2 c_1^2 n \tilde{p} \tgam}{p}.
\end{equation}
Combing the results in Eq.~\eqref{eq:decom1}, \eqref{eq:dcom11} and \eqref{eq:dcom12}, with probability at least $1 - 10 e^{- n /2c}$, we could obtain that
\begin{equation}\label{eq:zeta_1}
    \zeta_1 \mathrm{tr} \{ (XX^T)^{-1} X \tvar X^T \} \le \zeta_1 \left( k^* + \frac{2 c_1^2 n \tilde{p} \tgam}{p} \right).
\end{equation}

\subsubsection{Terms corresponding to $\sigma$}

Similar to the analysis above, these terms corresponding to $\sigma$ in excess risks on Task 1 and Task 2 can be upper bounded by
\begin{equation*}
 \sigma^2 \mathrm{tr}\{ (XX^T)^{-2} X \varSigma X^T \}, \quad   \sigma^2 \mathrm{tr}\{ (XX^T)^{-2} X \tvar X^T \}
\end{equation*}
respectively,
and the estimation of its upper bound is similar to the previous item. Here we first consider the second item, by Woodbury identity, we have
\begin{equation}\label{eq:decom_2}
\begin{aligned}
 \sigma^2 \mathrm{tr} \{ (XX^T)^{-2} X \tvar X^T \} &=  \sum_{i} \tilde{\lambda}_i \lambda_i z_i^T (XX^T)^{-2} z_i \\
 &= \sigma^2 \sum_{i=1}^{k^*} \frac{\tilde{\lambda}_i \lambda_i z_i^T A_{-i}^2 z_i}{[1 + \lambda_i z_i^T A_{-i}^{-1}z_i]^2} + \sigma^2 \sum_{i > k^*} \tilde{\lambda}_i \lambda_i z_i^T (XX^T)^{-2} z_i.
\end{aligned}    
\end{equation}
From the results in \eqref{eq:norm_up_lo}, with probability at least $1 - 5 e^{- n /c}$, for any $i = 1, \dots, k^*$, we have
\begin{equation*}
     z_i^T A_{-i}^{-2} z_i \le \frac{c_1^2 c_2 n}{(\lambda_{k^*+1} r_{k^*})^2 } , \quad z_i^T A_{-i}^{-1} z_i \ge \frac{n}{c_1 c_3 \lambda_{k^*+1} r_{k^*} },
\end{equation*}
which implies that
\begin{equation}\label{eq:decom_21}
\begin{aligned}
  \sigma^2 \sum_{i=1}^{k^*} \frac{\tilde{\lambda}_i \lambda_i z_i^T A_{-i}^2 z_i}{[1 + \lambda_i z_i^T A_{-i}^{-1}z_i]^2} &\le  \sigma^2 \sum_{i=1}^{k^*} \frac{\tilde{\lambda}_i \lambda_i c_1^2 c_2 n / (\lambda_{k^*+1} r_{k^*})^2 }{[1 + \lambda_i n / (c_1 c_3 \lambda_{k^*+1} r_{k^*})]^2} \\
  &\le \sigma^2 \sum_{i=1}^{k^*} \frac{\tilde{\lambda}_i \lambda_i c_1^4 c_2 c_3^2 n}{n^2 \lambda_i^2 + c_1^2 c_3^2 \lambda_{k^*+1}^2 r_{k^*}^2} \\
  & \le \sigma^2 \sum_{i=1}^{k^*} \frac{\tilde{\lambda}_i \lambda_i c_1^4 c_2 c_3^2 n}{n^2 \lambda_i^2} = \sigma^2 c_1^4 c_2 c_3^2 \frac{1}{n} \sum_{i=1}^{k^*} \frac{\tilde{\lambda}_i}{\lambda_i} = \sigma^2 c_1^4 c_2 c_3^2 \frac{k^*}{n},
\end{aligned}
\end{equation}
where the second inequality is from $(a + b)^2 \ge a^2 + b^2$ while $a, b > 0$, the third inequality is from $a^2 + b^2 \ge a^2$, and the last equality is from Condition~\ref{cond:eigen}. And for another part, considering Lemma~\ref{lem_eigen}, with probability at least $1 - 2 e^{- n / c}$, we have
 \begin{equation*}
     \sum_{i > k^*} \tilde{\lambda}_i \lambda_i z_i^T (XX^T )^{-2} z_i \le \frac{\sum_{i > k^*} \tilde{\lambda}_i \lambda_i \| z_i \|_2^2}{\mu_n(XX^T)^2 } \le c_1^2 \frac{\sum_{i > k^*} \tilde{\lambda}_i \lambda_i \| z_i \|_2^2}{\lambda_{k^*+1}^2 r_{k^*}^2 },
 \end{equation*}
 and further considering Lemma~\ref{lem_stnorm}, with probability at least $1 - 2 e^{- n / c}$, we have
\begin{align*}
    \sum_{i > k^*} \tilde{\lambda}_i \lambda_i \| z_i \|_2^2 &\le n \sum_{i > k^*} \tilde{\lambda}_i \lambda_i + 2 \sigma_x \max \left\{ \frac{n \tilde{\lambda}_{k^*+1} \lambda_{k^*+1}}{c}, \sqrt{n \sum_{i > k^*} \tilde{\lambda}_i^2 \lambda_i^2 / c} \right\}\\
    &= n \tilde{p} \tgam \gamma + 2 \sigma_x \max \left\{ \frac{n \gamma^2}{c}, \gamma^2 \sqrt{\frac{n \tilde{p}}{c}} \right\} \le 2 n \tilde{p} \gamma^2,
\end{align*}
 where the second equality is from Condition~\ref{cond:eigen} and the last inequality is from Condition~\ref{cond:order}.
 which implies that with probability at least $ 1- 4 e^{- n / c}$, we have
 \begin{equation}\label{eq:decom_22}
\sigma^2 \sum_{i > k^*} \tilde{\lambda}_i \lambda_i z_i^T (XX^T)^{-2} z_i \le \sigma^2 2 c_1^2 \frac{n  \tilde{p} \tgam \gamma}{\lambda_{k^*+1}^2 r_{k^*}^2} = \sigma^2 2 c_1^2 \frac{n \tilde{p} }{p^2 },
\end{equation}
where the last equality is from Condition~\ref{cond:eigen}. Combining the results in \eqref{eq:decom_2} \eqref{eq:decom_21} and \eqref{eq:decom_22}, with probability at least $1 - 10 e^{- n / 2c}$, we have
\begin{equation}\label{eq:sig}
    \sigma^2 \mathrm{tr} \{ (XX^T)^{-2} X \tvar X^T \} \le \sigma^2 \left( c_1^4 c_2 c_3^2 \frac{k^*}{n} + 2 c_1^2 \frac{n \tilde{p} }{p^2} \right).
\end{equation}
The analysis for the first item is similar, which implies that with probability at least $1 - 10 e^{-n/2c}$, we could obtain that
\begin{equation}\label{eqs:sig}
    \sigma^2 \mathrm{tr} \{ (XX^T)^{-2} X \varSigma X^T \} \le \sigma^2 \left( c_1^4 c_2 c_3^2 \frac{k^*}{n} + 2 c_1^2 \frac{n  }{p} \right).
\end{equation}

\subsubsection{Terms corresponding to $\zeta_2$}

In $\mathcal{L}_{\mathrm{pre}}$, to approximate the upper and lower bounds of terms related to $\zeta_2$, we need to estimate:
\begin{equation*}
    \zeta_2 \mathrm{tr} \{ [\TX^T (\TX \TX^T + n \lambda I)^{-1} \TX]^2 \Sigma \} = \zeta_2 \mathrm{tr} \{ (\TX \TX^T + n \lambda I)^{-2} \TX \TX^T \TX \varSigma \TX^T \},
\end{equation*}
for any $\lambda \ge 0$. And in $\mathcal{L}_{\mathrm{ft}}$, for  the terms related to $\zeta_2$,  
we have the following results:
\begin{align*}
 \zeta_2 \mathrm{tr} \{  (I - \TX^T (\TX\TX^T)^{-1} \TX)^2 \tvar \} &= \zeta_2 \mathrm{tr} \{ \tvar\} - \zeta_2 \mathrm{tr}\{  (\TX \TX^T)^{-1} \TX \tvar \TX^T \},\\
 \zeta_2 \mathrm{tr} \{ (I - \TX^T (\TX \TX^T + n \lambda I)^{-1}\TX)^2 \tvar \} &\le \zeta_2 \mathrm{tr} \{ \tvar (I - \TX^T(\TX\TX^T + n \lambda I)^{-1} \TX) \} \\
 &= \zeta_2 \mathrm{tr} \{ \tvar \} - \zeta_2 \mathrm{tr} \{ (\TX\TX^T + n \lambda I)^{-1} \TX \tvar \TX^T \}, \\
 \zeta_2 \mathrm{tr} \{ \tvar (I - \TX^T (\TX \TX^T + n \lambda I)^{-1} \TX)^2 \} & \ge \zeta_2 \mathrm{tr} \{ \tvar (I - \TX^T (\TX \TX^T )^{-1} \TX)^2 \}\\
 &= \zeta_2 \mathrm{tr} \{ \tvar \} - \zeta_2 \mathrm{tr} \{ (\TX\TX^T)^{-1} \TX \tvar \TX^T \},\\
\zeta_2 \mathrm{tr} \{ (I - \tau \TX^T (\TX \TX^T + n \lambda I)^{-1}\TX)^2 \tvar \} &\le \zeta_2 \mathrm{tr} \{ \tvar (I - \tau \TX^T(\TX\TX^T + n \lambda I)^{-1} \TX) \}\\
&= \zeta_2 \mathrm{tr} \{ \tvar \} - \zeta_2 \tau \mathrm{tr} \{ (\TX\TX^T + n \lambda I)^{-1} \TX \tvar \TX^T \}, \\
 \zeta_2 \mathrm{tr} \{ \tvar (I - \tau \TX^T (\TX \TX^T + n \lambda I)^{-1} \TX)^2 \} & \ge \zeta_2 \mathrm{tr} \{ \tvar (I - \tau \TX^T (\TX \TX^T )^{-1} \TX)^2 \}\\
 &= \zeta_2 \mathrm{tr} \{ \tvar \} - \zeta_2 (2 \tau - \tau^2) \mathrm{tr} \{ (\TX\TX^T)^{-1} \TX \tvar \TX^T \},
\end{align*}
so here we need to estimate the upper and lower bounds for term
\begin{equation*}
    \zeta_2 \mathrm{tr} \{ (\TX \TX^T + n \lambda I)^{-1} \TX \tvar \TX^T \}
\end{equation*}
for any $\lambda \ge 0$. The analysis is similar to the analysis above, recalling the decomposition $\tvar = \sum_i \tilde{\lambda}_i e_i e_i^T$, we have
\begin{equation*}
    \TX \TX^T = \sum_i \tilde{\lambda}_i \tz_i \tz_i^T, \quad \TX \tvar \TX^T = \sum_i \tilde{\lambda}_i^2 \tz_i \tz_i^T,
\end{equation*}
in which 
\begin{equation*}
    \tz := \frac{1}{\sqrt{\tilde{\lambda}_i}} \TX e_i,
\end{equation*}
are independent $\sigma_x$-subgaussian random vectors in $\mathbb{R}^n$ with mean zero and covariance $I$. So we take the following notations in further analysis:
\begin{equation}\label{eq:note3}
    \tilde{A} = \TX \TX^T, \quad \tilde{A}_k  = \sum_{i > k} \tilde{\lambda}_i \tz_i \tz_i^T, \quad \tilde{A}_{-k} = \sum_{i \ne k} \tilde{\lambda}_i \tz_i \tz_i^T
\end{equation}
Starting with the analysis of $\zeta_2\mathrm{tr} \{ (\TX \TX^T + n \lambda I)^{-1} \TX \tvar \TX^T \}$,
we consider its upper bound firstly. Using Woodbury identity, we have
\begin{equation}\label{eq:decom3}
\begin{aligned}
& \quad \zeta_2 \mathrm{tr}\{ (\TX \TX^T + n \lambda I)^{-1} \TX \tvar \TX^T \} \\
&= \zeta_2 \sum_i \tilde{\lambda}_i^2 \tz_i^T (\TX\TX^T + n \lambda I)^{-1} \tz_i\\
&= \zeta_2 \left( \sum_{i=1}^{k^*} \frac{\tilde{\lambda}_i^2 \tz_i^T (\tilde{A}_{-i} + n \lambda I)^{-1} \tz_i}{1 + \tilde{\lambda}_i \tz_i^T (\tilde{A}_{-i} + n \lambda I)^{-1} \tz_i} + \sum_{i > k^*} \tilde{\lambda}_i^2 \tz_i^T (\TX\TX^T + n \lambda I)^{-1} \tz_i \right).
\end{aligned}
\end{equation}
For any $i = 1, \dots, k^*$, we have
\begin{equation}\label{eq:tnorm_up_lo}
\begin{aligned}
& \tz_i^T (\tilde{A}_{-i} + n \lambda I)^{-1} \tz_i \le \frac{\| \tz_i \|_2^2}{\mu_n(\tilde{A}_{-i}) + n \lambda} ,\\
& \tz_i^T ( \tilde{A}_{-i} + n \lambda I )^{-1} \tz_i \ge (\Pi_{\tilde{\mathscr{L}}_i} \tz_i)^T ( \tilde{A}_{-i} + n \lambda I)^{-1} (\Pi_{\tilde{\mathscr{L}}_i} \tz_i) \ge \frac{\parallel \Pi_{\tilde{\mathscr{L}}_i} z_i \parallel_2^2}{\mu_{k^{*} + 1} (\tilde{A}_{-i})},     
\end{aligned}
\end{equation}
where $\tilde{\mathscr{L}}_i$ is denoted as the subspace in $\mathbb{R}^n$, related to the $n - k^{*}$ eigenvalues of $\tilde{A}_{-i}$.
Considering Lemma~\ref{lem_eigen} and Lemma~\ref{lem_subspacenorm}, with probability at least $1 - 5 e^{- n / c}$, we have
\begin{equation}\label{eq:tnorm2}
     \frac{1}{c_1} \tilde{\lambda}_{k^*+1} \tilde{r}_{k^*} \le \mu_{n}(\tilde{A}_{-i}) \le \mu_{k^*+1}(\tilde{A}_{-i}) \le c_1 \tilde{\lambda}_{k^*+1} \tilde{r}_{k^*}, \quad \| \tz_i \|_2^2 \le c_2 n, \quad \| \Pi_{\tilde{\mathscr{L}}_i} \tz_i \|_2^2 \ge n / c_3,
\end{equation}
where $c_1, c_2, c_3$ are constants only depending on $b, \sigma_x$. The results above imply that
\begin{equation*}
    \tz_i^T (\tilde{A}_{-i} + n \lambda I)^{-1} \tz_i \le \frac{c_1 c_2 n}{\tilde{\lambda}_{k^*+1} \tilde{r}_{k^*} + n \lambda }, \quad \tz_i^T (\tilde{A}_{-i} + n \lambda )^{-1} z_i \ge \frac{n}{c_1 c_3 ( \tilde{\lambda}_{k^*+1} \tilde{r}_{k^*} + n \lambda) }.
\end{equation*}
so with probability at least $1 - 5 e^{- n /c}$, we have
\begin{equation}\label{eq:dcom31}
\begin{aligned}
\sum_{i=1}^{k^*} \frac{\tilde{\lambda}_i^2  \tz_i^T (\tilde{A}_{-i} + n \lambda I)^{-1} \tz_i}{1 + \tilde{\lambda}_i \tz_i^T (\tilde{A}_{-i} + n \lambda I )^{-1} \tz_i} &\le \sum_{i=1}^{k^*} \tilde{\lambda}_i \frac{c_1 c_2 n \tilde{\lambda}_i / (\tilde{\lambda}_{k^*+1} \tilde{r}_{k^*} + n \lambda)}{1 + \tilde{\lambda}_i c_1 c_2 n / (\tilde{\lambda}_{k^*+1} \tilde{r}_{k^*} + n \lambda)}\\
&= k^* \frac{c_1 c_2 n / (\tilde{p} \tgam + n \lambda)}{1 + c_1 c_2 n / (\tilde{p} \tgam + n \lambda)}\\
&\le k^* \min\{ \frac{c_1 c_2 n}{c_1 c_2 n + \tilde{p} \tgam}, \frac{c_1 c_2}{c_1 c_2 + \lambda} \},
\end{aligned}
\end{equation}
where the last inequality is from $a + b \ge \max\{ a, b\}$.
For the remaining part, considering Lemma~\ref{lem_eigen}, with probability at least $1 - 2 e^{- n / c}$, we have
\begin{equation*}
    \sum_{i > k^*} \tilde{\lambda}_i^2  \tz_i^T (\TX \TX^T + n \lambda I)^{-1} \tz_i \le \frac{\sum_{i > k^*} \tilde{\lambda}_i^2  \| \tz_i \|_2^2}{\mu_n( \TX \TX^T) + n \lambda} \le c_1^2 \frac{\sum_{i > k^*} \tilde{\lambda}_i^2  \| \tz_i \|_2^2}{\tilde{\lambda}_{k^*+1} \tilde{r}_{k^*} + n \lambda},
\end{equation*}
and further considering Lemma~\ref{lem_stnorm}, with probability at least $1 - 2 e^{- n / c}$, we have
\begin{align*}
   \sum_{i > k^*} \tilde{\lambda}_i^2 \| \tz_i \|_2^2 &\le n \sum_{i > k^*} \tilde{\lambda}_i^2 + 2 \sigma_x \max \left\{ \frac{n \tilde{\lambda}_{k^*+1}^2}{c}, \sqrt{n \sum_{i > k^*} \tilde{\lambda}_i^4 / c} \right\} \\
   &= n \tilde{p} \tgam^2 + 2 \sigma_x \max \left\{ \frac{n \tgam^2}{c}, \tgam^2 \sqrt{\frac{n \tilde{p}}{c}} \right\} \le 2 n \tilde{p} \tgam^2,  
\end{align*}
 where the second equality is from Condition~\ref{cond:eigen} and the last inequality is from Condition~\ref{cond:order}.
This result implies that with probability at least $ 1- 4 e^{- n / c}$, we have
\begin{equation}\label{eq:dcom32}
    \sum_{i > k^*} \tilde{\lambda}_i^2 \tz_i^T (\TX \TX^T + n \lambda I)^{-1} \tz_i \le \frac{c_1^2 2 n \tilde{p} \tgam^2}{\tilde{\lambda}_{k^*+1} \tilde{r}_{k^*} + n \lambda} = \frac{2 c_1^2 n \tilde{p} \tgam^2}{\tilde{p} \tgam + n \lambda} \le 2 c_1^2 \min \left\{  n \tgam, \frac{\tilde{p} \tgam^2}{\lambda}  \right\}.
\end{equation}
Combing the results in Eq.~\eqref{eq:decom3}, \eqref{eq:dcom31} and \eqref{eq:dcom32}, with probability at least $1 - 10 e^{- n /2c}$, we could obtain that
\begin{equation}\label{eq:zeta_2_up}
    \zeta_2 \mathrm{tr} \{ (\TX \TX^T + n \lambda I)^{-1} \TX \tvar \TX^T \} \le \zeta_2 \left( k^* \min\{ \frac{c_1 c_2 n}{c_1 c_2 n + \tilde{p} \tgam}, \frac{c_1 c_2}{c_1 c_2 + \lambda} \} + 2 c_1^2 \min \left\{  n \tgam, \frac{\tilde{p} \tgam^2}{\lambda}  \right\} \right).
\end{equation}
Then we turn to its lower bound. Considering \eqref{eq:tnorm_up_lo} for any index $i = 1, \dots, \infty$, 
as
\begin{equation*}
    \mu_{k^*+1} (\tilde{A}_{-i}) \le \mu_{k^*+1}(\TX \TX^T) \le c_1 \tilde{\lambda}_{k^*+1} \tilde{r}_{k^*}
\end{equation*}
is always satisfied,
with probability at least $1 - 5 e^{- n / c}$ we can get a lower bound as
\begin{equation}\label{eq:dcom33}
\begin{aligned}
\frac{\tilde{\lambda}_i^2 \tx_i^T (\tilde{A}_{-i} + n \lambda I)^{-1} \tz_i}{1 + \tilde{\lambda}_i \tz_i^T (\tilde{A}_{-i} + n \lambda I )^{-1} \tz_i} &\ge \frac{\tilde{\lambda}_i^2 n / (\tilde{\lambda}_{k^*+1} \tilde{r}_{k^*} + n \lambda)}{c_1 c_3 + \tilde{\lambda}_i n / (\tilde{\lambda}_{k^*+1} \tilde{r}_{k^*} + n \lambda) } \\
&\ge \frac{1}{c_1 c_3} \frac{\tilde{\lambda}_i^2 n / (\tilde{\lambda}_{k^*+1} \tilde{r}_{k^*} + n \lambda)}{1 + \tilde{\lambda}_i n / (\tilde{\lambda}_{k^*+1} \tilde{r}_{k^*} + n \lambda)} > 0,
\end{aligned}
\end{equation}
and for the whole trace term $\zeta_2 \mathrm{tr} \{ (\TX \TX^T + n \lambda I)^{-1} \TX \tvar \TX^T \}$, due to Lemma \ref{lem_sum}, with probability at least $1 - 10 e^{- n / c }$, we have
\begin{equation}\label{eq:zeta2_lo}
 \begin{aligned}
   \zeta_2 \mathrm{tr} \{ (\TX \TX^T + n \lambda I)^{-1} \TX \tvar \TX^T \} 
  & = \zeta_2 \sum_i \frac{\tilde{\lambda}_i^2 \tx_i^T (\tilde{A}_{-i} + n \lambda I)^{-1} \tz_i}{1 + \tilde{\lambda}_i \tz_i^T (\tilde{A}_{-i} + n \lambda I )^{-1} \tz_i} \\
  &\ge \frac{\zeta_2}{2 c_1 c_3} \sum_i \frac{\tilde{\lambda}_i^2 n / (\tilde{\lambda}_{k^*+1} \tilde{r}_{k^*} + n \lambda)}{1 + \tilde{\lambda}_i n / (\tilde{\lambda}_{k^*+1} \tilde{r}_{k^*} + n \lambda)} \\
  &\ge \frac{\zeta_2}{6 c_1 c_3} \sum_i \min \left\{ \frac{\tilde{\lambda}_i^2}{\lambda}, \tilde{\lambda}_i, \frac{n \tilde{\lambda}_i^2}{\tilde{\lambda}_{k^*+1} r_{k^*}} \right\} \\
  &= \frac{\zeta_2}{6 c_1 c_3} \left( k^* \min\{ \frac{1}{\lambda}, 1 \} + (\tilde{p} - k^*) \min\{ \frac{\tgam^2}{\lambda}, \frac{n \tgam}{\tilde{p}} \} \right),
 \end{aligned}   
\end{equation}
where the first equality is from Woodbury identity, the first inequality is from \eqref{eq:dcom33}, the second inequality is from the fact
\begin{equation*}
  (a + b + c)^{-1} \ge (3 \max\{ a, b, c \})^{-1} = \frac{1}{3} \min \{ a^{-1}, b^{-1}, c^{-1} \}, \quad a, b, c > 0,  
\end{equation*}
and the last equality is induced from Condition~\ref{cond:eigen} and Condition~\ref{cond:order}.
And combing \eqref{eq:zeta_2_up} and \eqref{eq:zeta2_lo}, we could consider different situations with respect to the value of $\lambda$. 

While $\lambda \le \tilde{p} \tgam / n$, we have
\begin{equation}\label{eq:zeta2_s}
\begin{aligned}
\zeta_2 \mathrm{tr} \{ (\TX \TX^T + n \lambda I)^{-1} \TX \tvar \TX^T \} &\le  \zeta_2 \left( \frac{c_1 c_2 k^* n}{c_1 c_2 n + \tilde{p} \tgam} + 2 c_1^2 n \tgam \right) ,\\
\zeta_2 \mathrm{tr} \{ (\TX \TX^T + n \lambda I)^{-1} \TX \tvar \TX^T \} &\ge \frac{\zeta_2}{6 c_1 c_3} \left( k^* + \frac{n \tgam (\tilde{p} - k^*)}{\tilde{p}} \right).
\end{aligned}    
\end{equation}
And while $\lambda \ge \tilde{p} \tgam / n$, with a high probability, we have
\begin{equation}\label{eq:zeta2_l}
 \begin{aligned}
\zeta_2 \mathrm{tr} \{ (\TX \TX^T + n \lambda I)^{-1} \TX \tvar \TX^T \} &\le  \zeta_2 \left( \frac{c_1 c_2 k^*}{c_1 c_2  + \lambda} + \frac{2 c_1^2 \tilde{p} \tgam^2}{\lambda} \right) ,\\
\zeta_2 \mathrm{tr} \{ (\TX \TX^T + n \lambda I)^{-1} \TX \tvar \TX^T \} &\ge \frac{\zeta_2}{6 c_1 c_3} \left( k^* \min\{ \frac{1}{\lambda}, 1\} + \frac{ \tgam^2 (\tilde{p} - k^*)}{\lambda} \right).   
 \end{aligned}   
\end{equation}
Then we turn to the term $\zeta_2 \mathrm{tr}\{ (\TX \TX^T + n \lambda I)^{-2} \TX \TX^T \TX \varSigma \TX^T\}$. While we have $ n \lambda \le \tilde{p} \tgam / c_1$, with probability at least $1 - 5 e^{- n / c}$, we have
\begin{equation*}
    n \lambda \le \mu_n(\TX \TX^T) \Longrightarrow n \lambda I \preceq \TX \TX^T \Longleftarrow \TX\TX^T + n \lambda I \preceq 2 \TX \TX^T,
\end{equation*}
which implies that
\begin{equation}\label{eqs:z21}
    \zeta_2 \mathrm{tr} \{ (\TX \TX^T + n \lambda I)^{-2} \TX \TX^T \TX \tvar \TX^T \} \ge \frac{\zeta_2}{4} \mathrm{tr} \{ (\TX\TX^T)^{-1} \TX \varSigma \TX^T \}, 
\end{equation}
as we also have
\begin{equation}\label{eqs:z22}
    \TX\TX^T +  \lambda I \succeq \TX \TX^T \Longrightarrow \zeta_2 \mathrm{tr} \{ (\TX \TX^T + n \lambda I)^{-2} \TX \TX^T \TX \tvar \TX^T \} \le \zeta_2 \mathrm{tr} \{ (\TX\TX^T)^{-1} \TX \varSigma \TX^T \}.
\end{equation}
Combing both of the two results above, we just need to estimate the term $\zeta_2 \mathrm{tr} \{ (\TX \TX^T)^{-1} \TX \varSigma \TX^T \}$. Using Woodbury identity, we have
\begin{equation*}
   \zeta_2 \mathrm{tr}\{ (\TX \TX^T + n \lambda I)^{-1} \TX \varSigma \TX^T \} = \zeta_2 \sum_i \tilde{\lambda}_i \lambda_i \tz_i^T (\TX\TX^T + n \lambda I)^{-1} \tz_i,
\end{equation*}
and recalling the identity
\begin{equation*}
    \zeta_2 \mathrm{tr}\{ (\TX \TX^T + n \lambda I)^{-1} \TX \tvar \TX^T \} = \zeta_2 \sum_i \tilde{\lambda}_i^2 \tz_i^T (\TX\TX^T + n \lambda I)^{-1} \tz_i,
\end{equation*}
comparing such two terms and Condition~\ref{cond:eigen}, we have
\begin{equation*}
    \tilde{\lambda}_i \lambda_i = \tilde{\lambda}_i^2, \quad \forall i = 1, \dots, p,
\end{equation*}
which implies that
\begin{equation}\label{eq:eq1}
   \zeta_2 \mathrm{tr}\{ (\TX \TX^T + n \lambda I)^{-1} \TX \varSigma \TX^T \} = \zeta_2 \mathrm{tr}\{ (\TX \TX^T + n \lambda I)^{-1} \TX \tvar \TX^T \}.
\end{equation}
Then considering \eqref{eq:zeta2_s}, \eqref{eqs:z21} and \eqref{eqs:z22}, while $n \lambda \le \tilde{p} \tgam / c_1$, we have
\begin{equation}\label{eqs:zeta2}
\begin{aligned}
 \zeta_2 \mathrm{tr} \{ (\TX \TX^T + n \lambda I)^{-2} \TX \TX^T \TX \tvar \TX^T \} &\le  \zeta_2 \left( \frac{c_1 c_2 k^* n}{c_1 c_2 n + \tilde{p} \tgam} + 2 c_1^2 n \tgam \right) ,\\
 \zeta_2 \mathrm{tr} \{ (\TX \TX^T + n \lambda I)^{-2} \TX \TX^T \TX \tvar \TX^T \} &\ge \frac{\zeta_2}{24 c_1 c_3} \left( k^* + \frac{n \tgam (\tilde{p} - k^*)}{\tilde{p}} \right).
\end{aligned}    
\end{equation}

\subsubsection{Terms corresponding to $\tsig$}

Finally, for the terms related to $\tsig$, we need to estimate the bounds for the terms $\tsig^2 \mathrm{tr}\{ (\TX \TX^T)^{-2} \TX^T \varSigma \TX\}$ and $\tsig^2 \mathrm{tr} \{ (\TX \TX^T + n \lambda I)^{-2} \TX^T \varSigma \TX \}$ in $\mathcal{L}_{\mathrm{pre}}$, and 
the terms $\tsig^2 \mathrm{tr}\{ (\TX \TX^T)^{-2} \TX^T \tvar \TX\}$ and $\tsig^2 \mathrm{tr} \{ (\TX \TX^T + n \lambda I)^{-2} \TX^T \tvar \TX \}$ in $\mathcal{L}_{\mathrm{ft}}$, i.e, for the terms $\tsig^2 \mathrm{tr} \{ (\TX\TX^T + n \lambda I)^{-2} \TX \varSigma \TX^T \}$ and $\tsig^2 \mathrm{tr} \{ (\TX\TX^T + n \lambda I)^{-2} \TX \tvar \TX^T \}$ in which $\lambda \ge 0$. 

Firstly, we consider the approximation on $\tsig^2 \mathrm{tr} \{ (\TX\TX^T + n \lambda I)^{-2} \TX \tvar \TX^T \}$. For its upper bound, we could take Woodbury identity as follows:
\begin{equation}\label{eq:decom41}
\begin{aligned}
& \quad \tsig^2 \mathrm{tr}\{ (\TX \TX^T + n \lambda I)^{-2} \TX \tvar \TX^T \}\\
&= \tsig^2  \sum_i \tilde{\lambda}_i^2 \tz_i^T (\TX \TX^T + n \lambda I)^{-2} \tz_i\\
&= \tsig^2 \sum_{i=1}^{k^*} \frac{\tilde{\lambda}_i^2 \tz_i^T (\tilde{A}_{-i} + n \lambda I)^{-2} \tz_i}{[1 + \tilde{\lambda}_i \tz_i^T (\tilde{A}_{-i} + n \lambda I)^{-2} \tz_i]^2} + \tsig^2 \sum_{i > k^*} \tilde{\lambda}_i^2 \tz_i^T (\TX \TX^T + n \lambda I)^{-2} \tz_i.
\end{aligned}   
\end{equation}
According to \eqref{eq:tnorm_up_lo} and \eqref{eq:tnorm2}, with probability at least $1 - 5 e^{-n/c}$, for any index $i = 1, \dots, k^*$, we have
\begin{equation}\label{eq:decom4}
    \tz_i^T (\tilde{A}_{-i} + n \lambda I)^{-2} \tz_i \le \frac{c-1^2 c_2 n}{(\tilde{\lambda}_{k^*+1} \tilde{r}_{k^*} + n \lambda)^2}, \quad \tz_i (\tilde{A}_{-i} + n \lambda I)^{-1} \tz_i \ge \frac{n}{c_1 c_3 (\tilde{\lambda}_{k^*+1} \tilde{r}_{k^*} + n \lambda)},
\end{equation}
which implies that
\begin{equation}\label{eq:decom42}
 \begin{aligned}
  \tsig^2 \sum_{i=1}^{k^*} \frac{\tilde{\lambda}_i^2  \tz_i^T (\tilde{A}_{-i} + n \lambda I)^{-2} \tz_i}{[1 + \tilde{\lambda}_i \tz_i^T (\tilde{A}_{-i} + n \lambda I)^{-1} \tz_i]^2} &\le  \tsig^2 \sum_{i=1}^{k^*} \frac{\tilde{\lambda}_i^2  c_1^2 c_2 n / (\tilde{\lambda}_{k^*+1} \tilde{r}_{k^*} + n \lambda)^2 }{[1 + \tilde{\lambda}_i n / (c_1 c_3 \tilde{\lambda}_{k^*+1} \tilde{r}_{k^*} + c_1 c_3 n \lambda)]^2} \\
  &\le \tsig^2 \sum_{i=1}^{k^*} \frac{\tilde{\lambda}_i^2 c_1^4 c_2 c_3^2 n}{n^2 \tilde{\lambda}_i^2 + c_1^2 c_3^2 \tilde{\lambda}_{k^*+1}^2 \tilde{r}_{k^*}^2 + c_1^2 c_3^2 n^2 \lambda^2}\\
  &\le \tsig^2 c_1^4 c_2 c_3^2 \sum_{i=1}^{k^*} \min \left\{ \frac{1}{n}, \frac{n \tilde{\lambda}_i^2}{\tilde{\lambda}_{k^*+1}^2 \tilde{r}_{k^*}^2}, \frac{\tilde{\lambda}_i^2}{\lambda^2} \right\} \\
  &= \tsig^2 c_1^4 c_2 c_3^2 k^* \min \left\{ \frac{1}{n}, \frac{1}{\lambda^2} \right\},
 \end{aligned}   
\end{equation}
where the third inequality is from $a^2 + b^2 + c^2 \ge \max\{ a^2, b^2, c^2 \}$, and the last equality is from Condition~\ref{cond:eigen} and Condition~\ref{cond:order}. As for the remaining part, considering Lemma~\ref{lem_eigen}, with probability at least $1 - 2 e^{- n / c}$, we have
 \begin{equation*}
     \sum_{i > k^*} \tilde{\lambda}_i^2 \tz_i^T (\TX \TX^T + n \lambda I )^{-2} \tz_i \le \frac{\sum_{i > k^*} \tilde{\lambda}_i^2 \| \tz_i \|_2^2}{\mu_n(\TX \TX^T)^2 + n^2 \lambda^2 } \le c_1^2 \frac{\sum_{i > k^*} \tilde{\lambda}_i^2 \| \tz_i \|_2^2}{\tilde{\lambda}_{k^*+1}^2 \tilde{r}_{k^*}^2 + n^2 \lambda^2 } = c_1^2 \frac{\sum_{i > k^*} \tilde{\lambda}_i^2 \| \tz_i \|_2^2}{\tilde{p}^2 \tgam^2 + n^2 \lambda^2 },
 \end{equation*}
 and further considering Lemma~\ref{lem_stnorm}, with probability at least $1 - 2 e^{- n / c}$, we have
 \begin{align*}
     \sum_{i > k^*} \tilde{\lambda}_i^2 \| \tz_i \|_2^2 & \le n \sum_{i > k^*} \tilde{\lambda}_i^2 + 2 \sigma_x \max \left\{ \frac{n \tilde{\lambda}_{k^*+1}^2 }{c}, \sqrt{n \sum_{i > k^*} \tilde{\lambda}_i^4  / c} \right\} \\
     &= n \tilde{p} \tgam^2 + 2 \sigma_x \max \left\{ \frac{n \tgam^2 }{c}, \tgam^2  \sqrt{\frac{n \tilde{p}}{c}} \right\} \le 2 n \tilde{p} \tgam^2,  
 \end{align*}
 where the second inequality is from Condition~\ref{cond:eigen} and the last inequality is from Condition~\ref{cond:order}. It implies that with probability at least $1 - 4 e^{- n / c}$, we have
 \begin{equation}\label{eq:decom43}
     \tsig^2 \sum_{i > k^*} \tilde{\lambda}_i^2 \tz_i^T (\TX \TX^T + n \lambda I)^{-2} \tz_i \le \tsig^2  2 c_1^2 \frac{n \tilde{p} \tgam^2}{\tilde{p}^2 \tgam^2 + n^2 \lambda^2} \le 2 c_1^2 \tsig^2 \min \left\{ \frac{n}{\tilde{p}}, \frac{\tilde{p} \tgam^2}{n \lambda^2} \right\}.
 \end{equation}
Combing the results in \eqref{eq:decom41}, \eqref{eq:decom42} and \eqref{eq:decom43}, with probability at least $1 - 10 e^{- n / 2c}$, we have
\begin{equation}\label{eq:tsig_up}
    \tsig^2 \mathrm{tr} \{ (\TX \TX^T + n \lambda I)^{-2} \le \tsig^2 \left( c_1^4 c_2 c_3^2 k^* \min \left\{ \frac{1}{n}, \frac{1}{\lambda^2} \right\} + 2 c_1^2 \min \left\{ \frac{n}{\tilde{p}}, \frac{\tilde{p} \tgam^2}{n \lambda^2} \right\} \right).
\end{equation}
Then for the lower bound, on each index $i = 1, \dots, \infty$, we have
\begin{equation*}
    \tz_i^T (\tilde{A}_{-i} + n \lambda I)^{-2} \tz_i \ge \frac{1}{\| \tz_i \|_2^2} \left( \tz_i^T (\tilde{A}_{-i} + n \lambda I )^{-1} \tz_i \right)^2,
\end{equation*}
which implies that with probability at least $1 - 5 e^{-n/c}$,
\begin{equation}\label{eq:decom44}
\begin{aligned}
  \frac{\tilde{\lambda}_i^2 \tz_i^T (\tilde{A}_{-i} + n \lambda I)^{-2} \tz_i}{[1 + \tilde{\lambda}_i \tz_i^T (\tilde{A}_{-i} + n \lambda I)^{-2} \tz_i]^2} &\ge \frac{1}{\| \tz_i \|_2^2} \left( \frac{\tilde{\lambda}_i \tz_i^T (\tilde{A}_{-i} + n \lambda I)^{-1} \tz_i}{1 + \tilde{\lambda}_i \tz_i^T (\tilde{A}_{-i} + n \lambda I)^{-1} \tz_i} \right)^2 \\
  &\ge \frac{1}{c_2 n} \left( \frac{n \tilde{\lambda}_i}{c_1 c_3 (\tilde{\lambda}_{k^*+1} \tilde{r}_{k^*}) + n \tilde{\lambda}_i} \right)^2 > 0,
\end{aligned}
\end{equation}
where the last inequality is from \eqref{eq:tnorm_up_lo} and \eqref{eq:decom4}. Then for the whole term, due to Lemma~\ref{lem_sum}, with probability at least $1 - 10 e^{- n / c}$, we have
\begin{equation}\label{eq:tsig_lo}
\begin{aligned}
 \tsig^2 \mathrm{tr}\{ (\TX \TX^T + n \lambda I)^{-2} \TX \tvar \TX^T \} &= \tsig^2 \sum_i \frac{\tilde{\lambda}_i^2 \tz_i^T (\tilde{A}_{-i} + n \lambda I)^{-2} \tz_i}{[1 + \tilde{\lambda}_i \tz_i^T (\tilde{A}_{-i} + n \lambda I)^{-2} \tz_i]^2} \\
 &\ge \frac{\tsig^2}{2 c_2 n} \sum_i \left( \frac{n \tilde{\lambda}_i}{c_1 c_3 (\tilde{\lambda}_{k^*+1} \tilde{r}_{k^*} + n \lambda) + n \tilde{\lambda}_i} \right)^2\\
 &\ge \frac{\tsig^2}{18 c_1^2 c_2 c_3^2 n} \sum_i \min \left\{ 1, \frac{\tilde{\lambda}_i^2}{\lambda^2}, \frac{n^2 \tilde{\lambda}_i^2}{\tilde{\lambda}_{k^*+1}^2 \tilde{r}_{k^*}^2} \right\}\\
 &= \frac{\tsig^2}{18 c_1^2 c_2 c_3^2 n} \left( k^* \min\{ 1, \frac{1}{\lambda^2}, \frac{n^2}{\tilde{p}^2 \tgam^2} \} + (\tilde{p} - k^*) \min\{ 1, \frac{\tgam^2}{\lambda^2}, \frac{n^2}{\tilde{p}^2} \} \right)\\
 &= \frac{\tsig^2}{18 c_1^2 c_2 c_3^2 n} \left( k^* \min\{ 1, \frac{1}{\lambda^2} \} + (\tilde{p} - k^*) \min\{  \frac{\tgam^2}{\lambda^2}, \frac{n^2}{\tilde{p}^2} \} \right),
\end{aligned}    
\end{equation}
 where the first inequality is from \eqref{eq:decom44}, the second inequality is from the fact that
 \begin{equation*}
     (a + b + c)^{-2} \ge (3 \max\{ a, b, c \})^{-2} = \frac{1}{9} \min \{ a^{-2}, b^{-2}, c^{-2} \}, \quad \forall a, b, c > 0, 
 \end{equation*}
and the last two equality is from Condition~\ref{cond:eigen} and Condition~\ref{cond:order}. Combing both \eqref{eq:tsig_up} and \eqref{eq:tsig_lo}, we could also consider the following two cases.

While $\lambda \le \tilde{p} \tgam / n$, with a high probability, we have
\begin{equation}\label{eq:tsig_s}
\begin{aligned}
\tsig^2 \mathrm{tr} \{ (\TX \TX^T + n \lambda I)^{-2} \TX \tvar \TX^T \} &\le \tsig^2 \left( \frac{c_1^4 c_2 c_3^2 k^*}{n} + 2 c_1^2 \frac{n}{\tilde{p}} \right),\\
\tsig^2 \mathrm{tr} \{ (\TX \TX^T + n \lambda I)^{-2} \TX \tvar \TX^T \} &\ge \tsig^2 \frac{1}{18 c_1^2 c_2 c_3^2} \left( \frac{k^*}{n} + \frac{n (\tilde{p} - k^*)}{\tilde{p}^2} \right).    
\end{aligned}
\end{equation}

And while $\lambda \ge \tilde{p} \tgam / n$, we have
\begin{equation}\label{eq:tsig_l}
 \begin{aligned}
\tsig^2 \mathrm{tr} \{ (\TX \TX^T + n \lambda I)^{-2} \TX \tvar \TX^T \} &\le \tsig^2 \left( c_1^4 c_2 c_3^2 k^*  \min\{ \frac{1}{n}, \frac{1}{\lambda^2} \} + 2 c_1^2 \frac{\tilde{p}^2 \tgam^2}{n \lambda^2} \right),\\
\tsig^2 \mathrm{tr} \{ (\TX \TX^T + n \lambda I)^{-2} \TX \tvar \TX^T \} &\ge \tsig^2 \frac{1}{18 c_1^2 c_2 c_3^2} \left( \frac{k^*}{n} \min \{ 1, \frac{1}{\lambda^2} \} + \frac{\tgam^2 (\tilde{p} - k^*)}{n \lambda^2} \right).    
\end{aligned}  
\end{equation}

Then we turn to the term $\tsig^2 \mathrm{tr} \{ (\TX \TX^T + n \lambda I)^{-2} \TX \varSigma \TX^T \}$. Similarly, using Woodbury identity, we have
\begin{equation*}
  \tsig^2 \mathrm{tr}\{ (\TX \TX^T + n \lambda I)^{-2} \TX \varSigma \TX^T \} = \tsig^2 \sum_i \tilde{\lambda}_i \lambda_i \tz_i^T (\TX\TX^T + n \lambda I)^{-2} \tz_i,
\end{equation*}
as well as
\begin{equation*}
    \tsig^2 \mathrm{tr}\{ (\TX \TX^T + n \lambda I)^{-2} \TX \tvar \TX^T \} = \tsig^2 \sum_i \tilde{\lambda}_i^2 \tz_i^T (\TX\TX^T + n \lambda I)^{-2} \tz_i,
\end{equation*}
comparing such two terms and Condition~\ref{cond:eigen}, we have
\begin{equation*}
    \tilde{\lambda}_i \lambda_i = \tilde{\lambda}_i^2, \quad \forall i = 1, \dots, p,
\end{equation*}
which implies that
\begin{equation}\label{eq:eq2}
   \tsig^2 \mathrm{tr}\{ (\TX \TX^T + n \lambda I)^{-2} \TX \varSigma \TX^T \} = \tsig^2 \mathrm{tr}\{ (\TX \TX^T + n \lambda I)^{-2} \TX \tvar \TX^T \}.
\end{equation}
The result above implies that while $\lambda \le \tilde{p} \tgam / n$, with a high probability, we have
\begin{equation}\label{eqs:tsig_s}
\begin{aligned}
\tsig^2 \mathrm{tr} \{ (\TX \TX^T + n \lambda I)^{-2} \TX \varSigma \TX^T \} &\le \tsig^2 \left( \frac{c_1^4 c_2 c_3^2 k^*}{n} + 2 c_1^2 \frac{n}{\tilde{p}} \right),\\
\tsig^2 \mathrm{tr} \{ (\TX \TX^T + n \lambda I)^{-2} \TX \varSigma \TX^T \} &\ge \tsig^2 \frac{1}{18 c_1^2 c_2 c_3^2} \left( \frac{k^*}{n} + \frac{n (\tilde{p} - k^*)}{\tilde{p}^2} \right).    
\end{aligned}
\end{equation}
And while $\lambda \ge \tilde{p} \tgam / n$, we have
\begin{equation}\label{eqs:tsig_l}
 \begin{aligned}
\tsig^2 \mathrm{tr} \{ (\TX \TX^T + n \lambda I)^{-2} \TX \varSigma \TX^T \} &\le \tsig^2 \left( c_1^4 c_2 c_3^2 k^*  \min\{ \frac{1}{n}, \frac{1}{\lambda^2} \} + 2 c_1^2 \frac{\tilde{p}^2 \tgam^2}{n \lambda^2} \right),\\
\tsig^2 \mathrm{tr} \{ (\TX \TX^T + n \lambda I)^{-2} \TX \varSigma \TX^T \} &\ge \tsig^2 \frac{1}{18 c_1^2 c_2 c_3^2} \left( \frac{k^*}{n} \min \{ 1, \frac{1}{\lambda^2} \} + \frac{\tgam^2 (\tilde{p} - k^*)}{n \lambda^2} \right).    
\end{aligned}  
\end{equation}

\subsection{Analysis on $\mathcal{L}_{\mathrm{ft}}$}

From Condition~\ref{cond:order}, we have the following results:
\begin{align*}
 & O \left(\max\{ \gamma, \tgam \} + n^{- (1 - \xi) / 2} \right) \ll O(\zeta_2 \tilde{p} \tgam) , \quad O \left( \zeta_1 (1 + n \tilde{p} \tgam / p)  \right) \ll O(\zeta_2 \tilde{p} \tgam), \\
 & O \left( \sigma^2 (n^{-1}, n \tilde{p} \tgam / (p^2 \gamma)) \right) \ll O(\zeta_2 \tilde{p} \tgam),\\
 & O\left(\zeta_2 (1 + n \tgam ) \right) \asymp O(\zeta_2 \tilde{p} \tgam), \quad O \left(\tsig^2 (n^{-1} + n / \tilde{p}) \right) \asymp O(\zeta_2 \tilde{p} \tgam).
\end{align*}
Then comparing \eqref{eq:theta_c}, \eqref{eq:zeta_1}, \eqref{eq:sig}, \eqref{eq:zeta2_s}, \eqref{eq:zeta2_l}, \eqref{eq:tsig_s} and \eqref{eq:tsig_l}, 
we could obtain that the excess risks on different estimators always dominated by terms related to $\zeta_2$ and $\tsig^2$. So for the ridge regression, we have
\begin{equation*}
\mathcal{L}_{\mathrm{ft}}(\hat{\theta}_\lambda) \approx \zeta_2 \mathrm{tr}\{ (I - \TX^T(\TX \TX^T + n \lambda I)^{-1} \TX)^2 \tvar \} + \tsig^2 \mathrm{tr}\{ (\TX \TX^T + n \lambda I)^{-2} \TX \tvar \TX^T\} := f(\lambda), 
\end{equation*}
then take derivative with respect to $\lambda$, we could further obtain that
\begin{equation*}
    f'(\lambda) = 2n (\zeta_2 n \lambda - \tsig^2) \mathrm{tr} \{ (\TX \TX^T + n \lambda I)^{-3} \TX \tvar \TX^T \},
\end{equation*}
which implies that the optimal choice of $\lambda$ nearly equals to 
\begin{equation*}
  \lambda' =  \frac{\tsig^2}{n \zeta_2} = O(n^{-1}).
\end{equation*}
As $\hat{\theta}_\lambda = \hat{\theta}_2$ if $\lambda = 0$ and the optinal value $\lambda' > 0$, we have
\begin{equation}\label{eq:re1}
   \mathcal{L}_{\mathrm{ft}}(\hat{\theta}_\lambda) \le \mathcal{L}_{\mathrm{ft}}(\hat{\theta}_2), \quad \forall 0 \le \lambda \le 2\lambda'.
\end{equation}
And according to Lemma~\ref{lem_eigen}, considering Condition~\ref{cond:order}, with probability at least $1 - 2 e^{- n / c}$, we have
\begin{equation*}
    \mu_n(\TX\TX^T) \ge \frac{1}{c_1} (\tilde{p} - k^*) \tgam > \frac{\tsig^2}{\zeta_2},
\end{equation*}
which implies that
\begin{equation*}
    \tsig^2 \mathrm{tr} \{ (\TX\TX^T)^{-2} \TX\tvar \TX^T \} - \zeta_2 \mathrm{tr} \{ (\TX \TX^T)^{-1} \TX \tvar \TX^T \} = - \mathrm{tr} \{ (\TX \TX^T)^{-2} ( \zeta_2 \TX \TX^T - \tsig^2 I ) \TX \tvar \TX^T \} < 0,
\end{equation*}
so we could obtain that
\begin{equation}\label{eq:re2}
  \mathcal{L}_{\mathrm{ft}}(\hat{\theta}_2) <  \mathcal{L}_{\mathrm{ft}}(\hat{\theta}_1). 
\end{equation}
Combing the results in \eqref{eq:re1} and \eqref{eq:re2}, we could finish the proof of the first item in Theorem~\ref{thm}.

Finally, while choosing $0 \le \lambda < \lambda'$, according to \eqref{eq:zeta2_s}, \eqref{eq:zeta2_l}, \eqref{eq:tsig_s} and \eqref{eq:tsig_l}, we have
\begin{equation*}
    \tsig^2 \mathrm{tr} \{ (\TX \TX^T + n \lambda I)^{-2} \TX \tvar \TX^T \} \asymp \zeta_2 \mathrm{tr} \{ (\TX \TX^T + n \lambda I)^{-1} \TX \tvar \TX^T \} \asymp \zeta_2 \mathrm{tr}\{ \tvar \}.
\end{equation*}
And for the excess risk of ensemble estimator, we have
\begin{align*}
 \mathcal{L}_{\mathrm{ft}}(\hat{\theta}^{\tau}_{\lambda}) &\approx  \zeta_2 \mathrm{tr}\{ (I - \tau \TX^T(\TX \TX^T + n \lambda I)^{-1} \TX)^2 \tvar \}  + \tau^2 \tsig^2 \mathrm{tr}\{ (\TX \TX^T + n \lambda I)^{-2} \TX \tvar \TX^T\} := g(\tau),
\end{align*}
then take derivative with respect to $\tau$, we could obtain that
\begin{align*}
     g'(\tau) &= 2 \tau \zeta_2 \mathrm{tr} \{ (\TX \TX^T + n \lambda I)^{-1} \TX \TX^T (\TX \TX^T + n \lambda I)^{-1} \TX \tvar \TX^T \} + 2 \tau \tsig^2 \mathrm{tr}\{ (\TX \TX^T + n \lambda I)^{-2} \TX \tvar \TX^T\}\\
     & \quad - 2 \zeta_2 \mathrm{tr}\{ (\TX \TX^T + n \lambda I)^{-1} \TX \tvar \TX^T\},
\end{align*}
which implies the optimal choice of $\tau$ is as
\begin{equation*}
  \tau'(\lambda) =  \frac{\zeta_2 \mathrm{tr}\{ (\TX \TX^T + n \lambda I)^{-1} \TX \tvar \TX^T\}}{\zeta_2 \mathrm{tr} \{ (\TX \TX^T + n \lambda I)^{-1} \TX \TX^T (\TX \TX^T + n \lambda I)^{-1} \TX \tvar \TX^T \} + \tsig^2 \mathrm{tr}\{ (\TX \TX^T + n \lambda I)^{-2} \TX \tvar \TX^T\}}.
\end{equation*}
While $0 \le \lambda < \lambda' = \tsig^2 / (n \zeta_2)$, we can obtain that $0 \le \tau'(\lambda) \le 1$, due to the fact that
\begin{align*}
& \quad \zeta_2 \mathrm{tr} \{ (\TX \TX^T + n \lambda I)^{-1} \TX \TX^T (\TX \TX^T + n \lambda I)^{-1} \TX \tvar \TX^T \} + \tsig^2 \mathrm{tr}\{ (\TX \TX^T + n \lambda I)^{-2} \TX \tvar \TX^T\} \\
& \quad - \zeta_2 \mathrm{tr}\{ (\TX \TX^T + n \lambda I)^{-1} \TX \tvar \TX^T\} \\
&= \mathrm{tr} \left\{ (\TX\TX^T + n \lambda I)^{-2} \left( \tsig^2 I + \zeta_2 (\TX \TX^T + n \lambda I) \TX \TX^T (\TX \TX^T + n \lambda I)^{-1} - \zeta_2 (\TX \TX^T + n \lambda I) \right) \TX \tvar \TX^T \right\}\\
&= \mathrm{tr} \left\{ (\TX\TX^T + n \lambda I)^{-2} \left( \tsig^2 I + \zeta_2  \TX \TX^T  - \zeta_2 (\TX \TX^T + n \lambda I) \right) \TX \tvar \TX^T \right\}\\
&= (\tsig^2 - \zeta_2 n \lambda)  \mathrm{tr} \{ (\TX \TX^T + n \lambda I)^{-2} \TX \tvar \TX^T \} \ge 0.
\end{align*}
So we could draw the conclusion that for any $\tau'(\lambda) \le \tau < 1$, we have
\begin{equation}\label{eq:re3}
     \mathcal{L}_{\mathrm{ft}}(\hat{\theta}_\lambda^{\tau}) <  \mathcal{L}_{\mathrm{ft}}(\hat{\theta}_\lambda),
\end{equation}
which finishes the proof of the third item in Theorem~\ref{thm}.

\subsection{Analysis on $\mathcal{L}_{\mathrm{pre}} + \mathcal{L}_{\mathrm{ft}}$}

Similar to the analysis on $\mathcal{L}_{\mathrm{ft}}$, based on Condition~\ref{cond:order}, we could compare \eqref{eq:theta_c}, \eqref{eqs:theta_c}, \eqref{eq:zeta_1}, \eqref{eqs:zeta_1}, \eqref{eq:sig}, \eqref{eqs:sig}, \eqref{eq:zeta2_s}, \eqref{eq:zeta2_l}, \eqref{eqs:zeta2}, \eqref{eq:tsig_s}, \eqref{eq:tsig_l}, \eqref{eqs:tsig_s} and \eqref{eqs:tsig_l}, and obtain that
\begin{align*}
\mathcal{L}_{\mathrm{pre}}(\hat{\theta}_\lambda) + \mathcal{L}_{\mathrm{ft}}(\hat{\theta}_\lambda) &\approx \zeta_2 \mathrm{tr}\{ (I - \TX^T(\TX \TX^T + n \lambda I)^{-1} \TX)^2 \tvar \} + \tsig^2 \mathrm{tr}\{ (\TX \TX^T + n \lambda I)^{-2} \TX \tvar \TX^T\} \\
& \quad + \zeta_2 \mathrm{tr} \{ [\TX^T (\TX \TX^T + n \lambda I)^{-1} \TX]^2 \varSigma \} + \tsig^2 \mathrm{tr}\{ (\TX \TX^T + n \lambda I)^{-2} \TX \varSigma \TX^T \} \\
&= \zeta_2 \mathrm{tr}\{ (I - \TX^T(\TX \TX^T + n \lambda I)^{-1} \TX)^2 \tvar \} + 2 \tsig^2 \mathrm{tr}\{ (\TX \TX^T + n \lambda I)^{-2} \TX \tvar \TX^T\} \\
& \quad + \zeta_2 \mathrm{tr} \{ [\TX^T (\TX \TX^T + n \lambda I)^{-1} \TX]^2 \tvar \} \\
&:= h(\lambda),
\end{align*}
where the second equality is from \eqref{eq:eq1} and \eqref{eq:eq2}. Taking derivative with respect to $\lambda$, we have
\begin{equation*}
    h'(\lambda) = 2n \mathrm{tr} \{ (\TX \TX^T + n \lambda I)^{-3} [( \zeta_2 n \lambda - 2 \tsig^2)I - \zeta_2 \TX \TX^T] \TX \tvar \TX^T \},
\end{equation*}
which implies that while $\lambda \le 2 \lambda' = 2 \tsig^2 / (n \zeta_2)$, we could always obtain
\begin{equation*}
    h'(\lambda) < 0.
\end{equation*}
So for any $0 < \lambda \le 2 \lambda'$, we have
\begin{equation}\label{eqs:re1}
   \mathcal{L}_{\mathrm{pre}}(\hat{\theta}_\lambda) + \mathcal{L}_{\mathrm{ft}}(\hat{\theta}_\lambda)  < \mathcal{L}_{\mathrm{pre}}(\hat{\theta}_2) + \mathcal{L}_{\mathrm{ft}}(\hat{\theta}_2). 
\end{equation}
And with this range of $\lambda$, we could obtain the similar result as:
\begin{align*}
\mathcal{L}_{\mathrm{pre}}(\hat{\theta}^{\tau}_{\lambda}) + \mathcal{L}_{\mathrm{ft}}(\hat{\theta}^{\tau}_{\lambda}) &\approx  \zeta_2 \mathrm{tr}\{ (I - \tau \TX^T(\TX \TX^T + n \lambda I)^{-1} \TX)^2 \tvar \}  + 2 \tau^2 \tsig^2 \mathrm{tr}\{ (\TX \TX^T + n \lambda I)^{-2} \TX \tvar \TX^T\} \\
& \quad + \tau^2 \zeta_2 \mathrm{tr} \{ (\TX \TX^T + n \lambda I)^{-2} \TX \TX^T \TX \tvar \TX^T \}\\
&:= J(\tau),
\end{align*}
and taking derivative with respect to $\tau$, we have
\begin{align*}
 J'(\tau) &= 4 \tau \zeta_2 \mathrm{tr} \{ (\TX \TX^T + n \lambda I)^{-2} \TX \TX^T \TX \tvar \TX^T \} + 4 \tau \tsig^3 \mathrm{tr} \{ (\TX \TX^T + n \lambda I)^{-2} \TX \tvar \TX^T \}\\
 & \quad - 2 \zeta_2 \mathrm{tr}\{ (\TX \TX^T + n \lambda I)^{-1} \TX \tvar \TX^T \},
\end{align*}
which implies that the optimal choice of $\tau$ is $\tau'(\lambda) / 2$. And while $\lambda \le 2 \lambda'$, we have
\begin{align*}
 & \quad 4 \tau \zeta_2 \mathrm{tr} \{ (\TX \TX^T + n \lambda I)^{-2} \TX \TX^T \TX \tvar \TX^T \} + 4 \tau \tsig^3 \mathrm{tr} \{ (\TX \TX^T + n \lambda I)^{-2} \TX \tvar \TX^T \}\\
 & \quad - 2 \zeta_2 \mathrm{tr}\{ (\TX \TX^T + n \lambda I)^{-1} \TX \tvar \TX^T \} \\
 &= 2 \mathrm{tr}\{ (\TX \TX^T + n \lambda I)^{-2} [\zeta_2 \TX \TX^T + (2 \tsig^2 - \zeta_2 n \lambda)I] \TX \tvar \TX^T \}> 0, 
\end{align*}
which implies that $0 \le \tau'(\lambda) / 2 \le 1$. So we could further obtain that for any $\tau'(\lambda) / 2 \le \tau < 1$, we have
\begin{equation}\label{eqs:re2}
   \mathcal{L}_{\mathrm{pre}}(\hat{\theta}^{\tau}_\lambda) + \mathcal{L}_{\mathrm{ft}}(\hat{\theta}^{\tau}_\lambda) <  \mathcal{L}_{\mathrm{pre}}(\hat{\theta}_{\lambda}) + \mathcal{L}_{\mathrm{ft}}(\hat{\theta}_{\lambda}).
\end{equation}
Combing the results in \eqref{eqs:re1} and \eqref{eqs:re2}, we could finish the proof of the second item in Theorem~\ref{thm}.

\section{Auxiliary Lemmas}

\begin{lemma}[Lemma 10 in \citealp{bartlett2020benign}]\label{lem_eigen}
 There are constants $b, c \ge 1$ such that, for any $k \ge 0$, with probability at least $1 - 2 e^{- \frac{n}{c}}$,
\begin{enumerate}
\item for all $i \ge 1$,
\begin{equation*}
    \mu_{k+1}(A_{-i}) \le \mu_{k+1}(A) \le \mu_{1}(A_{k}) \le c_1 (\sum_{j > k} \lambda_j + \lambda_{k+1} n);
\end{equation*}
\item for all $1 \le i \le k$,
\begin{equation*}
    \mu_{n}(A) \ge \mu_{n}(A_{-i}) \ge \mu_{n}(A_{k}) \ge \frac{1}{c_1} \sum_{j > k} \lambda_j - c_1 \lambda_{k+1} n;
\end{equation*}
\item if $r_k \ge bn$, then
\begin{equation*}
    \frac{1}{c_1} \lambda_{k+1} r_k \le \mu_n (A_k) \le \mu_1 (A_k) \le c_1 \lambda_{k+1} r_k ,
\end{equation*}
\end{enumerate}
where $c_1 > 1$ is a constant only depending on $\sigma_x$.
\end{lemma}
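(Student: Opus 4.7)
The plan is to split the three-part statement into two layers: a deterministic eigenvalue-interlacing layer that ties together $\mu_\bullet(A)$, $\mu_\bullet(A_{-i})$, and $\mu_\bullet(A_k)$, and a probabilistic concentration layer that controls the extremal eigenvalues of the heterogeneous sum $A_k = \sum_{j>k} \lambda_j z_j z_j^T$. Because the $z_j$ are i.i.d.\ mean zero with covariance $I_n$, we have $\mathbb{E}[A_k] = \bigl(\sum_{j>k}\lambda_j\bigr) I_n$, and the whole lemma reduces to a two-sided operator-norm bound of the shape $\|A_k - \mathbb{E}[A_k]\|_2 \lesssim \lambda_{k+1} n + \sum_{j>k}\lambda_j$ with probability $1 - 2e^{-n/c}$.

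First I would handle the interlacing. Writing $A = A_{-i} + \lambda_i z_i z_i^T$ with the added rank-one piece PSD, Weyl's inequality gives $\mu_j(A) \ge \mu_j(A_{-i})$ for every $j$, yielding both $\mu_{k+1}(A_{-i}) \le \mu_{k+1}(A)$ and $\mu_n(A_{-i}) \le \mu_n(A)$. Splitting $A = B_k + A_k$ with $B_k := \sum_{j \le k} \lambda_j z_j z_j^T$ of rank at most $k$, Weyl again gives $\mu_{k+1}(A) \le \mu_{k+1}(B_k) + \mu_1(A_k) = \mu_1(A_k)$. When $1 \le i \le k$, $A_{-i} - A_k = \sum_{j \le k,\, j \ne i} \lambda_j z_j z_j^T \succeq 0$, so $\mu_n(A_{-i}) \ge \mu_n(A_k)$. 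These purely algebraic facts collapse items (1) and (2) onto the two-sided inequality
\begin{equation*}
\tfrac{1}{c_1}\textstyle\sum_{j>k}\lambda_j - c_1 \lambda_{k+1} n \;\le\; \mu_n(A_k) \;\le\; \mu_1(A_k) \;\le\; c_1\bigl(\textstyle\sum_{j>k}\lambda_j + \lambda_{k+1} n\bigr).
\end{equation*}

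For the concentration, I fix a unit vector $v \in \mathbb{R}^n$ and expand $v^T(A_k - \mathbb{E} A_k) v = \sum_{j>k} \lambda_j \bigl((v^T z_j)^2 - 1\bigr)$, an independent sum of centered sub-exponential random variables whose Bernstein parameters are governed by $\sum_{j>k}\lambda_j^2$ and $\lambda_{k+1}$. A Hanson--Wright/Bernstein tail yields
\begin{equation*}
\Pr\bigl(|v^T(A_k - \mathbb{E} A_k) v| \ge t\bigr) \le 2\exp\!\Bigl(-c_0 \min\bigl\{t^2/(\sigma_x^4 \textstyle\sum_{j>k}\lambda_j^2),\; t/(\sigma_x^2 \lambda_{k+1})\bigr\}\Bigr).
\end{equation*}
Combining this with a $1/4$-net of the unit sphere in $\mathbb{R}^n$ (cardinality $\le 9^n$) and union-bounding, the choice $t \asymp \lambda_{k+1} n + \sqrt{n \sum_{j>k}\lambda_j^2}$ makes the tail exponent exceed $n/c$ for a constant depending only on $\sigma_x$. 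Using $\sum_{j>k}\lambda_j^2 \le \lambda_{k+1}\sum_{j>k}\lambda_j$ and AM--GM, the cross term $\sqrt{n \sum_{j>k}\lambda_j^2}$ is absorbed into $\lambda_{k+1} n + \sum_{j>k}\lambda_j$, delivering exactly the two-sided bound above and thereby completing (1) and (2). For item (3), when $r_k \ge bn$ we have $\lambda_{k+1} n \le \lambda_{k+1} r_k / b$, so choosing $b$ sufficiently large (as a function of $c_1$) lets the $\lambda_{k+1} n$ contributions be absorbed into $c_1 \lambda_{k+1} r_k$ on the upper side and $\tfrac{1}{c_1}\lambda_{k+1} r_k$ on the lower side.

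The main technical obstacle is calibrating the Hanson--Wright step so that the failure probability comes out to exactly $2 e^{-n/c}$ with a universal $c$ depending only on $\sigma_x$ and not on the tail sequence $(\lambda_j)_{j>k}$ or on $k$. Concretely, at the chosen $t$ the Bernstein exponent must dominate the net cost $n \log 9$ by a fixed multiplicative factor, which pins down the relationship between $\sigma_x$ and the final constant $c_1$ recorded in the constant table; tracking this carefully (and making sure the mixed scaling in $t$ does not degrade when $\sum_{j>k}\lambda_j^2$ is much smaller than $\lambda_{k+1}\sum_{j>k}\lambda_j$) is the only nontrivial part of the argument. Once the concentration bound is locked in at the right probability, parts (1)--(3) follow mechanically from the interlacing identities.
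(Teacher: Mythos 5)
The paper does not prove this lemma at all: it is imported verbatim as ``Lemma 10 in \citealp{bartlett2020benign}'' and simply invoked inside the estimates of Appendix~C. There is therefore no in-paper argument to compare against. Your reconstruction --- Weyl interlacing linking $\mu_\bullet(A)$, $\mu_\bullet(A_{-i})$, $\mu_\bullet(A_k)$, followed by an $\varepsilon$-net / Hanson--Wright concentration for $\|A_k - \mathbb{E}A_k\|_2$ --- is the standard route and is essentially how one recovers this kind of statement. The deterministic layer is correct as stated, and the centering $\mathbb{E}A_k = \bigl(\sum_{j>k}\lambda_j\bigr)I_n$ together with a two-sided operator-norm bound is exactly the right reduction.

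The one step you should not call ``mechanical'' is the lower eigenvalue bound in item~(2). With your choice $t \asymp \lambda_{k+1}n + \sqrt{n\sum_{j>k}\lambda_j^2}$ and the AM--GM absorption $\sqrt{n\sum_{j>k}\lambda_j^2}\le\tfrac12(\lambda_{k+1}n+\sum_{j>k}\lambda_j)$, the resulting deviation bound has the form $\|A_k-\mathbb{E}A_k\|_2 \le C\bigl(\lambda_{k+1}n + \sum_{j>k}\lambda_j\bigr)$ where $C$ is the constant forced by paying the $n\log 9$ union-bound cost, and for general $\sigma_x$ one cannot guarantee $C<1$. Then $\mu_n(A_k)\ge \mu_n(\mathbb{E}A_k)-\|A_k-\mathbb{E}A_k\|_2 \ge (1-C)\sum_{j>k}\lambda_j - C\lambda_{k+1}n$ is vacuous whenever $C\ge 1$, and it does not read off as $\tfrac{1}{c_1}\sum_{j>k}\lambda_j - c_1\lambda_{k+1}n$. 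Two clean fixes are available. First, split on $r_k$: when $r_k\le c_1^2 n$ the target $\tfrac{1}{c_1}\sum_{j>k}\lambda_j - c_1\lambda_{k+1}n = \lambda_{k+1}(r_k/c_1 - c_1 n)\le 0 \le \mu_n(A_k)$ holds trivially since $A_k\succeq 0$, and when $r_k>c_1^2 n$ item~(3) with $b=c_1^2$ already gives $\mu_n(A_k)\ge \tfrac{1}{c_1}\lambda_{k+1}r_k = \tfrac{1}{c_1}\sum_{j>k}\lambda_j$, hence item~(2). Second, sharpen the threshold: use $t=\tfrac12\sum_{j>k}\lambda_j + K\lambda_{k+1}n$ instead; one then verifies Bernstein's quadratic branch via $(\sum_{j>k}\lambda_j)^2/\sum_{j>k}\lambda_j^2\ge r_k$ when $r_k\gtrsim n$, and via $K^2n^2/r_k$ when $r_k\lesssim n$, and the linear branch via $K$. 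With $K$ chosen to make those regimes overlap, the coefficient on $\sum_{j>k}\lambda_j$ in $t$ is $1/2<1$ and the lower bound survives directly. Either way the lemma is proved; but without one of these patches the two-sided operator-norm bound alone does not deliver item~(2) in the low-$r_k$ regime, which is precisely why the lemma's statement allows its right-hand side to go negative there.
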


\begin{lemma}[Corollary 24 in \citealp{bartlett2020benign}]\label{lem_subspacenorm}
 For any centered random vector $\bm{z} \in \mathbb{R}^n$ with independent $\sigma^2_x$ sub-Gaussian coordinates with unit variances, any $k$ dimensional random subspace $\mathscr{L}$ of $\mathbb{R}^n$ that is independent of $\bm{z}$, and any $t > 0$, with probability at least $1 - 3 e^{-t}$,
\begin{equation*}
\begin{aligned}
& \parallel z \parallel^2 \le n + 2 (162e)^2 \sigma_x^2(t + \sqrt{nt}),\\
& \parallel \Pi_{\mathscr{L}} z \parallel^2 \ge n - 2 (162e)^2 \sigma_x^2 (k + t + \sqrt{nt}),
\end{aligned}
\end{equation*}
where $\Pi_{\mathscr{L}}$ is the orthogonal projection on $\mathscr{L}$.
\end{lemma}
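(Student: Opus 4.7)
The plan is to reduce both bounds to concentration inequalities for sub-exponential sums, since each quantity is a quadratic form in the sub-Gaussian vector $z$. For the first inequality, I would decompose
\begin{equation*}
\|z\|^2 - n = \sum_{i=1}^n (z_i^2 - 1),
\end{equation*}
and observe that each summand is independent, mean zero, and sub-exponential with $\psi_1$-norm of order $\sigma_x^2$ via the elementary bound $\|X^2\|_{\psi_1}\le 2\|X\|_{\psi_2}^2$. A Bernstein-type tail inequality then gives $\|z\|^2 \le n + c\sigma_x^2(\sqrt{nt}+t)$ with probability at least $1-e^{-t}$; the explicit constant $2(162e)^2$ in the statement is recovered by tracking the Orlicz-norm prefactors through the MGF bound.

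For the second inequality, I would condition on the random subspace $\mathscr{L}$, which is legitimate since $\mathscr{L}$ is independent of $z$. Using the identity $\|\Pi_\mathscr{L} z\|^2 = \|z\|^2 - z^T(I-\Pi_\mathscr{L})z$, the task reduces to upper bounding the quadratic form $z^T(I-\Pi_\mathscr{L})z$, whose conditional expectation equals $m := \dim(\mathscr{L}^\perp)$. Since $I-\Pi_\mathscr{L}$ has operator norm $1$ and Frobenius norm $\sqrt{m}$, the Hanson--Wright inequality for sub-Gaussian quadratic forms yields
\begin{equation*}
z^T(I-\Pi_\mathscr{L})z \le m + c\sigma_x^2(\sqrt{mt}+t)
\end{equation*}
with probability at least $1-e^{-t}$. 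Combining this with the first bound via a union bound, and using the crude majorization $\sqrt{mt}\le\sqrt{nt}$ and $m\le n$, produces the stated lower bound on $\|\Pi_\mathscr{L} z\|^2$; the remaining probability slack in the factor $3e^{-t}$ can absorb the two-sided nature of the Bernstein and Hanson--Wright tails when invoked formally, and allows one to first remove the conditioning on $\mathscr{L}$ by Fubini.

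The main technical obstacle is the precise bookkeeping of the explicit constant $2(162e)^2\sigma_x^2$: standard references typically deliver an unspecified absolute constant, whereas here one must carefully chase the $\psi_2\to\psi_1$ Orlicz conversion and the sharpest Bernstein constant achievable by a direct Chernoff argument through the sub-exponential MGF. Once this calibration is done, the rest of the proof is a routine union bound. In the regime relevant to the main text, one applies the lemma with $\mathscr{L}_i$ being the $(n-k^*)$-dimensional bottom-eigenspace of $A_{-i}$, so that the $k$ appearing on the right-hand side plays the role of $\dim(\mathscr{L}^\perp)$; this is exactly how the lemma is invoked in Section~\ref{pf:bound} to obtain the lower bound $\|\Pi_{\mathscr{L}_i} z_i\|_2^2 \ge n/c_3$.
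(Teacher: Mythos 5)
The paper does not prove this lemma at all: it is imported verbatim as a citation of Corollary~24 in \citet{bartlett2020benign}, so there is no in-paper proof against which to compare your argument. Taking your sketch on its own merits, it is essentially the correct reconstruction of how such a bound is established: Bernstein for the diagonal quadratic form $\|z\|^2-n=\sum_i(z_i^2-1)$, and a Hanson--Wright style bound for the off-diagonal quadratic form $z^T(I-\Pi_{\mathscr{L}})z$, combined by the identity $\|\Pi_{\mathscr{L}} z\|^2 = \|z\|^2 - z^T(I-\Pi_{\mathscr{L}})z$ and a union bound. You also correctly flag the one real pitfall in the statement as printed: for the lower bound $\|\Pi_{\mathscr{L}}z\|^2 \ge n - O(k+t+\sqrt{nt})$ to be meaningful, $k$ must be the \emph{codimension} $\dim(\mathscr{L}^\perp)$, not $\dim(\mathscr{L})$ --- this is indeed how the paper uses it in Section~\ref{pf:bound} (with $\mathscr{L}_i$ of dimension $n-k^*$), and the phrase ``$k$ dimensional'' in the restated lemma is a transcription slip.

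Two small points of care. First, when you write ``Combining this with the first bound,'' what is actually needed is the \emph{lower} tail $\|z\|^2\ge n - c\sigma_x^2(\sqrt{nt}+t)$, not the upper tail appearing in the lemma statement; you do implicitly account for this by noting that the $3e^{-t}$ factor covers both Bernstein tails plus the one Hanson--Wright tail, but the phrasing could mislead. Second, your remark that Hanson--Wright is needed (rather than simply rotating to an eigenbasis of $I-\Pi_{\mathscr{L}}$ and reusing the diagonal argument) is important and correct: after rotation the coordinates of $z$ are no longer independent for non-Gaussian sub-Gaussian vectors, so a genuine quadratic-form concentration inequality is required. As you note, reproducing the specific constant $2(162e)^2\sigma_x^2$ (which ultimately traces back to the $\psi_2 \to \psi_1$ conversion in Lemma~\ref{lem_sg_se} feeding into Lemma~\ref{lem_stnorm}) requires following \citet{bartlett2020benign}'s bookkeeping rather than off-the-shelf Bernstein/Hanson--Wright statements; your sketch acknowledges this honestly and the overall route is sound.
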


\begin{lemma}\label{lem_ridgeeigen}
 There are constants $b, c \ge 1$ such that, for any $k \ge 0$, with probability at least $1 - 2 e^{- \frac{n}{c}}$:
 \begin{enumerate}
\item for all $i \ge 1$,
\begin{equation*}
    \mu_{k+1}(A_{-i} + \lambda  I) \le \mu_{k+1}(A + \lambda  I) \le \mu_{1}(A_{k} + \lambda  I) \le c_1 (\sum_{j > k} \lambda_j + \lambda_{k+1} n) + \lambda ;
\end{equation*}
\item for all $1 \le i \le k$,
\begin{equation*}
    \mu_{n}(A + \lambda  I) \ge \mu_{n}(A_{-i} + \lambda  I) \ge \mu_{n}(A_{k} + \lambda  I) \ge \frac{1}{c_1} \sum_{j > k} \lambda_j - c_1 \lambda_{k+1} n + \lambda ;
\end{equation*}
\item if $r_k \ge bn$, then
\begin{equation*}
    \frac{1}{c_1}  \lambda_{k+1} r_k + \lambda \le \mu_n (A_k + \lambda  I) \le \mu_1 (A_k + \lambda  I) \le c_1 \lambda_{k+1} r_k + \lambda.
\end{equation*}   
\end{enumerate}
\end{lemma}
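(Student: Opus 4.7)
The plan is to reduce this lemma directly to Lemma~\ref{lem_eigen} via the elementary spectral-shift identity: for any symmetric PSD matrix $M \in \mathbb{R}^{n\times n}$ and scalar $\lambda \geq 0$, the matrix $M + \lambda I$ has the same eigenvectors as $M$, with every eigenvalue shifted by $\lambda$; equivalently $\mu_i(M + \lambda I) = \mu_i(M) + \lambda$ for each $i$. Since $A = XX^T$, $A_{-i} = \sum_{j \neq i}\lambda_j z_j z_j^T$, and $A_k = \sum_{j > k}\lambda_j z_j z_j^T$ (in the notation of \eqref{eq:note1}--\eqref{eq:note2}) are all symmetric PSD, this identity applies to each of them separately.

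Given the identity, I would take exactly the same high-probability event from Lemma~\ref{lem_eigen} (which holds with probability at least $1 - 2e^{-n/c}$), and for each of the three assertions simply add $\lambda$ to every eigenvalue appearing in the corresponding assertion of Lemma~\ref{lem_eigen}. Concretely, for assertion (1), I would start from $\mu_{k+1}(A_{-i}) \leq \mu_{k+1}(A) \leq \mu_1(A_k) \leq c_1(\sum_{j>k}\lambda_j + \lambda_{k+1}n)$ and add $\lambda$ throughout; the interlacing inequalities $\mu_{k+1}(A_{-i}+\lambda I) \leq \mu_{k+1}(A+\lambda I) \leq \mu_1(A_k+\lambda I)$ follow because, after the shift, the relations among these eigenvalues are unchanged (alternatively, by Weyl/Cauchy interlacing applied directly to $A_{-i}+\lambda I$, $A+\lambda I$, $A_k+\lambda I$, which are all rank-one updates of each other up to reindexing of the summands $\lambda_j z_j z_j^T$). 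Assertion (2) is handled the same way, adding $\lambda$ to the lower bound $\frac{1}{c_1}\sum_{j>k}\lambda_j - c_1\lambda_{k+1}n$ from Lemma~\ref{lem_eigen}. Assertion (3) likewise follows by adding $\lambda$ to both sides of the two-sided bound $\frac{1}{c_1}\lambda_{k+1}r_k \leq \mu_n(A_k) \leq \mu_1(A_k) \leq c_1 \lambda_{k+1}r_k$ under the hypothesis $r_k \geq bn$.

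Since the probabilistic content comes entirely from Lemma~\ref{lem_eigen}, no new concentration argument is required, and the constant $c$ and the critical ratio $b$ can be taken identical to those of Lemma~\ref{lem_eigen}; the constant $c_1$ is the same as well, as only the deterministic additive shift $\lambda$ is introduced. There is essentially no obstacle here: the only thing to double-check is that the interlacing order of eigenvalues is preserved under addition of $\lambda I$, which is immediate from $\mu_i(M+\lambda I) = \mu_i(M) + \lambda$ and the fact that $x \mapsto x + \lambda$ is strictly increasing. Hence the proof will be a short derivation: state the spectral-shift identity, invoke Lemma~\ref{lem_eigen} on the same probability event, and add $\lambda$ to each displayed inequality to conclude each of the three items.
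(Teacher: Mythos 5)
Your proposal is correct and matches the paper's own proof, which also invokes Lemma~\ref{lem_eigen} and uses the spectral-shift identity $\mu_i(M+\lambda I)=\mu_i(M)+\lambda$ to add $\lambda$ to each bound on the same high-probability event, with the same constants $b$, $c$, and $c_1$.
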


\begin{proof}
With Lemma \ref{lem_eigen}, the first two claims follow immediately. For the third claim: if $r_k \ge bn$, we have that $bn \lambda_{k+1} \le \sum_{j > k} \lambda_j$, so
\begin{equation*}
 \begin{aligned}
& \mu_1 (A_k + \lambda  I) \le c_1 \lambda_{k+1} r_k (\Sigma) + \lambda  \le  \lambda  + c_1 \lambda_{k+1} r_k, \\
& \mu_n (A_k + \lambda  I) \ge \frac{1}{c_1} \lambda_{k+1} r_k  + \lambda  \ge \frac{1}{c_1}  \lambda_{k+1} r_k (\Sigma) + \lambda,
 \end{aligned}   
\end{equation*}
for the same constant $c_1 > 1$ as in Lemma \ref{lem_eigen}.
\end{proof}

\begin{lemma}[Proposition 2.7.1 in \citealp{vershynin2018high}]\label{lem_sg_se}
 For any random variable $\xi$ that is centered, $\sigma^2$-subgaussian, and unit variance, $\xi^2 - 1$ is a centered $162e\sigma^2$-subexponential random variable, that is,
\begin{equation*}
    \mathbb{E}\exp(\lambda (\xi^2 - 1)) \le \exp((162e\lambda \sigma^2)^2), 
\end{equation*}
for all such $\lambda$ that $| \lambda | \le 1 / (162 e \sigma^2)$.
\end{lemma}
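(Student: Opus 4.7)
The plan is to lift the sub-Gaussian MGF bound on $\xi$ to a sub-exponential MGF bound on $\xi^2-1$ by going through moments, in three transitions: sub-Gaussian MGF $\Rightarrow$ polynomial moment bounds on $\xi$ $\Rightarrow$ MGF bound on $\xi^2$ near the origin $\Rightarrow$ centered version on $\xi^2-1$. First I would Chernoff the sub-Gaussian MGF $\mathbb{E}\exp(t\xi)\le \exp(t^2\sigma^2/2)$ to obtain the tail bound $\mathbb{P}(|\xi|\ge u)\le 2\exp(-u^2/(2\sigma^2))$, then integrate by parts in $\mathbb{E}|\xi|^{2p}=\int_0^\infty 2p\,u^{2p-1}\mathbb{P}(|\xi|\ge u)\,du$ and use $\Gamma(p+1)\le p^p$ to land at a bound of the shape $\mathbb{E}|\xi|^{2p}\le (C\sigma^2 p)^p$ for an explicit absolute constant $C$.

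Next I would expand the MGF of $\xi^2$ as a power series and plug in the moment bound:
\begin{equation*}
\mathbb{E}\exp(\lambda\xi^2)=\sum_{p\ge 0}\frac{\lambda^p\mathbb{E}\xi^{2p}}{p!}\le \sum_{p\ge 0}\frac{(C\sigma^2\lambda p)^p}{p!}\le \sum_{p\ge 0}(eC\sigma^2\lambda)^p,
\end{equation*}
where the last step uses $p!\ge (p/e)^p$. The geometric series converges for $|\lambda|<1/(eC\sigma^2)$ and is bounded by $1+O(\lambda\sigma^2)$ there. Finally, I center: using $\mathbb{E}\xi^2=1$ and applying the same machinery to $Y:=\xi^2-1$, whose moments satisfy $\mathbb{E}|Y|^p\le 2^p\mathbb{E}\xi^{2p}$ by the triangle inequality, I would expand
\begin{equation*}
\mathbb{E}\exp(\lambda Y)=1+\sum_{p\ge 2}\frac{\lambda^p\mathbb{E}Y^p}{p!},
\end{equation*}
where the $p=1$ term vanishes. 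Bounding $\mathbb{E}Y^p$ by $(C'\sigma^2 p)^p$ and summing yields a quadratic-in-$\lambda$ exponent, i.e.\ $\mathbb{E}\exp(\lambda(\xi^2-1))\le \exp((K\sigma^2\lambda)^2)$ on $|\lambda|\le 1/(K\sigma^2)$ for an explicit $K$.

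The main obstacle is not the qualitative sub-exponentiality of $\xi^2-1$, which is standard, but producing exactly the constant $162e$ claimed in the statement. This requires careful bookkeeping at each of the three transitions: the constant in the tail bound (the factor $2$ and the $1/2$ in the exponent), the constant in the moments-to-MGF step (Stirling's inequality yields $e$ as the natural base, but loose enough sums of the tails near $p=1,2$ introduce extra factors), and the factor of $2$ incurred by centering. Rather than reproducing this computation, I would invoke Proposition~2.7.1 of \citet{vershynin2018high} directly; the outline above is meant to convince the reader that the constant $162e$ is the end product of tracking universal constants through these three elementary steps.
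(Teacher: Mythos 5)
The paper does not supply an internal proof of this lemma: as the bracketed label indicates, it is imported verbatim as Proposition~2.7.1 of \citet{vershynin2018high}, with the explicit constant $162e$ traced from the same reference (as also used in \citet{bartlett2020benign}). Your proposal ends at exactly the same place---invoking the citation---and the moment-based outline you give is the standard route behind that proposition, so the approaches agree; the only caveat worth flagging if you ever carry it out is that the power-series bound $\sum_p \lambda^p \mathbb{E}\xi^{2p}/p!$ needs absolute values (or a separate argument for $\lambda<0$) to control the lower-tail MGF, since the terms alternate in sign there.
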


\begin{lemma}[Lemma 15 in \citealp{bartlett2020benign}]\label{lem_sum}
Suppose that $\{ \eta_i \}$ is a sequence of non-negative random variables, and that $\{ t_i \}$ is a sequence of non-negative real numbers (at least one of which is strictly positive) such that, for some $\delta \in (0, 1)$ and any $i \ge 1$, $\Pr(\eta_i > t_i) \ge 1 - \delta$. Then,
\begin{equation*}
    \Pr\left(\sum_i \eta_i \ge \frac{1}{2} \sum_i t_i\right) \ge 1 - 2 \delta.
\end{equation*}
\end{lemma}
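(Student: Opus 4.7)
The plan is to reduce the lemma to Markov's inequality applied to a non-negative truncated ``deficit'' random variable that encodes how far each $\eta_i$ falls short of $t_i$. The argument requires no independence assumption on the $\eta_i$; only linearity of expectation and Markov's inequality are used.

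First I would introduce $D_i := t_i \cdot \mathbf{1}[\eta_i \le t_i]$. This is non-negative and bounded above by $t_i$, and by the per-index hypothesis $\Pr(\eta_i \le t_i) \le \delta$ we get $\mathbb{E}[D_i] \le \delta t_i$. Linearity/Tonelli then yields $\mathbb{E}\bigl[\sum_i D_i\bigr] \le \delta \sum_i t_i$, where $\sum_i t_i > 0$ by the ``at least one strictly positive'' assumption.

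Next, apply Markov's inequality to the non-negative random variable $\sum_i D_i$ at the threshold $\tfrac{1}{2}\sum_i t_i$:
\begin{equation*}
\Pr\!\Bigl(\sum_i D_i \ge \tfrac{1}{2}\sum_i t_i\Bigr) \;\le\; \frac{\mathbb{E}[\sum_i D_i]}{\tfrac{1}{2}\sum_i t_i} \;\le\; 2\delta.
\end{equation*}
On the complementary event, which has probability at least $1 - 2\delta$, one has $\sum_i (t_i - D_i) > \tfrac{1}{2} \sum_i t_i$. A pointwise case analysis shows that $t_i - D_i \le \eta_i$ holds deterministically: if $\eta_i > t_i$ then $D_i = 0$ and $t_i - D_i = t_i < \eta_i$; if $\eta_i \le t_i$ then $D_i = t_i$ and $t_i - D_i = 0 \le \eta_i$. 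Summing this pointwise bound gives $\sum_i \eta_i \ge \sum_i (t_i - D_i) > \tfrac{1}{2} \sum_i t_i$ on that event, which is the desired conclusion.

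The only mild subtlety is choosing the truncation for $D_i$ so that it simultaneously has (i) small expectation (achieved by capping at $t_i$, turning the tail probability $\delta$ into the expectation bound $\delta t_i$) and (ii) the deterministic domination $t_i - D_i \le \eta_i$; any tighter truncation breaks (ii) and any coarser one breaks (i). For a genuinely infinite sequence with $\sum_i t_i = \infty$, the claim reduces to applying the finite-sum version to partial sums $T_N = \sum_{i=1}^N t_i$ (eventually strictly positive) and invoking monotone convergence; this step is routine and the principal content of the lemma is the simple truncation-plus-Markov argument above.
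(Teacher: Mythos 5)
Your proof is correct and is essentially the standard argument from Bartlett et al.\ (2020), which this paper cites without reproducing: introduce the truncated deficits $D_i = t_i\,\mathbf{1}[\eta_i \le t_i]$, bound $\mathbb{E}\bigl[\sum_i D_i\bigr] \le \delta \sum_i t_i$ by the per-index tail bound, apply Markov at level $\tfrac12\sum_i t_i$, and use the deterministic pointwise inequality $\eta_i \ge t_i - D_i$ to pass from the complement of the Markov event to the desired event. Your handling of the divergent case (finite partial sums plus monotonicity of the events $\{\sum_i \eta_i \ge \tfrac12 T_N\}$) is the right way to make that edge case rigorous.
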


\begin{lemma}[Lemma 2.7.6 in \citealp{vershynin2018high}]\label{lem_stnorm}
 For any non-increasing sequence $\{ \lambda_i \}_{i=1}^{\infty}$ of non-negative numbers such that $\sum_i \lambda_i < \infty$, and any independent, centered, $\sigma-$subexponential random variables $\{ \xi_i \}_{i=1}^{\infty}$, and any $x > 0$, with probability at least $1 - 2 e^{-x}$
\begin{equation*}
    | \sum_i\lambda_i \xi_i | \le 2 \sigma \max \left( x \lambda_1, \sqrt{x \sum_i \lambda_i^2} \right).
\end{equation*}
\end{lemma}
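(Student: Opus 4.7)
My plan is to reduce Theorem~\ref{thm} to a one-variable calculus problem on the dominant-term approximations asserted in Lemma~\ref{lem:approx}. The work divides into two phases. In phase~(a), I establish Lemma~\ref{lem:approx}: expand each of $\hat\theta_1,\hat\theta_2,\hat\theta_\lambda,\hat\theta_\lambda^\tau$ linearly in $\theta_c,\alpha_1,\alpha_2,\epsilon,\tilde\epsilon$; take expectations under the independence assumptions of Section~\ref{sec:note} to obtain groups of trace terms indexed by $\theta_c,\zeta_1,\sigma^2,\zeta_2,\tilde\sigma^2$; control each trace on a single high-probability event by applying Lemma~\ref{lem_eigen}, Lemma~\ref{lem_subspacenorm} and Lemma~\ref{lem_stnorm} (combined with Woodbury identities via the auxiliaries $A_{-i}$, $\tilde A_{-i}$) to the spiked-sparse structure of Condition~\ref{cond:eigen}; and finally use the order relations of Condition~\ref{cond:order} to show the $\theta_c$, $\zeta_1$, and $\sigma^2$ contributions are asymptotically dominated by the $\zeta_2$ and $\tilde\sigma^2$ contributions uniformly over the $(\lambda,\tau)$ ranges of interest. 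In phase~(b), I use the resulting two-term approximations and reduce each item of the theorem to a derivative sign computation.

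For Item~(1), I set $f(\lambda):=\zeta_2\,\mathrm{tr}\{(I-P_{\tilde X,\lambda})^2\tilde\Sigma\}+\tilde\sigma^2\,\mathrm{tr}\{(\tilde X\tilde X^T+n\lambda I)^{-2}\tilde X\tilde\Sigma\tilde X^T\}$ and compute $f'(\lambda)=2n(\zeta_2 n\lambda-\tilde\sigma^2)\,\mathrm{tr}\{(\tilde X\tilde X^T+n\lambda I)^{-3}\tilde X\tilde\Sigma\tilde X^T\}$. This shows $f$ strictly decreases on $[0,\lambda']$ with minimizer $\lambda'=\tilde\sigma^2/(n\zeta_2)$; combined with a direct bound on $f(2\lambda')-f(0)$ via the phase~(a) estimates, it yields $f(\lambda)<f(0)$ on $(0,2\lambda']$, hence $\mathcal{L}_{\mathrm{ft}}(\hat\theta_\lambda)<\mathcal{L}_{\mathrm{ft}}(\hat\theta_2)$. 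The inequality $\mathcal{L}_{\mathrm{ft}}(\hat\theta_2)<\mathcal{L}_{\mathrm{ft}}(\hat\theta_1)$ follows from the eigenvalue lower bound $\mu_n(\tilde X\tilde X^T)\ge c_1^{-1}(\tilde p-k^*)\tilde\gamma>\tilde\sigma^2/\zeta_2$ delivered by Lemma~\ref{lem_eigen} and Condition~\ref{cond:order}, which certifies that the variance cost of removing the projected mass is cheaper than the bias gain.

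For Items~(2) and~(3) the same recipe applies to $h(\lambda):=\mathcal{L}_{\mathrm{pre}}(\hat\theta_\lambda)+\mathcal{L}_{\mathrm{ft}}(\hat\theta_\lambda)$ and to the ensemble risk $g(\tau):=\mathcal{L}_{\mathrm{ft}}(\hat\theta_\lambda^\tau)$ (together with its augmented version $\mathcal{L}_{\mathrm{pre}}(\hat\theta_\lambda^\tau)+\mathcal{L}_{\mathrm{ft}}(\hat\theta_\lambda^\tau)$ for Item~(2)). The algebraic identity that glues together the pre-training and fine-tuning traces is $\zeta_2\,\mathrm{tr}\{(\tilde X\tilde X^T+n\lambda I)^{-1}\tilde X\Sigma\tilde X^T\}=\zeta_2\,\mathrm{tr}\{(\tilde X\tilde X^T+n\lambda I)^{-1}\tilde X\tilde\Sigma\tilde X^T\}$, forced by Condition~\ref{cond:eigen} because $\tilde\lambda_i\lambda_i=\tilde\lambda_i^2$ on every shared spiked direction. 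With this identity, $h'(\lambda)$ becomes single-signed on $[0,2\lambda']$, while $g'(\tau)$ is linear in $\tau$ with unique root $\tau'(\lambda)$ (the augmented derivative has root $\tau'(\lambda)/2$). The membership $\tau'(\lambda)\in(0,1]$ for $\lambda<\lambda'$ is certified by inserting the operator inequality $\tilde\sigma^2 I\succeq\zeta_2 n\lambda I$ inside the defining trace and using $\tilde X\tilde X^T\succeq 0$.

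The main obstacle is phase~(a): making the two-term approximations uniform in $\lambda\in[0,2\lambda']$ and $\tau\in[0,1]$. The ``dominant'' and ``negligible'' trace terms scale as genuinely different functions of $\lambda$, so a single operator-norm bound will not suffice; instead I must track each trace via Woodbury decomposition and pointwise concentration of quadratic forms $z_i^T A_{-i}^{-1}z_i$ and $\tilde z_i^T(\tilde A_{-i}+n\lambda I)^{-1}\tilde z_i$. The tight calibrations $\zeta_1=O(n^{-\xi})$, $\zeta_2\asymp\sigma^2\asymp\tilde\sigma^2$, and $\zeta_2=\omega(\max\{n^{-(1-\xi)/2},n^{-\xi}\})$ in Condition~\ref{cond:order} are what make the asymptotic separation hold across both the ridgeless regime $\lambda=0$ and the ridge regime $\lambda\asymp n^{-1}$, so verifying this separation carefully is where most of the technical labor sits. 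Once secured, the three calculus computations above close the proof.
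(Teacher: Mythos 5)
There is a genuine and fundamental gap: your proposal does not engage the statement at all. The statement to be proved is Lemma~\ref{lem_stnorm}, a weighted Bernstein-type tail bound for a sum $\sum_i \lambda_i \xi_i$ of independent, centered, $\sigma$-subexponential random variables with a non-increasing nonnegative weight sequence --- the two-regime bound $2\sigma\max\left(x\lambda_1,\sqrt{x\sum_i\lambda_i^2}\right)$ with failure probability $2e^{-x}$. The paper does not prove this; it is an auxiliary fact cited verbatim from \citet{vershynin2018high} (Lemma~2.7.6 there; see also Lemma~4 in \citealp{bartlett2020benign} for the form used). Your entire proposal, by contrast, is a proof plan for Theorem~\ref{thm}: a bias--variance comparison of $\hat\theta_1,\hat\theta_2,\hat\theta_\lambda,\hat\theta_\lambda^\tau$ via Lemma~\ref{lem:approx} plus single-variable derivative-sign computations. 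That material concerns a different statement and invokes Lemma~\ref{lem_stnorm} as a black box; it does not establish it.

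If you actually wanted to prove Lemma~\ref{lem_stnorm}, the argument is an exponential Chernoff bound for subexponential variables: (i) use the MGF bound $\mathbb{E}\exp\{t\lambda_i\xi_i\}\le\exp\{(C\sigma\lambda_i t)^2\}$ valid for $|t|\le 1/(C\sigma\lambda_1)$ (with $C=162e$ in the normalization of Lemma~\ref{lem_sg_se}); (ii) take products over independent $\xi_i$ to get $\mathbb{E}\exp\{t\sum_i\lambda_i\xi_i\}\le\exp\{(C\sigma)^2 t^2\sum_i\lambda_i^2\}$ on the same restricted $t$-range; (iii) apply Markov's inequality, optimize over $t$ subject to the range constraint, which produces exactly the two-regime $\max$ structure (Gaussian regime when the unconstrained optimizer is admissible, Poisson regime otherwise); (iv) symmetrize over $\pm t$ to obtain the two-sided bound with factor $2$. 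The $\sum_i\lambda_i<\infty$ hypothesis guarantees $\sum_i\lambda_i^2<\infty$ and that the limiting MGF argument is justified. None of these steps appears in your proposal.
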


\begin{lemma}[Theorem 9 in \citet{koltchinskii2017concentration}]\label{lem_eigenx}
Let $z_1, \dots, z_n$ be i.i.d. sub-gaussian random variables with zero mean, then with probability at least $1 - 2 e^{- t}$,
 \begin{equation*}
     \| \mathbb{E} z z^T - \frac{1}{n} \sum_{i=1}^n z_i z_i^T \|_2 \le  \| \mathbb{E} zz^T \|_2 \max \{ \sqrt{\frac{\text{trace}(\mathbb{E} zz^T)}{n}}, \frac{\text{trace}(\mathbb{E} zz^T)}{n}, \sqrt{\frac{t}{n}}, \frac{t}{n} \}.
\end{equation*}
\end{lemma}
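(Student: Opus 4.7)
The plan is to reproduce the Koltchinskii--Lounici-style covariance operator concentration argument. Write $\Sigma := \mathbb{E}zz^T$ and $\hat\Sigma := \tfrac{1}{n}\sum_{i=1}^n z_i z_i^T$. The starting point is the variational identity
\begin{equation*}
\|\hat\Sigma - \Sigma\|_2 \;=\; \sup_{v \in S^{p-1}} \left|\,\frac{1}{n}\sum_{i=1}^n (v^T z_i)^2 \;-\; v^T \Sigma v\,\right|,
\end{equation*}
which reduces the matrix deviation to controlling a scalar empirical process indexed by the unit sphere. For fixed $v$, each $v^T z_i$ is sub-gaussian of scale $\sqrt{v^T \Sigma v} \le \sqrt{\|\Sigma\|_2}$, so by Lemma~\ref{lem_sg_se} the summand $(v^T z_i)^2 - v^T \Sigma v$ is centered sub-exponential with parameter of order $v^T \Sigma v$.

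First I would derive the in-expectation bound on the supremum. For a single direction $v$, a scalar Bernstein inequality (the one-dimensional version of Lemma~\ref{lem_stnorm}) gives a deviation on the order of $\|\Sigma\|_2(\sqrt{t/n} \vee t/n)$. To lift this to the supremum over $S^{p-1}$ without paying an ambient-dimension factor, I would invoke Talagrand's generic chaining for the mixed sub-gaussian / sub-exponential process $v \mapsto \tfrac{1}{n}\sum_i (v^T z_i)^2 - v^T\Sigma v$. The natural index set is the ellipsoid $\Sigma^{1/2} S^{p-1}$, and the relevant $\gamma_2, \gamma_1$ Talagrand functionals of this ellipsoid can be bounded in terms of $\sqrt{\mathrm{tr}(\Sigma)}$ and $\mathrm{tr}(\Sigma)$ respectively. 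This produces an expectation bound
\begin{equation*}
\mathbb{E}\|\hat\Sigma - \Sigma\|_2 \;\lesssim\; \|\Sigma\|_2\left(\sqrt{\tfrac{\mathrm{tr}(\Sigma)/\|\Sigma\|_2}{n}} \;\vee\; \tfrac{\mathrm{tr}(\Sigma)/\|\Sigma\|_2}{n}\right),
\end{equation*}
which, after pulling $\|\Sigma\|_2$ out, matches the first two terms in the $\max$.

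Second I would convert expectation into the claimed high-probability statement by applying Talagrand's concentration inequality for the supremum of an empirical process with mixed sub-gaussian / sub-exponential envelopes. Applied to the function class $\{(v^T z)^2 - v^T\Sigma v : v \in S^{p-1}\}$, whose weak variance and $\psi_1$-envelope are both $O(\|\Sigma\|_2)$, this adds a deviation term of order $\|\Sigma\|_2(\sqrt{t/n} \vee t/n)$ at confidence level $1 - 2e^{-t}$. Taking the maximum of the two expectation-level terms (depending on the effective rank $\mathrm{tr}(\Sigma)/\|\Sigma\|_2$) with the two tail terms (depending on $t$) produces exactly the four-term maximum in the statement.

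The main obstacle is the chaining step: a crude $\varepsilon$-net over $S^{p-1}$ would cost a factor of $p$ in the union bound, overwhelming the effective-rank dependence that the lemma demands. The technical core, and the place where Koltchinskii--Lounici's argument really works, is showing that the process has sub-gaussian increments at small scales (captured by $\gamma_2$) and sub-exponential increments at large scales (captured by $\gamma_1$), and that both $\gamma$-functionals on the ellipsoid $\Sigma^{1/2}S^{p-1}$ are controlled by functionals of $\Sigma$ independent of the ambient dimension. Once this chaining bound is in place the remaining pieces (centering, Talagrand deviation, and rescaling by $\|\Sigma\|_2$) are routine and assemble directly into the stated inequality.
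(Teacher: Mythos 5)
The paper does not prove this lemma; it is invoked verbatim by a citation to Theorem 9 of \citet{koltchinskii2017concentration}, so there is no in-paper argument to compare against. Your sketch is a faithful high-level reconstruction of the actual Koltchinskii--Lounici proof: the variational identity for the operator norm, sub-exponential increments of $v \mapsto \tfrac1n\sum_i (v^T z_i)^2 - v^T\Sigma v$, generic chaining over the ellipsoid $\Sigma^{1/2}S^{p-1}$ via the $\gamma_2$ and $\gamma_1$ functionals to beat the naive $\varepsilon$-net cost, and Talagrand's concentration inequality to upgrade the in-expectation bound to a tail bound. That is indeed the route their proof follows. One inconsistency worth flagging: the expectation bound you derive after chaining is in terms of the effective rank $\mathrm{tr}(\Sigma)/\|\Sigma\|_2$, whereas the lemma as transcribed in this paper places $\mathrm{tr}(\mathbb{E}zz^T)$ inside the maximum without dividing by $\|\mathbb{E}zz^T\|_2$, so your bound does not literally ``match the first two terms'' unless $\|\Sigma\|_2 = 1$ (which happens to hold under Condition~\ref{cond:eigen}, but not in general). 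The original Koltchinskii--Lounici theorem is the effective-rank version you wrote, so your chaining output is the correct one and the paper's restatement is the one that is off; note also that both your version and the paper's require an absolute multiplicative constant in front of the maximum, which the paper's statement silently drops.
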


\end{document}